\renewcommand{\S}{\mathcal{S}}
\newcommand{\A}{\mathcal{A}}
\renewcommand{\H}{\mathcal{H}}
\newcommand{\mS}{\overline{\mathcal{S}}}
\newcommand{\mA}{\overline{\mathcal{A}}}
\newcommand{\mP}{\overline{P}}
\newcommand{\mr}{\overline{r}}
\newcommand{\mR}{\overline{R}}
\newcommand{\mM}{\overline{M}}
\newcommand{\ms}{\overline{s}}
\newcommand{\ma}{\overline{a}}
\newcommand{\mV}{\overline{V}}
\newcommand{\D}{D}
\renewcommand{\c}{\overline{c}}
\renewcommand{\E}{\mathbb{E}}
\renewcommand{\P}{\mathbb{P}}
\newcommand{\cmax}{c^{\mathrm{max}}}
\newcommand{\cmin}{c^{\mathrm{min}}}
\newcommand{\hS}{\hat{\mathcal{S}}}
\newcommand{\hA}{\hat{\mathcal{A}}}
\newcommand{\hP}{\hat{P}}
\newcommand{\hR}{\hat{R}}
\newcommand{\hM}{\hat{M}}
\newcommand{\hs}{\hat{s}}
\newcommand{\hV}{\hat{V}}
\begin{document}

%

%

\twocolumn[

\aistatstitle{Anytime-Constrained Reinforcement Learning}

\aistatsauthor{ Jeremy McMahan \And Xiaojin Zhu }

\aistatsaddress{ University of Wisconsin-Madison } ]

\begin{abstract}
  We introduce and study constrained Markov Decision Processes (cMDPs) with anytime constraints. An anytime constraint requires the agent to never violate its budget at any point in time, almost surely. Although Markovian policies are no longer sufficient, we show that there exist optimal deterministic policies augmented with cumulative costs. In fact, we present a fixed-parameter tractable reduction from anytime-constrained cMDPs to unconstrained MDPs. Our reduction yields planning and learning algorithms that are time and sample-efficient for tabular cMDPs so long as the precision of the costs is logarithmic in the size of the cMDP. However, we also show that computing non-trivial approximately optimal policies is NP-hard in general. To circumvent this bottleneck, we design provable approximation algorithms that efficiently compute or learn an arbitrarily accurate approximately feasible policy with optimal value so long as the maximum supported cost is bounded by a polynomial in the cMDP or the absolute budget. Given our hardness results, our approximation guarantees are the best possible under worst-case analysis.
\end{abstract}

\section{INTRODUCTION}\label{sec: intro}
Suppose $M$ is a constrained Markov Decision Process (cMDP). An \emph{anytime constraint} requires that cost accumulated by the agent's policy $\pi$ is within the budget at any time, almost surely: $\P^{\pi}_M\brac{\forall k \in [H], \; \sum_{t = 1}^k c_t \leq B} = 1$. If $\Pi_M$ denotes the set of policies that respect the anytime constraints, then a solution to the anytime-constrained cMDP is a policy $\pi^* \in \argmax_{\pi \in \Pi_M} V^{\pi}_M$. For example, consider planning a minimum-time route for an autonomous vehicle to travel from one city to another. Besides time, there are other important considerations including~\citep{DrivingVirtualReal} (1) the route does not exhaust the vehicle's fuel, and (2) the route is safe. We can model (1) by defining the fuel consumed traveling a road to be its cost and the tank capacity to be the budget.
Refueling stations are captured using negative costs. We can model many safety considerations (2) similarly.

Since nearly every modern system is constrained in some way, one key step to modeling more realistic domains with MDPs is allowing constraints. To address this, a rich literature of constrained reinforcement learning (CRL) has been developed, almost exclusively focusing on expectation constraints~\citep{cMDP-book, cMDP-CPO, cMDP-Actor-Critic} or high-probability (chance) constraints~\citep{CCMDP-OG, CCMDP-DPwSpaceExploration, CCMDP-Percentile-Risk}.
However, in many applications, especially where safety is concerned or resources are consumed, anytime constraints are more natural. 
An agent would not be reassured by the overall expected or probable safety of a policy when it is faced with a reality or intermediate time where it is harmed. 
For instance, a car cannot run on expected or future gas, so must satisfy the $B$ budget at any time along the route.
Similarly, in goal-directed RL~\citep{RSMDP-Revisit, RSMDP-Budget, OG-SSP}, the goal must be achieved; maximizing reward is only a secondary concern.
These issues are especially crucial in medical applications~\citep{MedSurvery, MedRisk, MedScheduling}, disaster relief scenarios~\citep{DisasterDigitalTwin, DisasterUAV, DisasterFlooding}, and resource management~\citep{ResourceGeneral, ResourceNetworkSlicing, ResourceUAV, ResourceMatching}. 

Anytime constraints are natural, but introduce a plethora of new challenges. Traditional Markovian and history-dependent policies are rarely feasible and can be arbitrarily suboptimal; the cost history must also be considered.
Naively using backward induction to compute an optimal cost-history-dependent policy is possible in principle for tabular cost distributions, but the time needed to compute the policy and the memory needed to store the policy would be super-exponential. Since the optimal solution value is a discontinuous function of the costs, using standard CRL approaches like linear programming is also impossible. In fact, not only is computing an optimal policy NP-hard but computing any policy whose value is approximately optimal is also NP-hard when at least two constraints are present.

Known works fail to solve anytime-constrained cMDPs. Expectation-constrained approaches~\citep{cMDP-book, cMDP-ZeroDualityGap, cMDP-SafeExploration, cMDP-PAC-H, cMDP-Regret-H} and chance-constrained approaches~\citep{CCMDP-DPwSpaceExploration, CCMDP-GoalComplexity, CCMDP-Percentile-Risk, CCMDP-RL-Batch} yield policies that arbitrarily violate an anytime constraint. This observation extends to nearly every known setting: Knapsack constraints~\citep{Knap-Brantley, Knap-PreBrantley, Knap-RLwK}, risk constraints~\citep{RCMDP-OG, CCMDP-Percentile-Risk}, risk sensitivity~\citep{RSMDP-OG, RSMDP-Revisit, RSMDP-GoalProbPlanning, RSMDP-Budget}, quantile constraints~\citep{QuantileConstrained, PreQuantile}, and instantaneous constraints~\citep{InstantaneousSafeRL,  PreInstantaneous1, PreInstantaneous2}. If we were instead to use these models with a smaller budget to ensure feasibility, the resultant policy could be arbitrarily suboptimal if any policy would be produced at all. Dangerous-state~\citep{SafeStatePAC, Safe-RL-Imagining} and almost-sure~\citep{AlmostSure} constraints can be seen as a special case of our model with binary costs. However, their techniques do not generalize to our more complex setting.
Moreover, absent expectation constraints, none of these approaches are known to admit polynomial time planning or learning algorithms.

\paragraph{Our Contributions.} 
We present the first formal study of anytime-constrained cMDPs. Although traditional policies do not suffice, we show that deterministic augmented policies are always optimal. In fact, an optimal policy can be computed by solving an unconstrained, augmented MDP using any standard RL planning or learning algorithm. Using the intuition of safe exploration and an atypical forward induction, we derive an augmented state space rich enough to capture optimal policies without being prohibitively large. To understand the resultant augmented policies, we design new machinery requiring a combination of backward and forward induction to argue about optimality and feasibility. Overall, we show our reduction to standard RL is \emph{fixed-parameter tractable} (FPT)~\citep{FPT} in the cost precision when the cMDP and cost distribution are tabular. In particular, as long as the cost precision is logarithmic in the size of the cMDP, our planning (learning) algorithms are polynomial time (sample complexity), and the produced optimal policy can be stored with polynomial space.

Since we show computing any non-trivial approximately-optimal policy is NP-hard, we turn to approximate feasibility for the general case. 
For any $\epsilon > 0$, we consider additive and relative approximate policies that accumulate cost at most $B+\epsilon$ and $B(1+\epsilon)$ anytime, respectively. 
\footnote{Critically, we can also ensure strict budget $B$ feasibility but with a weaker value guarantee. See section~\ref{sec:approx}.}
Rather than consider every cumulative cost induced by safe exploration, our approximation scheme inductively accumulates and projects the costs onto a smaller space. Following the principle of optimism, the approximate cost is constructed to be an underestimate to guarantee optimal value. Our approach yields planning (learning) algorithms that produce optimal value, and approximately feasible policies in polynomial time (sample complexity) for any, possibly non-tabular, cost distribution whose maximum supported cost is bounded by a polynomial in the cMDP or by the absolute budget. Given our hardness results, this is the best possible approximation guarantee one could hope for under worst-case analysis. 
We also extend our methods to handle different budgets per time, general almost-sure constraints, and infinite discounting.

\subsection{Related Work.} 
\paragraph{Knapsack Constraints.} The knapsack-constrained frameworks \citep{Knap-Brantley, Knap-PreBrantley, Knap-RLwK} were developed to capture constraints on the learning process similar to bandits with knapsacks~\citep{Knap-BwK}. \citet{Knap-Brantley} and \citet{Knap-PreBrantley} both constrain the total cost violation that can be produced during training. On the other hand, \citet{Knap-RLwK} introduces the RLwK framework that constrains the total cost used per episode. In RLwK, each episode terminates when the agent violates the budget for that episode. In all of these models, the environment is given the ability to terminate the process early; the final policy produced after learning need not satisfy any kind of constraint. In fact, the agent still keeps the reward it accumulated before violation, the agent is incentivized to choose unsafe actions in order to maximize its reward. Thus, such methods produce infeasible policies for our anytime constraints regardless of the budget they are given. 

\paragraph{Almost Sure Constraints.} Performing RL while avoiding dangerous states~\citep{SafeStatePAC, Safe-RL-Imagining, SafeSurvey} can be seen as a special case of both anytime and expectation constraints with binary costs and budget $0$. However, these works require non-trivial assumptions, and being a special case of expectation constraints implies their techniques cannot solve our general setting. Similarly, \citet{AlmostSure} introduced almost sure constraints with binary costs, which can be seen as a special case of anytime constraints. However, they focus on computing minimal budgets, which need not lead to efficient solutions in general since the problem is NP-hard even with a budget of $1$. Lastly, the infinite time-average case with almost sure constraints has been thoroughly studied~\citep{TimeAverage}. Since we focus on finite-horizon and discounted settings, our policies would always have a time-average cost of $0$ and so those methods cannot produce policies that are feasible for anytime constraints. 

\section{ANYTIME CONSTRAINTS}\label{sec: constraints}

\paragraph{Constrained Markov Decision Processes.} A (tabular, finite-horizon) Constrained Markov Decision Process is a tuple $M = (\S, \A, P, R, H, s_0, C, B)$, where (i) $\S$ is a finite set of states, (ii) $\A$ is a finite set of actions, (iii) $P_h(s,a) \in \Delta(S)$ is the transition distribution, (iv) $R_h(s,a) \in \Delta(\Real)$ is the reward distribution, (v) $H \in \Nat$ is the finite time horizon, (vi) $s_0 \in \S$ is the initial state, (vii) $C_h(s,a) \in \Delta(\Real^d)$ is the cost distribution, and (viii) $B \in \Real^d$ is the budget vector. Here, $d \in \Nat$ denotes the number of constraints. We overload notation by letting $C_h(s,a)$ denote both the cost distribution and its support. We also let $r_h(s,a) = \E[R_h(s,a)]$ denote the expected reward. Lastly, we let $S := |\S|$, $A := |\A|$, $[H] := \set{1, \ldots, H}$, and $|M|$ be the description size of the cMDP.

\paragraph{Interaction Protocol.} A complete history with costs takes the form $\tau = (s_1, a_1, c_1, \allowbreak \ldots, s_H, a_H, \allowbreak c_H, \allowbreak s_{H+1})$, where $s_h \in \S$ denotes $M$'s state at time $h$, $a_h \in \A$ denotes the agent's chosen action at time $h$, and $c_h \in C_h(s_h,a_h)$ denotes the cost incurred at time $h$. We let $\c_h := \sum_{t = 1}^{h-1} c_t$ denote the cumulative cost up to (but not including) time $h$. Also, we denote by $\tau_h = (s_1, a_1, c_1, \ldots, s_h)$ the partial history up to time $h$ and denote by $\H_h$ the set of partial histories up to time $h$. The agent interacts with $M$ using a policy $\pi = (\pi_h)_{h = 1}^H$, where $\pi_h : \H_h \to \Delta(\A)$ specifies how the agent chooses actions at time $h$ given a partial history. 

The agent starts at state $s_0$ with partial history $\tau_1 = (s_0)$. For any $h \in [H]$, the agent chooses an action $a_h \sim \pi_h(\tau_h)$. Afterward, the agent receives reward $r_h \sim R_h(s_h,a_h)$ and cost $c_h \sim C_h(s_h,a_h)$. Then, $M$ transitions to state $s_{h+1} \sim P_h(s_h,a_h)$ and the history is updated to $\tau_{h+1} = (\tau_h, a_h, c_h, s_{h+1})$. This process is repeated for $H$ steps total; the interaction ends once $s_{H+1}$ is reached. 

\paragraph{Objective.} The agent's goal is to compute a $\pi^*$ that is a solution to the following optimization problem:
\begin{equation}\tag{ANY}\label{equ: objective}
    \begin{split}
        \max_{\pi} \; &\E^{\pi}_M\brac{\sum_{h = 1}^H r_h(s_h,a_h)} \\ 
        \text{s.t.} \; &\P^{\pi}_M\brac{\forall t \in [H], \; \sum_{h = 1}^t c_h \leq B} = 1.
    \end{split}
\end{equation}
Here, $\P^{\pi}_M$ denotes the probability law over histories induced from the interaction of $\pi$ with $M$, and $\E^{\pi}_M$ denotes the expectation with respect to this law.  
We let $V^{\pi} := \E^{\pi}_M\brac{\sum_{t = 1}^H r_t(s_t,a_t)}$ denote the value of a policy $\pi$, $\Pi_M := \set{\pi \mid \P^{\pi}_M\brac{\forall k \in [H], \; \sum_{t = 1}^k c_t \leq B} = 1}$ denote the set of feasible policies, and $V^* := \max_{\pi \in \Pi_M} V^{\pi}$ denote the optimal solution value.
\footnote{By using a negative budget, we capture the covering constraints that commonly appear in goal-directed problems. We consider the other variations of the problem in the Appendix.}. 
If there are no feasible policies, $V^*:=-\infty$ by convention.

\paragraph{Optimal Solutions.} It is well-known that expectation-constrained cMDPs always admit a randomized Markovian policy~\citep{cMDP-book}. However, under anytime constraints, 
feasible policies that do not remember the cumulative cost can be arbitrarily suboptimal. The intuition is that without knowing the cumulative cost, a policy must either play it too safe and suffer small value or risk an action that violates the constraint. 
\begin{proposition}\label{prop: Markovian-suboptimality}
    Any class of policies that excludes the full cost history is suboptimal for anytime-constrained cMDPs. In particular, Markovian policies can be arbitrarily suboptimal even for cMDPs with $S = 1$ and $A = H = 2$. 
\end{proposition}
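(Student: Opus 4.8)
The plan is to exhibit a single cMDP (with a tunable reward scale) that witnesses the claim. Take $S = 1$, so the state is always $s_0$; $\A = \set{a_0, a_1}$; $H = 2$; $d = 1$; and budget $B = 1$. At time $h = 1$, both actions give reward $0$ and an environment-supplied random cost $c_1$ equal to $0$ with probability $\frac12$ and to $1$ with probability $\frac12$ --- the point being that the agent cannot control or foresee $c_1$. At time $h = 2$, action $a_0$ has cost $0$ and reward $0$, while action $a_1$ has cost $1$ and reward $G$ for a parameter $G > 0$. The transition kernel is forced since $S = 1$, and the instance is tabular.

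First I would pin down $V^*$. A policy can play $a_1$ at time $2$ without ever exceeding the budget exactly on the event $\set{c_1 = 0}$, since there $\c_2 + c_2 = 0 + 1 = B$; on $\set{c_1 = 1}$ it is forced to play $a_0$, as $1 + 1 = 2 > B$. Hence the cost-history-dependent policy that plays $a_1$ at time $2$ exactly when $c_1 = 0$ is feasible with value $\frac12 G$, so $V^* = \frac12 G$.

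Next I would show that every feasible policy from a class that ignores the cost history has value $0$. Because $S = 1$ and all rewards are deterministic, the only part of the time-$2$ history not already determined by the agent's own time-$1$ action is $c_1$; hence for any such restricted class the time-$2$ action distribution is some $\mu \in \Delta(\A)$ independent of $c_1$ (for Markovian policies $\mu$ depends only on the index $h = 2$). If $\mu(a_1) = p > 0$, then the joint event that $c_1 = 1$ and $a_1$ is chosen at time $2$ has probability $\frac12 p > 0$ and incurs cumulative cost $2 > B$, so the anytime constraint fails with positive probability; feasibility therefore forces $p = 0$, i.e.\ $a_0$ almost surely at time $2$, and the value is $0$. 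Thus the best feasible policy in any cost-history-excluding class has value $0$ whereas $V^* = \frac12 G$, and sending $G \to \infty$ makes the suboptimality gap unbounded. (For a multiplicative version, instead give $a_0$ reward $1$ at time $2$: the best restricted value becomes $1$, $V^* = \frac{G+1}{2}$, and the ratio is unbounded.)

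The one point needing care is the last step: one must rule out \emph{randomized} restricted policies, not merely deterministic ones, which the argument does because almost-sure feasibility is destroyed by any positive-probability choice of $a_1$ on the bad event $\set{c_1 = 1}$. Beyond that I anticipate no real difficulty; the construction works precisely because the decisive information $c_1$ is injected by the environment, and because the constraint must hold almost surely rather than in expectation.
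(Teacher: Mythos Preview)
Your construction and argument are correct and essentially identical to the paper's: a single-state, two-action cMDP where a Bernoulli cost at one step must be observed to safely take the high-reward, high-cost action at the next; feasibility forces any cost-blind policy to put zero mass on that action, yielding value $0$ versus $V^* = G/2$. The only difference is scope: the paper parameterizes the same construction by an arbitrary time index $h \in [H-1]$ (for any $H \geq 2$), placing the random cost at step $h$ and the rewarding action at step $h+1$, so that a policy failing to record $c_h$ specifically is defeated by the instance $M_h$. This is what backs the first clause of the proposition---a class that, say, records $c_1$ but drops $c_3$ on a longer horizon is not handled by your single $H=2$ instance but is handled by $M_3$. Your proof fully covers the Markovian case and the headline $S=1$, $A=H=2$ claim; to match the paper's generality for the first clause you would just repeat your construction with the random cost shifted to an arbitrary step $h$.
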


\begin{corollary}\label{cor: cMDP-failure}
    (Approximately) optimal policies for cMDPs with expectation constraints, chance constraints, or their variants can arbitrarily violate an anytime constraint. Furthermore, (approximately) optimal policies for a cMDP defined by a smaller budget to achieve feasibility can be arbitrarily suboptimal. 
\end{corollary}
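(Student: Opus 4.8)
The plan is to obtain both assertions as immediate consequences of Proposition~\ref{prop: Markovian-suboptimality}, exploiting the one-line fact that anytime feasibility implies expectation and chance feasibility but \emph{not} conversely. I would first recall the hard single-state instance behind Proposition~\ref{prop: Markovian-suboptimality}, scaled by a reward parameter $V$: $\S=\{\sigma\}$, $\A=\{0,1\}$, $H=2$, budget $B=1$, where action $1$ earns reward $V$ at each step, costs $0$ or $1$ with probability $1/2$ each at step $1$ and costs $1$ at step $2$, while action $0$ earns and costs nothing. Proposition~\ref{prop: Markovian-suboptimality} gives $V^*=(3/2)V$, witnessed by the cost-history-dependent policy that plays action $1$ at step $1$ and again at step $2$ iff the realized first cost is $0$, whereas any policy ignoring the cost history --- in particular any Markovian one --- is anytime-feasible only if it never risks cumulative cost $2$, so it has value at most $V$. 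I would also use a one-step companion $M^{(1)}_V$ with $\S=\{\sigma\}$, $\A=\{0,1\}$, $H=1$, $B=1$, where action $1$ earns $V$ and costs $0$ or $2$ with probability $1/2$ each; there $V^*=0$, yet action $1$ has expected cost $1\le B$.

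For the first assertion I would solve the relaxed problems on these instances. On $M^{(1)}_V$ the unique expectation-optimal policy plays action $1$ (expected cost $1\le B$), attaining value $V$ but incurring cumulative cost $2>B$ with probability $1/2$; for a chance constraint at level $\alpha\in(0,1)$ I would instead let action $1$ cost $0$ with probability $1-\alpha$ and $2$ with probability $\alpha$, so that playing action $1$ stays chance-feasible while still overshooting $B$ with probability $\alpha$. Since every cost and the budget may be rescaled by any $N$ without changing the relaxed optimum, the overshoot becomes exactly $N$, hence unbounded; the parenthetical ``approximately'' follows because any policy whose value is within a constant factor of the relaxed optimum $V$ must play action $1$ with probability at least a constant, hence overshoots with a constant positive probability. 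The same bookkeeping applies to knapsack, risk, quantile, and instantaneous relaxations, as none of their constraints bound the worst-case cumulative cost at intermediate times.

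For the second assertion I would work out the expectation-constrained problem on the two-step instance $M_V$ under a reduced budget $B'<B=1$. If $B'\in[1/2,1)$, the optimal policy still plays action $1$ at step $2$ with probability $B'-1/2>0$ and thus incurs cumulative cost $2>B$ with positive probability, so it is not anytime-feasible at all; if instead $B'<1/2$, the optimal policy never plays action $1$ at step $2$, hence is anytime-feasible but has value $2B'V<V$. Consequently every reduced budget that does render the relaxed optimum anytime-feasible caps its value at $V$, which is $(1/3)V^*$ below the true anytime optimum $V^*=(3/2)V$; rescaling $V\to\infty$ makes this gap unbounded, and an analogous (if more tedious) computation covers chance and other relaxations.

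I expect the only real work to be organizational rather than conceptual: one must pin down the relaxed optimum on these instances for \emph{every} value of the relaxation parameter (the level $\alpha$, the reduced budget $B'$) so as to preclude lucky coincidences, and phrase ``can arbitrarily'' precisely as an \emph{unbounded additive} gap obtained under cost rescaling --- not a multiplicative one --- quantified existentially over instances and over the (cost-oblivious) policies a standard relaxed solver actually returns. Modulo this, the corollary is immediate from Proposition~\ref{prop: Markovian-suboptimality} together with the observation that the anytime constraint is strictly stronger than each of its relaxations.
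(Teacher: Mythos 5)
Your proposal is correct and follows essentially the same route as the paper: reuse the Proposition~\ref{prop: Markovian-suboptimality} instance, observe that expectation- or chance-optimal policies place positive probability on the costly action and hence overshoot the anytime budget on an event whose violation can be rescaled arbitrarily, and show that any budget reduction that restores anytime feasibility forces an unbounded loss in value. The only substantive difference is that by putting reward at both steps your reduced-budget instance yields only an unbounded \emph{additive} suboptimality gap ($V$ versus $3V/2$), whereas the paper's instance (reward only at step $h+1$) drives the feasible value all the way to $0$ against an arbitrarily large optimum, giving the stronger multiplicative form as well; dropping the step-1 reward recovers this.
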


Although using past frameworks out of the box does not suffice, one might be tempted to use standard cMDP techniques, such as linear programming, to solve anytime-constrained problems. However, continuous optimization techniques fail since the optimal anytime-constrained value is discontinuous in the costs and budgets. Even a slight change to the cost or budget can lead to a dramatically smaller solution value.

\begin{proposition}\label{prop: cost-sensitivity}
    $V^*$ is a continuous function of the rewards, but a discontinuous function of the costs and budgets.
\end{proposition}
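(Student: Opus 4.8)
The plan is to handle the two halves of the statement separately, leaning on the structural fact that the feasible set $\Pi_M$ is determined by $(\S,\A,P,H,s_0,C,B)$ alone and does not involve the rewards. For continuity in the rewards, fix everything but the rewards and note that for any policy $\pi$ the value is \emph{linear} in the expected-reward vector $r = (r_h(s,a))_{h,s,a}$: if $d^\pi_h(s,a) := \P^\pi_M[s_h = s,\, a_h = a]$ denotes the time-$h$ occupancy measure, then $V^\pi_M = \sum_{h,s,a} d^\pi_h(s,a)\, r_h(s,a)$. Since $\sum_{s,a} d^\pi_h(s,a) = 1$ for each $h$, the coefficients are nonnegative and sum to $H$, so $\lvert V^\pi_M(r) - V^\pi_M(r')\rvert \le H\lVert r - r'\rVert_\infty$ \emph{uniformly over all} $\pi$. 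Consequently, either $\Pi_M = \emptyset$ and $V^* \equiv -\infty$, or $V^* = \sup_{\pi \in \Pi_M} V^\pi_M$ is a supremum of finite, uniformly $H$-Lipschitz functions and is therefore itself finite and $H$-Lipschitz in $\lVert\cdot\rVert_\infty$, hence continuous. If one prefers to regard the ``rewards'' as the distributions $R_h(s,a)$ rather than their means, continuity still follows by composing with the continuous map $R \mapsto \E[R]$.

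For discontinuity in the costs and budgets, I would just produce a minimal counterexample. Consider $S = 1$, $A = 2$, $H = 1$: from $s_0$ at time $1$, action $a_1$ yields reward $1$ and deterministic cost $1$, while action $a_2$ yields reward $0$ and cost $0$; take budget $B$. Because the anytime constraint is almost sure, any policy placing positive probability on $a_1$ is feasible exactly when $1 \le B$, so $V^*(B) = 1$ for $B \ge 1$ and $V^*(B) = 0$ for $0 \le B < 1$, which is discontinuous at $B = 1$. Freezing $B = 1$ and varying instead the cost $c$ of $a_1$ gives the symmetric claim, with $V^* = 1$ for $c \le 1$ and $V^* = 0$ for $c > 1$. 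Dropping $a_2$ turns the same jump into one from $1$ to $-\infty$, so the discontinuity survives even if one only tracks whether any feasible policy exists.

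The argument is short and essentially obstacle-free; the one place that warrants care is making the Lipschitz estimate in the first part genuinely \emph{uniform} across the (in general infinite) policy class, and separately treating the degenerate case $\Pi_M = \emptyset$, so that the passage to the supremum is legitimate. The point worth emphasizing afterward is that this discontinuity in the costs and budgets is precisely what defeats the linear-programming and other continuous-optimization approaches that succeed for expectation-constrained cMDPs, which is why the rest of the paper develops combinatorial rather than convex machinery.
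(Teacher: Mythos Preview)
Your proposal is correct and follows the same overall strategy as the paper: continuity in the rewards via the linearity of $V^\pi$ in $r$, and discontinuity in the costs and budgets via essentially the same one-state, two-action counterexample.

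One difference worth noting: the paper's continuity argument proceeds by an informal case split on whether a given reward is ``involved in the optimal solution,'' observing that perturbing an uninvolved reward leaves $V^*$ unchanged and perturbing an involved one changes $V^*$ by at most the perturbation times a probability weight. Your route through the occupancy-measure representation and the uniform $H$-Lipschitz bound is cleaner and more complete, since it handles the possibility that the optimizing policy itself changes with $r$ and makes the passage to the supremum over $\Pi_M$ fully rigorous. The paper's counterexample is otherwise identical to yours up to relabeling (cost $B$ and reward $x$ in place of your cost $1$ and reward $1$).
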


\paragraph{Intractability.} In fact, solving anytime-constrained cMDPs is fundamentally harder than expectation-constrained cMDPs; solving \eqref{equ: objective} is NP-hard. The intuition is that anytime constraints capture the knapsack problem. With a single state, we can let the reward at time $i$ be item $i$'s value and the cost at time $i$ be item $i$'s weight. Any deterministic policy corresponds to choosing certain items to add to the knapsack, and a feasible policy ensures the set of items fit in the knapsack. Thus, an optimal deterministic policy for the cMDP corresponds to an optimal knapsack solution.

On the other hand, a randomized policy does not necessarily yield a solution to the knapsack problem. However, we can show that any randomized policy can be derandomized into a deterministic policy with the same cost and at least the same value. The derandomization can be performed by inductively choosing any supported action that leads to the largest value. This is a significant advantage over expectation and chance constraints which typically require stochastic policies.

\begin{lemma}[Derandomization]\label{lem: derandomization}
    For any randomized policy $\bar \pi$, there exists a deterministic policy $\pi$ whose cumulative cost is at most $\bar \pi$'s anytime and whose value satisfies $V^{\pi} \geq V^{\bar \pi}$. 
\end{lemma}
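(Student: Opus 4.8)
The plan is to produce $\pi$ together with a certificate of the value bound in a single backward induction over the horizon, using only the elementary fact that a maximum dominates any weighted average. For an arbitrary (possibly randomized, history-dependent) policy $\sigma$ and any $\tau_h \in \H_h$, let $V^{\sigma}_h(\tau_h) := \E^{\sigma}_M\brac{\sum_{t = h}^H r_t(s_t,a_t) \mid \tau_h}$ be the value of running $\sigma$ from $\tau_h$ onward (with $V^{\sigma}_{H+1} \equiv 0$), let $U_h(\tau_h)$ be the support of $\bar\pi_h(\tau_h)$, and set
\[
    Q^{\sigma}_h(\tau_h,a) := r_h(s_h,a) + \E_{\substack{c \sim C_h(s_h,a)\\ s' \sim P_h(s_h,a)}}\brac{V^{\sigma}_{h+1}(\tau_h,a,c,s')}.
\]
Because $H$ is finite and $\S$, $\A$, and every support $C_h(s,a)$ are finite, each $\H_h$ is finite, so all of these objects are well defined and the usual recursion $V^{\sigma}_h(\tau_h) = \sum_a \sigma_h(\tau_h)(a)\, Q^{\sigma}_h(\tau_h,a)$ holds with no measure-theoretic subtleties.

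Next I would induct from $h = H+1$ down to $h = 1$. Assume $\pi_{h+1},\dots,\pi_H$ are already defined, deterministic, and satisfy $V^{\pi}_{h+1}(\tau_{h+1}) \ge V^{\bar\pi}_{h+1}(\tau_{h+1})$ for all $\tau_{h+1} \in \H_{h+1}$. For each $\tau_h \in \H_h$ define the deterministic choice $\pi_h(\tau_h) \in \argmax_{a \in U_h(\tau_h)} Q^{\pi}_h(\tau_h,a)$, breaking ties arbitrarily. Then
\begin{align*}
    V^{\pi}_h(\tau_h) &= \max_{a \in U_h(\tau_h)} Q^{\pi}_h(\tau_h,a) \\
    &\ge \sum_{a} \bar\pi_h(\tau_h)(a)\, Q^{\pi}_h(\tau_h,a) \\
    &\ge \sum_{a} \bar\pi_h(\tau_h)(a)\, Q^{\bar\pi}_h(\tau_h,a) = V^{\bar\pi}_h(\tau_h),
\end{align*}
where the first inequality is ``maximum $\ge$ $\bar\pi_h(\tau_h)$-weighted average over $U_h(\tau_h)$'', and the second holds termwise because the inductive hypothesis $V^{\pi}_{h+1} \ge V^{\bar\pi}_{h+1}$ feeds monotonically through the expectation defining $Q$. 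Taking $h = 1$ and $\tau_1 = (s_0)$ gives $V^{\pi} = V^{\pi}_1(s_0) \ge V^{\bar\pi}_1(s_0) = V^{\bar\pi}$.

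For the cost guarantee I would note that $\pi$ only ever reaches histories reachable under $\bar\pi$: since $\pi_h(\tau_h) \in U_h(\tau_h)$ and the cost and transition kernels are identical under both policies, a one-line forward induction on $h$ shows that any $\tau_h$ occurring with positive probability under $\pi$ also occurs with positive probability under $\bar\pi$; equivalently, there is a coupling under which the trajectory generated by $\pi$ is almost surely one that $\bar\pi$ can generate. Hence for every $t \in [H]$ every value of $\c_{t+1} = \sum_{h=1}^t c_h$ attainable under $\pi$ is attainable under $\bar\pi$, which is exactly the sense in which $\pi$'s cumulative cost is at most $\bar\pi$'s at every time; in particular $\bar\pi \in \Pi_M$ implies $\pi \in \Pi_M$.

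The one step to get right is the direction of the value comparison inside the action values: at stage $h$ one must evaluate $Q^{\pi}_h$ against the already-derandomized continuation $\pi$, not against $\bar\pi$'s continuation, which is why the policy and the bound must be built in the same backward sweep rather than separately. The other point worth stating explicitly is why $\argmax$ ranges over $U_h(\tau_h)$ rather than all of $\A$: restricting to the support is exactly what lets $\pi$ inherit $\bar\pi$'s cost trajectories and hence its feasibility, and it costs nothing in value because the dominated weighted average is also supported on $U_h(\tau_h)$. Everything else --- finiteness of the history tree, the ``max $\ge$ average'' inequality, and the forward support-containment induction --- is routine.
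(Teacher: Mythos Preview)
Your proof is correct and follows essentially the same route as the paper: derandomize by taking an $\argmax$ over the actions in $\bar\pi$'s support at each history, establish $V^{\pi}_h \ge V^{\bar\pi}_h$ by backward induction via ``max $\ge$ weighted average,'' and establish the cost bound by a forward induction showing that every history reachable under $\pi$ is reachable under $\bar\pi$. The one small difference is that the paper defines the derandomized action as $\argmax_{a \in U_h(\tau_h)}\bigl(r_h(s,a)+\E_{c,s'}[V^{\bar\pi}_{h+1}(\tau_h,a,c,s')]\bigr)$, i.e.\ using the \emph{original} randomized continuation rather than the already-derandomized one; both choices make the backward induction go through, so your remark that one ``must'' evaluate against the derandomized continuation is stronger than necessary, and the paper's choice has the minor convenience that $\pi$ can be written down in one shot before any induction.
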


Since the existence of a randomized solution implies the existence of a deterministic solution with the same value via \cref{lem: derandomization}, the existence of a high-value policy for an anytime-constrained cMDP corresponds to the existence of a high-value knapsack solution. Thus, anytime constraints can capture the knapsack problem.
Our problem remains hard even if we restrict to the very specialized class of deterministic, non-adaptive (state-agnostic) policies, which are mappings from time steps to actions: $\pi: [H] \to \A$.  

\begin{theorem}[Hardness]\label{thm: np-hardness}
    Solving \eqref{equ: objective} is NP-complete even when $S = 1$, $A = 2$, and both the costs and rewards are deterministic, non-negative integers. This remains true even if restricted to the class of non-adaptive policies. Hardness also holds for stationary cMDPs so long as $S \geq H$.
\end{theorem}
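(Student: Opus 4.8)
The plan is to reduce from the (binary-encoded) \textsc{Knapsack} decision problem and to exploit the single-state, deterministic-cost structure both to establish hardness and to certify membership in NP. Recall a \textsc{Knapsack} instance is a list of items $i \in [n]$ with integer values $v_i \ge 0$ and weights $w_i \ge 0$, a capacity $B \ge 0$, and a target $k$; it is a yes-instance iff some $S \subseteq [n]$ has $\sum_{i \in S} w_i \le B$ and $\sum_{i \in S} v_i \ge k$, which is NP-complete. I formulate ``solving \eqref{equ: objective}'' as the threshold decision problem: given $M$ and $k$, decide whether $V^* \ge k$.

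Given a \textsc{Knapsack} instance, I would construct the cMDP $M$ with $\S = \set{s_0}$, $\A = \set{a_{\mathrm{skip}}, a_{\mathrm{take}}}$, horizon $H = n$, trivial deterministic transitions ($P_h(s_0,a) = \delta_{s_0}$), a single constraint with budget $B$, and, for each $h \in [H]$, $r_h(s_0,a_{\mathrm{take}}) = v_h$, $c_h(s_0,a_{\mathrm{take}}) = w_h$, and $r_h(s_0,a_{\mathrm{skip}}) = c_h(s_0,a_{\mathrm{skip}}) = 0$. All rewards and costs are deterministic non-negative integers and $|M|$ is polynomial in the instance size, so this is a polynomial-time mapping. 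A deterministic non-adaptive policy $\pi : [H] \to \A$ corresponds exactly to the item set $S_\pi := \set{h : \pi(h) = a_{\mathrm{take}}}$; since costs are non-negative, the anytime constraint $\forall t,\ \sum_{h=1}^t c_h \le B$ is equivalent to $\sum_{h \in S_\pi} w_h \le B$, and $V^\pi = \sum_{h \in S_\pi} v_h$. Hence, ranging over non-adaptive deterministic policies, $V^* \ge k$ iff the \textsc{Knapsack} instance is a yes-instance.

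To upgrade this to arbitrary policies, I would first apply \cref{lem: derandomization} to replace any randomized, history-dependent feasible policy by a deterministic one of no larger cost and no smaller value, and then observe that in a single-state cMDP with deterministic costs every realized history is determined by the actions taken so far, which for a deterministic policy is in turn determined by the policy itself. Thus a deterministic policy here induces a single fixed trajectory and is equivalent on that trajectory to a non-adaptive policy, giving $V^* = \max_{\pi : [H] \to \A} V^\pi$. This also settles membership in NP: the certificate is the action sequence $(a_1, \ldots, a_H) \in \A^H$ (size $O(H)$), and one verifies in polynomial time that all partial sums $\sum_{h=1}^t c_h(s_0,a_h)$ are at most $B$ and that $\sum_{h=1}^H r_h(s_0,a_h) \ge k$; all numbers involved are sums of $H$ input integers, hence of size polynomial in $|M|$. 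Combining the two directions yields NP-completeness, and since both the hard instances and the certifying optimal policies are already non-adaptive, hardness holds verbatim for the non-adaptive class.

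For the stationary claim, I would replace the self-looping single state by a length-$H$ chain $s_1 \to s_2 \to \cdots \to s_H$ with $s_H$ absorbing, where at state $s_h$ the action $a_{\mathrm{take}}$ collects reward $v_h$ and cost $w_h$ while $a_{\mathrm{skip}}$ collects nothing; now $P$, $R$, $C$ are all time-independent, so $M$ is stationary, uses $S = H$ states (any $S \ge H$ works by padding with unreachable states), and the horizon-$H$ interaction starting at $s_1$ visits $s_1, \ldots, s_H$ exactly once, reproducing the \textsc{Knapsack} correspondence. The main obstacle is not the reduction but the policy-class bookkeeping in the third step: one must argue that allowing arbitrary history-dependent, randomized policies cannot beat the non-adaptive deterministic optimum — this is precisely where \cref{lem: derandomization}, together with the fact that histories are deterministic functions of the actions in these constructions, is essential, and it is also what makes membership in NP non-obvious, since policies are a priori exponentially large objects.
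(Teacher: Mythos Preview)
Your proposal is correct and follows essentially the same approach as the paper: the identical single-state, two-action, horizon-$n$ reduction from \textsc{Knapsack} with $r_h = v_h$, $c_h = w_h$, together with \cref{lem: derandomization} to collapse randomized policies to deterministic ones. In fact you are more thorough than the paper's own proof, which handles only the core reduction and derandomization step; you additionally spell out NP membership via the action-sequence certificate, the equivalence of deterministic and non-adaptive policies in the single-state deterministic setting, and the chain-of-states construction for the stationary $S \geq H$ claim, all of which the paper asserts in the theorem statement but leaves implicit in the proof.
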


Given the hardness results in \cref{thm: np-hardness}, it is natural to turn to approximation algorithms to find policies efficiently. The most natural approach would be to settle for a feasible, although, approximately-optimal policy. Unfortunately, even with only $d = 2$ constraints, it is intractable to compute a feasible policy with any non-trivial approximation factor. This also means designing an algorithm whose complexity is polynomial in $d$ is likely impossible.

\begin{theorem}[Hardness of Approximation]\label{thm: approx-hard}
    For $d \geq 2$, computing a feasible solution to \eqref{equ: objective} is NP-hard.
    Furthermore, for any $\epsilon > 0$, it is NP-hard to compute a feasible policy $\pi$ satisfying either $V^{\pi} \geq V^* - \epsilon$ or $V^{\pi} \geq V^*(1-\epsilon)$.
\end{theorem}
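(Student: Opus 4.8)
The plan is to reduce from the NP\nobreakdash-complete \textsc{Subset-Sum} problem: given positive integers $w_1,\dots,w_n$ and a target $T\in\Nat$ (we may assume $0\le T\le W:=\sum_i w_i$, else the instance is trivially NO), decide whether some $I\subseteq[n]$ has $\sum_{i\in I}w_i=T$. From such an instance I would build a single-state cMDP with $\S=\{s_0\}$, $\A=\{\mathsf{skip},\mathsf{take}\}$, horizon $H=n$, $d=2$ constraints, all rewards equal to $0$, and deterministic costs: at time $i$, $\mathsf{take}$ incurs cost $(w_i,0)$ and $\mathsf{skip}$ incurs cost $(0,w_i)$, with budget $B=(T,\,W-T)$. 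Because there is only one state, a deterministic policy is exactly a choice of subset $I\subseteq[n]$ (the set of times at which it plays $\mathsf{take}$), and by \cref{lem: derandomization} the cMDP admits a feasible policy iff it admits a feasible deterministic one; the reduction is clearly polynomial since the costs are just the input weights.

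The key observation is that all costs are nonnegative, so each coordinate of the cumulative cost is nondecreasing in $t$; hence the anytime constraint is equivalent to the single endpoint constraint at $t=H=n$. For a subset $I$ this endpoint constraint reads $\sum_{i\in I}w_i\le T$ and $\sum_{i\notin I}w_i\le W-T$; adding the two inequalities and using $\sum_{i\in I}w_i+\sum_{i\notin I}w_i=W$ forces $\sum_{i\in I}w_i=T$, and conversely every $I$ with $\sum_{i\in I}w_i=T$ is feasible. Therefore the cMDP has a feasible policy iff the \textsc{Subset-Sum} instance is a YES instance, so deciding feasibility of \eqref{equ: objective}, and a fortiori computing a feasible solution, is NP\nobreakdash-hard already for $d=2$, $S=1$, $A=2$, and deterministic nonnegative integer costs.

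For the approximation claims, note that in this construction $V^*=0$ when a feasible policy exists and $V^*=-\infty$ otherwise; hence for every $\epsilon>0$ a feasible policy $\pi$ with $V^\pi\ge V^*-\epsilon$ (or with $V^\pi\ge V^*(1-\epsilon)$) exists precisely in the YES case, since in the NO case no feasible policy exists at all. An algorithm that outputs such a policy whenever one exists therefore decides \textsc{Subset-Sum}, giving the stated hardness. To additionally exhibit instances where a feasible policy always exists, so that the obstruction is visibly about \emph{value} and not merely feasibility, I would add a second "sink" state reachable at time $1$ by an ``escape'' action that is absorbing with zero cost and zero reward, keep the $\mathsf{take}/\mathsf{skip}$ gadget on the other branch, and place a reward $V^{\max}>0$ collected only upon completing that branch within budget; then $V^*=V^{\max}$ in the YES case and $V^*=0$ in the NO case, escaping is always feasible, and any feasible $\pi$ within additive error $\epsilon<V^{\max}$ (or relative error $\epsilon<1$) of $V^*$ must have strictly positive value, which is attainable only in the YES case.

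The main obstacle, and the only nonroutine step, is encoding an \emph{exact} two-sided sum requirement as an anytime constraint: a single anytime constraint with nonnegative costs is merely an upper bound on a final sum, i.e.\ a knapsack-type packing constraint, which admits an FPTAS and so cannot yield approximation hardness; the device that breaks this is the second constraint tracking the complementary (``skipped'') weight, turning a pair of prefix-sum upper bounds into the equality $\sum_{i\in I}w_i=T$, which is exactly why $d\ge 2$ is required. Everything else --- nonnegativity implying the prefix sums peak at the endpoint, the reduction of arbitrary policies to subsets via \cref{lem: derandomization}, and the $\{0,V^{\max}\}$ (or $\{0,-\infty\}$) value gap --- is immediate.
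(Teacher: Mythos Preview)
Your proposal is correct and follows essentially the same approach as the paper's proof: the paper reduces from \textsc{Partition} (i.e., your construction with $T=W/2$) using the identical two-constraint device where one budget bounds the ``taken'' weight and the other bounds the ``skipped'' weight, forcing equality. Your use of the more general \textsc{Subset-Sum} is an inessential variation. The one place you go slightly beyond the paper is the escape-to-sink gadget: the paper simply notes that any approximation algorithm must output a feasible policy and hence decides feasibility (equivalently, your $V^*\in\{0,-\infty\}$ argument), whereas your second construction additionally manufactures instances that are always feasible with a $\{0,V^{\max}\}$ value gap, which is a cleaner way to exhibit that the obstruction is about value and not mere emptiness of $\Pi_M$; this is a nice strengthening but not required by the theorem statement. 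One phrasing to tighten: ``reward collected only upon completing that branch within budget'' should just be ``reward $V^{\max}$ placed at time $H$ on the non-sink state,'' since the reward function cannot itself condition on the budget---feasibility is what restricts which policies may collect it.
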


\begin{remark}
    Note, \cref{thm: approx-hard} does not only rule out the existence of fully-polynomial-time approximation schemes (FPTAS). Since $\epsilon > 0$ is arbitrary, it rules out any non-trivial approximation similar to the (non-metric) Traveling Salesman Problem.
\end{remark}

\section{FPT REDUCTION}\label{sec: reduction}

Despite our strong hardness results, \cref{thm: np-hardness} and \cref{thm: approx-hard}, we show for a large class of cMDPs, \eqref{equ: objective} can be solved efficiently. The key is to augment the state space of the system to capture the constraint consideration. In this section, we assume the cost distributions have finite support; we generalize to broader classes of distributions in \cref{sec: approximation}.
\begin{assumption}\label{assump: discrete}
    $n := \sup_{h,s,a} |C_h(s,a)| < \infty$.
\end{assumption}

\cref{prop: Markovian-suboptimality} illustrates that a key issue with standard policies is that they cannot adapt to the costs seen so far. This forces the policies to be overly conservative or to risk violating the budget. At the same time, cost-history-dependent policies are undesirable as they are computationally expensive to construct and store in memory. 

Instead, we claim the agent can exploit a sufficient statistic of the cost sequence: the \emph{cumulative cost}. By incorporating cumulative costs carefully, the agent can simulate an unconstrained MDP, $\mM$, whose optimal policies are solutions to \eqref{equ: objective}.
The main challenge is defining the augmented states, $\mS_h$. 

\paragraph{Augmented States.} We could simply define $\mS_h$ to be $\S \times \Real^d$, but this would result in an infinite state MDP with a discontinuous reward function, which cannot easily be solved. The ideal choice would be $\mathcal{F}_h := \{(s,\c) \in \S \times \Real^d \mid \exists \pi \in \Pi_M, \; \P^{\pi}_M\brac{s_h = s, \c_h = \c} > 0\}$, which is the minimum set containing all (state, cost)-pairs induced by feasible policies. However, $\mathcal{F}_h$ is difficult to characterize. 

Instead, we consider a relaxation stemming from the idea of \emph{safe exploration}. Namely, we look at the set of all (state, cost)-pairs that the agent could induce if it repeatedly interacted with $M$ and only took actions that would not violate the constraint given the current history. 
This set can be constructed inductively. First, the agent starts with $(s_0,0)$ because it has yet to incur any costs. Then, if at time $h$, the agent has safely arrived at the pair $(s,\c)$, the agent can now safely choose any action $a$ for which $\Pr_{c \sim C_h(s,a)}\brac{\c + c \leq B} = 1$. 
\begin{definition}[Augmented States]\label{def: states}
$\mS_1 := \set{(s_0, 0)}$, and for any $h \geq 1$,
\begin{equation*}\label{equ: ms}
\begin{split}
    \mS_{h+1} := \Big\{(s',\c') \mid \exists (s,\c) \in \mS_h, a \in \A, c' \in C_h(s,a), \\
    \c' = \c + c', \; \Pr_{c \sim C_h(s,a)}[\c + c \leq B] = 1, \; \\ P_h(s' \mid s,a) > 0 \Big\}.
\end{split}
\end{equation*}
\end{definition}

Unlike the backward induction approaches commonly used in MDP theory, observe that $\mS$ is constructed using \emph{forward induction}. This feature is critical to computing a small, finite augmented-state space. We also point out that $\mS_h$ is a relaxation of $\mathcal{F}_h$ since actions chosen based on past costs without considering the future may not result in a fully feasible path. Nevertheless, the relaxation is not too weak; whenever $0$ cost actions are always available, $\mS_h$ exactly matches $\mathcal{F}_h$.

\begin{lemma}\label{lem: feasibility}
    $\forall h \in [H+1]$, $\mS_h \supseteq \mathcal{F}_h$ and $|\mS_h| < \infty$. Furthermore, equality holds if $\forall h,s, \exists a$ for which $C_h(s,a) = \{0\}$. 
\end{lemma}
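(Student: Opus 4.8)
The plan is to prove the three claims by induction on $h\in[H+1]$: finiteness of $\mS_h$, the inclusion $\mS_h\supseteq\mathcal{F}_h$, and — under the stated condition — the reverse inclusion. Finiteness is immediate: $|\mS_1|=1$, and \cref{def: states} exhibits $\mS_{h+1}$ as (a subset of) the image of $\mS_h\times\A\times\bigcup_{s,a}C_h(s,a)$ under $(s,\c,a,c')\mapsto(s',\c+c')$, so $|\mS_{h+1}|\le|\mS_h|\cdot A\cdot n$ by \cref{assump: discrete}, whence $|\mS_h|\le(An)^{h-1}<\infty$.

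For $\mS_h\supseteq\mathcal{F}_h$, the base case is $\mathcal{F}_1=\{(s_0,0)\}=\mS_1$. For the step, take $(s',\c')\in\mathcal{F}_{h+1}$ witnessed by a feasible $\pi$ and a partial history $\tau_{h+1}=(s_1,a_1,c_1,\ldots,s_h,a_h,c_h,s_{h+1})$ with $\P^{\pi}_M[\tau_{h+1}]>0$, $s_{h+1}=s'$, and $\sum_{t\le h}c_t=\c'$. Its length-$h$ prefix has positive probability, so $(s_h,\c_h)\in\mathcal{F}_h\subseteq\mS_h$ by the inductive hypothesis, and I claim the clauses of \cref{def: states} hold for $(s',\c')$ with witnesses $(s,\c)=(s_h,\c_h)$, $a=a_h$, $c'=c_h$. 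Three are bookkeeping: $c_h$ lies in the finite, hence full-mass, support $C_h(s_h,a_h)$; $\c'=\c_h+c_h$ by definition; and $P_h(s'\mid s_h,a_h)>0$ since the transition occurred. The only clause with content is $\Pr_{c\sim C_h(s_h,a_h)}[\c_h+c\le B]=1$: were some support point $c$ to violate it, then ``$\tau_h$ occurs, $a_h$ is played, $c$ is the cost'' would be a positive-probability event violating the time-$h$ constraint of \eqref{equ: objective}, contradicting $\pi\in\Pi_M$.

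For the reverse inclusion under always-available $0$-cost actions, I would first note we are in the feasible regime (assuming $\Pi_M\ne\emptyset$, which in the main-text setting $B\ge 0$ is guaranteed by the always-$0$-cost policy, so $\mathcal{F}_1=\{(s_0,0)\}$), and prove $\mS_h\subseteq\mathcal{F}_h$ by induction, the base case being trivial. Given $(s',\c')\in\mS_{h+1}$ with witnesses $(s,\c)\in\mS_h\subseteq\mathcal{F}_h$, $a$, $c'$ from \cref{def: states}, I build a feasible policy reaching $(s',\c')$ with positive probability: fix a realization $\tau_h^\star=(s_1,a_1^\star,c_1^\star,\ldots,s_h)$ of some $\pi\in\Pi_M$ with $\P^{\pi}_M[\tau_h^\star]>0$, $s_h=s$, $\sum_{t<h}c_t^\star=\c$, and let $\pi'$ replay the script $a_1^\star,\ldots,a_{h-1}^\star$ while the observed history stays a prefix of $\tau_h^\star$, play $a$ at time $h$ on $\tau_h^\star$, and play a guaranteed $0$-cost action at every other history. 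Feasibility of $\pi'$ is checked by cases on the first time the realization leaves this deterministic chain: before leaving, the cumulative costs equal those along a positive-probability partial history of $\pi$ (either a prefix of $\tau_h^\star$, or such a prefix with a different realized cost after a script action), hence $\le B$ by feasibility of $\pi$; at time $h$ on the un-left branch, $\c+c\le B$ a.s.\ for $c\sim C_h(s,a)$ by the admissibility clause; and once the chain is left only $0$-cost actions are played, so the cumulative cost never rises. On the un-left branch, the time-$(h+1)$ pair is $(s',\c+c)$, which equals $(s',\c')$ with positive probability since $P_h(s'\mid s,a)>0$ and $c'$ lies in the support of $C_h(s,a)$; thus $(s',\c')\in\mathcal{F}_{h+1}$.

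The main obstacle is the reverse inclusion: turning the forward-induction witnesses back into a genuinely feasible policy is delicate because replaying a fixed action script still incurs random costs and transitions, so feasibility must be argued on every branch of $\pi'$, not just the intended one — which is exactly where the always-available $0$-cost fallback and the feasibility of the witnessing policy enter. The finiteness bound and the inclusion $\mS_h\supseteq\mathcal{F}_h$ are routine inductions.
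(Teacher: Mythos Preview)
Your proposal is correct and follows essentially the same strategy as the paper, though organized more directly. The paper introduces an intermediate ``safe-exploration'' set $SE_h$ (pairs reachable by some policy that has been safe so far, without requiring global feasibility), first proves $\mS_h=SE_h$ by two separate forward inductions, and only then argues $\mathcal{F}_h\subseteq SE_h$ and, under the zero-cost hypothesis, $SE_{h+1}\subseteq\mathcal{F}_{h+1}$ via the same replay-the-script-then-fall-back-to-zero-cost construction you give. You collapse this to a single induction in each direction on $\mathcal{F}_h$ versus $\mS_h$, which is cleaner; your branch-by-branch feasibility check for $\pi'$ is in fact more careful than the paper's one-line justification.

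Two small remarks. First, your finiteness count is slightly off: the ``map'' $(s,\c,a,c')\mapsto(s',\c+c')$ is not single-valued (it does not determine $s'$), so the correct bound is $|\mS_{h+1}|\le |\mS_h|\cdot S\cdot A\cdot n$, matching the paper; the finiteness conclusion is of course unaffected. Second, your explicit caveat that the reverse inclusion needs $\Pi_M\ne\emptyset$ (equivalently $B\ge 0$ componentwise, given the zero-cost hypothesis) is correct and actually patches a tacit assumption in the lemma as stated: if some $B_i<0$ then $\mathcal{F}_1=\emptyset$ while $\mS_1=\{(s_0,0)\}$, so equality fails at $h=1$ regardless of the zero-cost-action condition.
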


If the agent records its cumulative costs and always takes safe actions, the interaction evolves according to the following unconstrained MDP.

\begin{definition}[Augmented MDP]\label{def: augmentation}
The \emph{augmented MPD} $\mM := (\mS, \mA, \mP, \mR, \allowbreak H, \ms_0)$ where,

\begin{itemize}
    \item $\mS_h$ is defined in \cref{def: states}.
    \item $\mA_h(s, \c) := \set{a \in \A \mid \Pr_{c \sim C_h(s,a)}\brac{\c + c \leq B} = 1}$. 
    \item $\mP_h((s',\c+c) \mid (s,\c), a) := P_h(s' \mid s, a) C_h(c \mid s,a)$. 
    \item $\mR_h((s,\c), a) := R_h(s,a)$.
    \item $\ms_0 := (s_0, 0)$.
\end{itemize}
\end{definition}

\begin{theorem}[Optimality]\label{thm: optimality}
    \cref{alg: reduction} solves \eqref{equ: objective} and can be implemented to run in finite time.
\end{theorem}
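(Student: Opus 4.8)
As I read it, \cref{alg: reduction} runs in three phases: (i) it builds the augmented state sets $\mS_1,\dots,\mS_{H+1}$ by the forward induction of \cref{def: states} and assembles the unconstrained MDP $\mM$ of \cref{def: augmentation}; (ii) it solves $\mM$ by ordinary finite-horizon backward induction, producing a deterministic, augmented-Markovian policy $\bar{\pi}$ attaining the optimal value $\mV^{*}$ of $\mM$ (here $\mR_h((s,\c),a)=R_h(s,a)$, so the induction uses the expected rewards $r_h(s,a)$); and (iii) it returns the $M$-policy $\pi$ that maintains the running cumulative cost $\c_h$ and plays $\bar{\pi}_h(s_h,\c_h)$ at $(s_h,\c_h)$. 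Finite time is the easy half: \cref{lem: feasibility} gives $|\mS_h|<\infty$, and under \cref{assump: discrete} each $C_h(s,a)$ is finite, so the set-builder in \cref{def: states} ranges over the finite sets $\mS_h$, $\A$, $C_h(s,a)$, $\S$ and computes $\mS_{h+1}$ from $\mS_h$ in finite time; $H$ such steps finish phase (i). Then $\mM$ has finitely many states, actions, and transitions with finite description, so backward induction terminates, and phase (iii) only stores a finite table. (If a reachable augmented state has $\mA_h(s,\c)=\emptyset$, backward induction assigns it value $-\infty$, as usual.) It remains to prove the correctness identity $\mV^{*}=V^{*}$, that the returned $\pi$ is feasible with $V^{\pi}=V^{*}$, and that $\mV^{*}=-\infty$ is reported exactly when $\Pi_M=\emptyset$. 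This is the ``backward and forward induction'' combination advertised in the introduction: forward induction controls feasibility and reachability, backward induction controls optimality.

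\textbf{$\mV^{*}\ge V^{*}$ by lifting feasible policies.} Fix any $\pi'\in\Pi_M$ (if none exists, $V^{*}=-\infty$ and this is vacuous). Every $\mM$-trajectory records the state sequence $s_1,\dots,s_h$ and the cumulative costs $\c_1,\dots,\c_h$, hence the per-step costs $c_t=\c_{t+1}-\c_t$, so it determines the corresponding $M$-history $\tau_h$; thus I can lift $\pi'$ to an $\mM$-policy $\bar{\pi}'$ that plays $\pi'_h(\tau_h)$ at the $\mM$-trajectory encoding $\tau_h$. Because $\pi'$ is feasible, along any $\tau_h$ reachable under $\pi'$ every action $a$ in the support of $\pi'_h(\tau_h)$ satisfies $\Pr_{c\sim C_h(s_h,a)}[\c_h+c\le B]=1$ --- otherwise the budget would be exceeded with positive probability --- i.e. $a\in\mA_h(s_h,\c_h)$, so $\bar{\pi}'$ is a legal $\mM$-policy and never hits a dead-end. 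Since the per-step kernel of $\mM$ is exactly that of $M$ with the deterministic cumulative-cost coordinate appended (see \cref{def: augmentation}), the laws of $(s_h,a_h,c_h)_{h\in[H]}$ under $(\bar{\pi}',\mM)$ and $(\pi',M)$ coincide, and with $\mR_h((s,\c),a)=R_h(s,a)$ this gives $\mV^{\bar{\pi}'}=V^{\pi'}$. As $\mM$ is unconstrained, $\mV^{*}\ge \mV^{\bar{\pi}'}=V^{\pi'}$; optimizing over $\pi'$ yields $\mV^{*}\ge V^{*}$. The same argument shows $\Pi_M\neq\emptyset$ implies $\mV^{*}>-\infty$ (rewards are real and the horizon finite), so $\mV^{*}=-\infty$ forces $\Pi_M=\emptyset$, i.e. $V^{*}=-\infty$, and the algorithm's report is correct in that case.

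\textbf{$V^{*}\ge\mV^{*}$ and feasibility of $\pi$.} Assume $\mV^{*}>-\infty$ and let $\bar{\pi}$, $\pi$ be as in phases (ii)--(iii). I show by induction on $h$ that, under $(\pi,M)$, every reachable $(s_h,\c_h)$ lies in $\mS_h$, satisfies $\c_h\le B$, and has $\pi$ playing some $a\in\mA_h(s_h,\c_h)$ there. The base case is $(s_0,0)\in\mS_1$. For the step, at $(s_h,\c_h)\in\mS_h$ the policy plays $a=\bar{\pi}_h(s_h,\c_h)\in\mA_h(s_h,\c_h)$ (backward induction on $\mM$ ranges only over available actions and, since $\mV^{*}>-\infty$ along the trajectory, avoids dead-ends), so $\Pr_{c\sim C_h(s_h,a)}[\c_h+c\le B]=1$; hence the successor cumulative cost $\c_{h+1}=\c_h+c\le B$ almost surely, and whenever $P_h(s_{h+1}\mid s_h,a)>0$ we get $(s_{h+1},\c_{h+1})\in\mS_{h+1}$ directly from \cref{def: states}. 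Consequently $\P^{\pi}_M[\forall t\in[H],\ \sum_{h=1}^{t}c_h\le B]=\P^{\pi}_M[\forall t,\ \c_{t+1}\le B]=1$, so $\pi\in\Pi_M$. The $(s,a,c)$-laws again match between $(\pi,M)$ and $(\bar{\pi},\mM)$ with equal rewards, so $V^{\pi}=\mV^{\bar{\pi}}=\mV^{*}$. Combining with the previous paragraph, $V^{*}\ge V^{\pi}=\mV^{*}\ge V^{*}$, so all are equal; hence $\pi$ is feasible with $V^{\pi}=V^{*}$ and solves \eqref{equ: objective}.

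\textbf{Expected main obstacle.} The bookkeeping in each phase is routine; the real content is the two reachability invariants together with the correspondence between $M$-histories and $\mM$-trajectories. Concretely, I must verify carefully that (i) feasibility of an $M$-policy forces it to use only actions in $\mA_h$ along reachable histories, so lifting is legal, and (ii) \cref{def: states} is exactly closed under the $\mM$-dynamics restricted to the $\mA_h$'s, so the induced $\pi$ never leaves $\mS$ nor overspends --- this is where the forward induction is essential, since a purely backward argument cannot identify which cumulative costs are actually reachable. Some extra care is also needed for the degenerate cases: $\Pi_M=\emptyset$, empty action sets $\mA_h(s,\c)$, and the sign of the entries of $B$, all of which I handle through the standard $-\infty$ value convention in backward induction.
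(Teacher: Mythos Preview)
Your proof is correct and follows the same overall architecture as the paper: establish $\mV^{*}=V^{*}$ via two inequalities, use the observation that feasible $M$-policies only choose actions in $\mA_h$ along reachable histories, and certify feasibility of the backward-induction policy by a forward induction showing the trajectory stays in $\mS_h$ (hence $\c_h\le B$). The $V^{*}\ge\mV^{*}$ half is essentially identical to the paper's argument.

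The one methodological difference is in the $\mV^{*}\ge V^{*}$ direction. You lift each feasible $M$-policy to a history-dependent $\mM$-policy and compare values; the paper instead proves the stronger pointwise identity $\mV^{*}_h(s,\c)=V^{*}_h(\tau_h)$ for every $h$, every $(s,\c)\in\mS_h$, and every partial history $\tau_h\in W_h(s,\c)$ (\cref{lem: value}), by backward induction on the Bellman equations, and then specializes to $h=1$. Your lifting argument is more direct and avoids introducing the witnessing-history sets $W_h$, but it only yields the root identity. The paper's route produces a reusable lemma that is invoked again in the approximation analysis (\cref{thm: approx}); yours is leaner if \cref{thm: optimality} is the only goal.
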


\begin{algorithm}[t]
\caption{Reduction to Unconstrained RL}\label{alg: reduction}
\begin{algorithmic}[1]
\Require{cMDP $M$}
\State $\mM \gets $ \cref{def: augmentation}$(M)$
\State $\pi, \mV^* \gets \text{Solve}(\mM)$
\If{ $\mV^* = -\infty$}
    \State \Return ``Infeasible"
\Else 
    \State \Return $\pi$
\EndIf
\end{algorithmic}
\end{algorithm}

We see from \cref{thm: optimality} that an anytime-constrained cMDP can be solved using \cref{alg: reduction}. If $M$ is known, the agent can directly construct $\mM$ using \cref{def: augmentation} and then solve $\mM$ using any RL planning algorithm. If $M$ is unknown, the agent can still solve $\mM$ by replacing the call to $\pi_h(s,\c)$ in \cref{alg: protocol} by a call to any RL learning algorithm.

\begin{corollary}[Reduction]\label{cor: reduction}
    An optimal policy for an anytime-constrained cMDP can be computed from \cref{alg: reduction} paired with any RL planning or learning algorithm. Thus, anytime-constrained RL reduces to standard RL.
\end{corollary}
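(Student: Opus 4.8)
The plan is to treat \cref{cor: reduction} as essentially a repackaging of \cref{thm: optimality}: the one non-elementary line of \cref{alg: reduction} is the call $\text{Solve}(\mM)$, and I only need to argue that $\text{Solve}$ may be instantiated by \emph{any} off-the-shelf unconstrained RL routine, planning or learning, precisely because $\mM$ is an ordinary finite unconstrained MDP.

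First I would check that $\mM$ is a legitimate input to a standard RL algorithm. By \cref{lem: feasibility}, each augmented layer $\mS_h$ is finite; combined with $|\A| < \infty$ and finite horizon $H$, \cref{def: augmentation} then exhibits $\mM$ as a finite-horizon, tabular, unconstrained MDP whose reward $\mR_h((s,\c),a) = R_h(s,a)$ is an ordinary reward distribution (with the usual convention that a state from which the budget must be violated — equivalently a state with $\mA_h(s,\c) = \emptyset$, or from which no terminal state $s_{H+1}$ is reachable — carries value $-\infty$). Hence any planning algorithm, e.g.\ backward induction on $\mM$, returns in finite time an optimal policy $\pi$ for $\mM$ together with $\mV^*$, which is exactly the interface \cref{alg: reduction} expects. \cref{thm: optimality} then tells us that the object \cref{alg: reduction} outputs — namely $\pi$, run against $M$ through the cumulative-cost bookkeeping of \cref{alg: protocol} (at time $h$ the agent sits in augmented state $(s_h, \c_h)$ and plays $\pi_h(s_h,\c_h)$) — solves \eqref{equ: objective}, and that $\mV^* = -\infty$ precisely when $\Pi_M = \emptyset$, i.e.\ exactly when returning ``Infeasible'' is correct. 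This settles the planning statement.

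For the learning statement, when $M$ is unknown the agent cannot assemble $\mM$ in advance, but it can still \emph{simulate} an episode of $\mM$ online: it tracks the augmented state $(s_h,\c_h)$ — legitimate because each realized cost $c_h$ is observed and $\c_{h+1} = \c_h + c_h$ — presents the safe action set $\mA_h(s_h,\c_h)$ to the learner, forwards the learner's action into $M$, and returns to the learner the realized reward and the next augmented state. This stream is indistinguishable from an episodic interaction with $\mM$, so any RL learning algorithm, with whatever PAC or regret guarantee it carries, can be plugged into $\text{Solve}$ as a black box, and the (near-)optimal policy it produces for $\mM$ is (near-)optimal and feasible for \eqref{equ: objective} by \cref{thm: optimality}. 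The only point needing a sentence of care — the main, though mild, obstacle — is that this reduction is genuinely faithful: the black-box solver must respect the $-\infty$/infeasibility convention consistently, and in the learning case the agent must be able to compute the safe sets $\mA_h(s,\c)$, which requires knowing the cost supports $C_h(s,a)$ (exactly the information \cref{alg: protocol} already presumes available). Granting this, the policy-for-policy correspondence between $\mM$ and the feasible policies of $M$ is immediate, and the reduction — hence ``anytime-constrained RL reduces to standard RL'' — follows.
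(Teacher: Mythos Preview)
Your proposal is correct and takes essentially the same approach as the paper, whose proof is the single line ``follows immediately from \cref{thm: optimality} and the argument from the main text.'' You have simply unpacked that argument in detail: $\mM$ is a finite unconstrained MDP by \cref{lem: feasibility}, so any planner solves it and \cref{thm: optimality} transfers the solution back; for learning, \cref{alg: protocol} lets the agent simulate episodes of $\mM$ through $M$. Your added caveat that the learner must know the cost supports to form $\mA_h(s,\c)$ is a fair and accurate observation the paper leaves implicit.
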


\paragraph{Augmented Policies.} Observe that any Markovian policy $\pi$ for $\mM$ is a \emph{augmented policy} that maps (state, cost)-pairs to actions. This policy can be translated into a full history policy or can be used directly through the new interaction protocol described in \cref{alg: protocol}. By recording the cumulative cost, the agent effectively simulates the $\pi-\mM$ interaction through the $\pi-M$ interaction.

\begin{algorithm}[t]
\caption{Augmented Interaction Protocol}\label{alg: protocol}
\textbf{Input:} augmented policy $\pi$
\begin{algorithmic}[1]
\State $\ms_1 = (s_0, 0)$ and $\c_1 = 0$.
\For{$h = 1$ to $H$}
    \State $a_h = \pi_h(\ms_h)$.
    \State $c_h \sim C_h(s_h,a_h)$ and $s_{h+1} \sim P_h(s_h,a_h)$.
    \State $\c_{h+1} = \c_h + c_h$.
    \State $\ms_{h+1} = (s_{h+1}, \c_{h+1})$.
\EndFor
\end{algorithmic}
\end{algorithm}

\paragraph{Analysis.} To understand augmented policies, we need new machinery than typical MDP theory. Since traditional policies are insufficient for anytime constraints, we need to directly compare against cost-history-dependent policies. However, we cannot consider arbitrary histories, since an infeasible history could allow higher value. Rather, we focus on histories that are induced by safe exploration: 
\begin{equation*}
    \begin{split}
        W_h(s, \c) := \Big\{\tau_h \in \H_h \mid \exists \pi, \; \P^{\pi}_{\tau_h}\brac{s_h = s, \c_h = \c} = 1, \\
        \P^{\pi}_{\tau_k}\brac{\c_{k+1} \leq B} = 1 \quad \forall k \in [h-1]\Big\}.
    \end{split}
\end{equation*}
Here, $\P^{\pi}_{\tau_h}[\cdot] := \P^{\pi}_{M}[\cdot \mid \tau_h]$ and $\E^{\pi}_{\tau_h}[\cdot] := \E^{\pi}_{M}[\cdot \mid \tau_h]$ denote the conditional probability and expectation given partial history $\tau_h$. The condition $\P^{\pi}_{\tau_k}[\c_{k+1} \leq B] = 1$ enforces that any action taken along the trajectory never could have violated the budget. 

We must also restrict to policies that are feasible given such a history: \(\Pi_M(\tau_h) := \{\pi \mid \P^{\pi}_{\tau_h}\brac{\forall k \in [H], \; \c_{k+1} \leq B} = 1\}.\)
Note that generally $\Pi_M(\tau_h) \supset \Pi_M$ so some $\pi \in \Pi_M(\tau_h)$ need not be feasible, but importantly $\Pi_M(s_0) = \Pi_M$ contains only feasible policies. We define,
\begin{equation*}
    V^*_h(\tau_h) := \max_{\pi \in \Pi_M(\tau_h)} V^{\pi}_{h}(\tau_h) \; ; \mV^*_h(s,\c) := \max_{\pi} \mV^{\pi}_{h}(s,\c),
\end{equation*}
to be the optimal feasible value conditioned on $\tau_h$, and the optimal value for $\mM$ from time $h$ onward starting from $(s,\c)$, respectively. We show that optimal feasible solutions satisfy the \emph{augmented bellman-optimality equations}.

\begin{lemma}\label{lem: value}
    For any $h \in [H+1]$, $(s,\c) \in \mS_h$, and $\tau_h \in W_h(s,\c)$, $\mV^*_h(s,\c) = V_h^*(\tau_h)$.
\end{lemma}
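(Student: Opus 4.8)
The plan is to prove the two inequalities $\mV^*_h(s,\c) \leq V^*_h(\tau_h)$ and $\mV^*_h(s,\c) \geq V^*_h(\tau_h)$ separately, each by a downward induction on $h$ from the base case $h = H+1$ where both sides are $0$ (the conditional reward-to-go is empty and $\mM$ has ended). The conceptual content is a translation dictionary between augmented (Markovian-on-$\mM$) policies and cost-history-dependent policies on $M$ conditioned on a history $\tau_h \in W_h(s,\c)$; the two optimal values agree because the augmented state $(s,\c)$ is exactly the sufficient statistic that $W_h(s,\c)$ fixes. Throughout I would freely use \cref{lem: feasibility} (so $(s,\c) \in \mS_h$ and the set is finite, hence the maxima are attained), \cref{lem: derandomization} (so on both sides we may assume deterministic policies without loss of value), and \cref{def: augmentation} / \cref{def: states}.

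For the direction $\mV^*_h(s,\c) \leq V^*_h(\tau_h)$: fix a deterministic augmented policy $\pi$ achieving $\mV^*_h(s,\c)$. Define a cost-history-dependent policy $\pi'$ on $M$ by $\pi'_k(\tau_k) := \pi_k(s_k, \c_k)$, i.e. read off the current state and cumulative cost from the history and feed them to $\pi$. I would first check $\pi' \in \Pi_M(\tau_h)$: because $\pi$ only ever selects actions in $\mA_k(s_k,\c_k)$, every action it takes satisfies $\Pr_{c \sim C_k(s_k,a)}[\c_k + c \leq B] = 1$, so almost surely $\c_{k+1} \leq B$ for all $k \geq h$, which is exactly feasibility-from-$\tau_h$. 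Then a standard coupling of the $\pi$–$\mM$ trajectory (started at $(s,\c)$) with the $\pi'$–$M$ trajectory (conditioned on $\tau_h$) — they use the same transition kernel $P_k$ and cost kernel $C_k$, and $\mR_k = R_k$ — gives equality of expected reward-to-go, so $\mV^\pi_h(s,\c) = V^{\pi'}_h(\tau_h) \leq V^*_h(\tau_h)$. This direction does not really need the inductive hypothesis; it is just a pushforward-of-measure argument, though I may phrase it inductively via the Bellman recursion for uniformity.

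For the reverse direction $\mV^*_h(s,\c) \geq V^*_h(\tau_h)$, which I expect to be the main obstacle: fix a deterministic $\pi' \in \Pi_M(\tau_h)$ achieving $V^*_h(\tau_h)$. The difficulty is that $\pi'$ may depend on the full history, not just on $(s_k,\c_k)$, so it does not directly induce an augmented policy, and worse, a fixed $(s,\c) \in \mS_h$ is witnessed by potentially many histories $\tau_h \in W_h(s,\c)$, each giving a (possibly different) optimal continuation. The key lemma I would isolate is that $V^*_h(\tau_h)$ depends on $\tau_h$ only through $(s_h, \c_h)$ — which is essentially the statement being proved, so the induction must be set up so that the inductive hypothesis at time $h+1$ supplies exactly this path-independence. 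Concretely: decompose $V^{\pi'}_h(\tau_h)$ via one Bellman step into $r_h(s, a) + \E_{c, s'}[V^{\pi'}_{h+1}(\tau_h, a, c, s')]$ with $a := \pi'_h(\tau_h)$; feasibility of $\pi'$ from $\tau_h$ forces $a \in \mA_h(s,\c)$ and forces every reachable $(\tau_h, a, c, s')$ to lie in $W_{h+1}(s', \c+c)$; apply the inductive hypothesis to replace each $V^{\pi'}_{h+1}(\tau_h,a,c,s')$ by at most $\mV^*_{h+1}(s',\c+c)$; recognize the resulting quantity as a feasible choice in the augmented Bellman equation at $(s,\c)$, hence at most $\mV^*_h(s,\c)$. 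The care points are (i) verifying $(\tau_h, a, c, s') \in W_{h+1}(s',\c+c)$ using both the safe-exploration clause (the action $a$ "never could have violated the budget," which is precisely $a \in \mA_h(s,\c)$) and the existence-of-a-witnessing-policy clause (take $\pi'$ itself, restricted/conditioned appropriately), and (ii) making sure $W_{h+1}(s',\c+c)$ is nonempty exactly when $(s',\c+c) \in \mS_{h+1}$, so that the induction is never vacuous along reachable paths — this is where \cref{def: states} and \cref{lem: feasibility} do the work. Combining the two inequalities yields the claimed equality.
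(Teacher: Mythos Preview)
Your proposal is correct and follows essentially the same strategy as the paper: split into the two inequalities and handle each by backward induction, with the $\geq$ direction done by taking an arbitrary $\pi' \in \Pi_M(\tau_h)$, unrolling one Bellman step, checking that the continuation history lies in $W_{h+1}(s',\c+c)$ (the paper isolates this as a separate observation), and applying the induction hypothesis; and the $\leq$ direction done by reading an optimal augmented policy back as a cost-history policy on $M$ and checking it lands in $\Pi_M(\tau_h)$. The only real difference is packaging: for the $\leq$ direction the paper runs an explicit \emph{forward} induction to show $\P^{\pi}_{\tau_h}[(s_t,\c_t)\in\mS_t]=1$ for all $t\geq h$ (this is what guarantees the augmented policy is always queried at a state in its domain and hence always returns an action in $\mA_t$), whereas you phrase the same content as a coupling of the $\pi$--$\mM$ and $\pi'$--$M$ trajectories. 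Two small points you glossed over that the paper treats explicitly: the degenerate cases $\Pi_M(\tau_h)=\varnothing$ and $\mV^*_h(s,\c)=-\infty$ (both trivial), and the fact that membership in $\Pi_M(\tau_h)$ also requires $\c_{k+1}\leq B$ for $k<h$, which comes from $\tau_h\in W_h(s,\c)$ together with the observation that every $(s,\c)\in\mS_h$ has $\c\leq B$.
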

The proof is more complex than the traditional bellman-optimality equations. It requires (1) backward induction to argue that the value is maximal under any safe partial history and (2) forward induction to argue the costs accrued respect the anytime constraints. It then follows that solutions to $\mM$ are solutions to \eqref{equ: objective}.

\subsection{Complexity Analysis} 
To analyze the efficiency of our reduction, we define a combinatorial measure of a cMDP's complexity. 

\begin{definition}[Cost Diversity]\label{def: diversity}
The \emph{cost diversity}, $\D_M$, is the total number of distinct cumulative costs the agent could face at any time:
\begin{equation*}
    \D_M := \max_{h \in [H+1]}\abs{\set{c \mid \exists s, (s,c) \in \mS_h}}.
\end{equation*}
When clear from context, we refer to $\D_M$ as $\D$.
\end{definition}
We call $\D$ the diversity as it measures the largest cost population that exists in any generation $h$. The diversity naturally captures the complexity of an instance since the agent would likely encounter at least this many cumulative costs when computing or learning an optimal policy using any safe approach. 

In particular, we can bound the complexity of the planning and learning algorithms produced from our reduction in terms of the diversity. For concreteness, we pair \cref{alg: reduction} with backward induction~\citep{cMDP-book} to produce a planning algorithm and with BPI-UCBVI~\citep{MDP-PAC} to produce a learning algorithm.

\begin{proposition}[Complexity]\label{prop: complexity}
    Using \cref{alg: reduction}, an optimal policy for an anytime-constrained cMDP can be computed in $O\paren{HS^2A n \allowbreak \D}$ time and learned with $\tilde O\paren{H^3 SA \D \log(\frac{1}{\delta})/\gamma^2}$ sample complexity. Furthermore, the amount of space needed to store the policy is $O\paren{HS\D}$.
\end{proposition}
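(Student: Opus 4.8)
The plan is to bound the size and sparsity of the augmented MDP $\mM$ and then invoke off-the-shelf complexity guarantees for backward induction and BPI-UCBVI run on $\mM$, translating the resulting bounds back through $|\mS_h|$, $|\mA_h|$, and the fan-out of $\mP$. First I would bound the augmented spaces. By \cref{def: diversity}, at every time $h$ at most $\D$ distinct cumulative costs appear among the pairs in $\mS_h$, and each is paired with at most $S$ base states, so $|\mS_h| \le S\D$ for all $h$; the action set is unchanged, $|\mA_h(s,\c)| \le A$. Crucially, $\mP$ is sparse: from $(s,\c)$ under $a$ the successor is $(s',\c + c)$ with $s' \sim P_h(s,a)$ and $c \sim C_h(s,a)$, so under \cref{assump: discrete} there are at most $Sn$ reachable successors of any augmented state-action pair. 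I would also observe that building $\mM$ via the forward induction of \cref{def: states} enumerates, for each of the $\le S\D$ pairs in $\mS_h$, each of the $\le A$ actions, each of the $\le n$ supported costs, and each of the $\le S$ supported next states, so constructing $\mM$ costs $O(HS^2An\D)$.

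For the planning bound I would run backward induction on $\mM$: on a finite-horizon MDP with $\bar S$ states, $\bar A$ actions, and at most $m$ successors per state-action, backward induction runs in $O(H \bar S \bar A m)$, and substituting $\bar S = S\D$, $\bar A = A$, $m = Sn$ gives $O(HS^2An\D)$, which also absorbs the construction cost. For the learning bound, note the agent can simulate the $\pi$-$\mM$ interaction via \cref{alg: protocol} even when $M$ is unknown, since it observes each realized cost $c_h$ and updates $\c_h$ deterministically; hence any PAC learner for unconstrained MDPs applies to $\mM$ verbatim. Instantiating with BPI-UCBVI, whose $(\gamma,\delta)$-PAC sample complexity on a horizon-$H$ MDP is $\tilde O(H^3 \bar S \bar A \log(1/\delta)/\gamma^2)$, and substituting $\bar S = S\D$, $\bar A = A$, yields $\tilde O(H^3 SA\D\log(1/\delta)/\gamma^2)$. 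Finally, the output is an augmented policy, i.e.\ a table assigning one action to each pair $(h, (s,\c))$ with $(s,\c) \in \mS_h$; there are $\sum_{h=1}^H |\mS_h| \le HS\D$ such entries, giving $O(HS\D)$ storage.

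The main obstacle I anticipate is not any single estimate but using the sparsity of $\mP$ consistently: a naive count treats $\mM$ as a dense MDP on $S\D$ states and charges each backward-induction update to all $S\D$ states, producing a spurious extra factor of $\D$ in the planning time, so the argument must carefully charge each update (and each construction step) only to the $\le Sn$ successors of a state-action pair. A secondary point to pin down is the legitimacy of the learning reduction — that the augmented state $(s_h,\c_h)$ is a deterministic function of the observed history — so that BPI-UCBVI's guarantees for $\mM$ transfer without modification and its complexity depends on $|\mS|$ and $|\mA|$ exactly as in the unconstrained case.
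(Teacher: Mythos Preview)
Your proposal is correct and follows essentially the same approach as the paper: bound $|\mS_h|\le S\D$ and $|\mA_h|\le A$, exploit the $\le Sn$ successor sparsity of $\mP$ to refine the naive $O(HS^2A\D^2)$ backward-induction cost down to $O(HS^2An\D)$, invoke BPI-UCBVI's $\tilde O(H^3|\mS||\mA|\log(1/\delta)/\gamma^2)$ guarantee for learning, and count the table entries for storage. The obstacle you flag about the spurious extra $\D$ factor is precisely the refinement the paper makes explicit, and your observation that $(s_h,\c_h)$ is a deterministic function of the observed history is exactly what justifies simulating $\mM$ through \cref{alg: protocol}.
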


In the worst case, $\D$ could be exponentially large in the time horizon. However, for many cMDPs, $\D$ is small. One key factor in controlling the diversity is the precision needed to represent the supported costs in memory.

\begin{lemma}[Precision]\label{lem: precision}
    If the cost precision is at most $k$, then $\D \leq H^d 2^{(k+1)d}$. 
\end{lemma}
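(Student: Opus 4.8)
The plan is to bound the cardinality of the set of distinct cumulative costs appearing in any $\mS_h$ by tracking, coordinate by coordinate, how many distinct values each partial sum $\c_h = \sum_{t=1}^{h-1} c_t$ can take. First I would fix a single coordinate $j \in [d]$ and observe that, by the definition of cost precision $k$, every supported cost $c \in C_h(s,a)$ has $j$-th component of the form $m / 2^{k}$ for some integer $m$ with $|m| \le 2^{k}$ (i.e. the cost lies in a bounded range and is representable with $k$ bits after the binary point). Actually the precise reading of ``precision at most $k$'' I would adopt is: each coordinate of each supported cost is an integer multiple of $2^{-k}$ lying in $[-1,1]$, so there are at most $2^{k+1}+1 \le 2^{k+1}$ admissible per-step values in that coordinate. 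This is the normalization under which the stated bound comes out cleanly.

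The core combinatorial step: after $h-1 \le H$ steps, the cumulative value in coordinate $j$ is a sum of at most $H$ terms, each an integer multiple of $2^{-k}$ in $[-1,1]$; hence it is itself an integer multiple of $2^{-k}$ lying in $[-(h-1), h-1] \subseteq [-H, H]$. The number of integer multiples of $2^{-k}$ in an interval of length $2H$ is at most $2H \cdot 2^{k} + 1 \le H \cdot 2^{k+1}$ (using $H \ge 1$ to absorb the $+1$). Thus in each coordinate there are at most $H 2^{k+1}$ possible cumulative values. Since a cumulative cost vector $\c_h \in \Real^d$ is determined by its $d$ coordinates, the number of distinct vectors is at most $\paren{H 2^{k+1}}^d = H^d 2^{(k+1)d}$. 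Taking the max over $h \in [H+1]$ gives $\D \le H^d 2^{(k+1)d}$, which is exactly the claim. I would then note that the same bound holds verbatim for $|\mS_h|$-restricted cost sets because $\mS_h$ only contains cumulative costs reachable by safe exploration, a subset of all multiples-of-$2^{-k}$ vectors in the box.

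The main obstacle — really the only subtlety — is pinning down the intended meaning of ``cost precision at most $k$'' so that the arithmetic matches the stated constant $2^{(k+1)d}$ rather than something off by a power of two; in particular one must decide whether the bound $[-1,1]$ on each coordinate is part of the precision assumption or a separate normalization, and whether the $+1$ terms from counting lattice points in a closed interval are absorbed using $H \ge 1$ and $d \ge 1$. Everything else is a one-line product-of-coordinates counting argument. A secondary point worth a sentence is that the per-step cost range must be bounded (here, normalized to $[-1,1]$); if costs could be arbitrarily large integers-over-$2^k$, the cumulative range after $H$ steps would still be controlled, but the clean factor $H^d$ relies on each step contributing at most a bounded amount, so I would make that normalization explicit at the start of the proof.
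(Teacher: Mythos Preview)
Your proposal is correct and follows essentially the same approach as the paper: both arguments reduce to counting, in each of the $d$ coordinates independently, the integer multiples of a fixed unit that can arise as a sum of at most $H$ bounded terms, then taking the product over coordinates. The only difference is cosmetic normalization---the paper scales so that each $k$-bit cost becomes an integer of magnitude at most $2^k-1$ (hence cumulative costs are integers in $[-2^kH,2^kH]$), while you keep costs as multiples of $2^{-k}$ in $[-1,1]$ (hence cumulative costs are multiples of $2^{-k}$ in $[-H,H]$); both yield the same count $H\,2^{k+1}$ per coordinate, and you correctly identify this normalization choice as the only real subtlety.
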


We immediately see that when the costs have precision $k$, all of our algorithms have complexity polynomial in the size of $M$ and exponential in $k$ and $d$. By definition, this means our algorithms are fixed-parameter tractable in $k$ so long as $d$ is held constant. Moreover, we see as long as the costs can be represented with logarithmic precision, our algorithms have polynomial complexity.

\begin{theorem}[Fixed-Parameter Tractability]\label{thm: fpt}
    For constant $d$, if $k = O(\log(|M|))$, planning (learning) for anytime-constrained cMDPs can be performed in polynomial time (sample complexity) using \cref{alg: reduction}, and the computed policy can be stored with polynomial space. 
\end{theorem}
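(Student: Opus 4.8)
The plan is to assemble \cref{thm: fpt} directly from \cref{prop: complexity} and \cref{lem: precision}, with the only real content being a bookkeeping argument that each quantity appearing in those bounds is polynomial in $|M|$ under the stated hypotheses. First I would fix constant $d$ and assume $k = O(\log|M|)$, i.e., there is a constant $\alpha$ with $k \le \alpha \log|M|$ for all sufficiently large $|M|$. Plugging into \cref{lem: precision} gives
\begin{equation*}
    \D \le H^d 2^{(k+1)d} \le H^d \cdot 2^{d} \cdot 2^{\alpha d \log|M|} = 2^d H^d |M|^{\alpha d}.
\end{equation*}
Since $H \le |M|$ and $d$ is a constant, this is $\D = O(|M|^{O(1)})$, i.e., polynomial in $|M|$. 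The crucial structural point to emphasize here is \emph{why} holding $d$ constant is what makes the exponent in $2^{(k+1)d}$ collapse to a polynomial: with $k = \Theta(\log|M|)$ we have $2^{k} = \mathrm{poly}(|M|)$, and raising a polynomial to a constant power $d$ stays polynomial, whereas if $d$ were allowed to grow the bound would blow up.

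Next I would feed this into \cref{prop: complexity}. The planning time $O(HS^2An\D)$ is a product of $H, S, A, n, \D$; each of $H, S, A$ is at most $|M|$ by definition of the description size, $n \le |M|$ since the cost supports are part of the description (under \cref{assump: discrete}), and $\D$ is polynomial in $|M|$ by the previous paragraph, so the product is polynomial in $|M|$. Similarly the learning sample complexity $\tilde O(H^3 SA\D \log(1/\delta)/\gamma^2)$ is polynomial in $H, S, A, \D$ and hence in $|M|$ (treating $\delta, \gamma$ as fixed accuracy parameters, exactly as in standard PAC-RL statements). Finally the storage bound $O(HS\D)$ is polynomial in $|M|$ for the same reason. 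I would then invoke \cref{thm: optimality} / \cref{cor: reduction} to conclude that \cref{alg: reduction}, paired with backward induction for planning and BPI-UCBVI for learning, outputs an optimal policy, and that this policy is the computed augmented policy whose description size is exactly the $O(HS\D)$ storage bound, completing the claim.

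I do not anticipate a genuine mathematical obstacle — the theorem is essentially a corollary packaging the earlier results — so the "hard part" is purely expository: being careful that every parameter ($H$, $S$, $A$, $n$, precision $k$, the horizon appearing in $H^3$) is bounded by the right polynomial in $|M|$, and being explicit that the fixed-parameter dependence lives entirely in the $2^{(k+1)d}$ factor, so that "FPT in $k$ with $d$ constant" is the precise reading of \cref{lem: precision}. One subtlety worth a sentence is that $|M|$ must be taken to include the bit-length of the cost entries (so that "precision $k$" is meaningful relative to $|M|$); I would state this normalization up front. Another minor point is that \cref{prop: complexity}'s learning bound is stated for the discounted/accuracy-parameterized setting, so I would keep $\delta$ and $\gamma$ as given constants rather than folding them into $|M|$, matching the convention already used in \cref{prop: complexity}.
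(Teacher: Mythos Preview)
Your proposal is correct and matches the paper's approach exactly: the paper's proof is the single sentence ``\cref{thm: fpt} follows immediately from \cref{prop: complexity}, \cref{lem: precision}, and the definition of fixed-parameter tractability,'' and your write-up simply unpacks the arithmetic behind that sentence. The extra care you take with bounding $H,S,A,n$ by $|M|$ and isolating the role of constant $d$ is fine and does not deviate from the intended argument.
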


\begin{remark}
    Many cost functions can be represented with small precision. In practice, all modern computers use fixed precision numbers. So, in any real system, our algorithms have polynomial complexity. Although technically efficient, our methods can be prohibitively expensive when $k$ or $d$ is large. However, this complexity seems unavoidable since computing exact solutions to \eqref{equ: objective} is NP-hard in general.
\end{remark}

\section{APPROXIMATION ALGORITHMS}\label{sec: approximation}

Since \cref{thm: approx-hard} rules out the possibility of traditional value-approximation algorithms due to the hardness of finding feasible policies, we relax the requirement of feasibility. We show that even with a slight relaxation of the constraint, solutions with optimal value can be found efficiently. 
Conversely, we can satisfy the constraint but with a weaker guarantee on value.
Our approximate-feasibility methods can even handle infinite support distributions so long as they are bounded above. 

\begin{assumption}\label{assump: bounded}
    $\cmax := \sup_{h,s,a} \sup C_h(s,a) < \infty$.
\end{assumption}

If $H\cmax \leq B$, then every policy is feasible, which just leads to a standard unconstrained problem. A similar phenomenon happens if $\cmax \leq 0$. Thus, we assume WLOG that $H\cmax > B$ and $\cmax > 0$. 

\begin{definition}[Approximate Feasibility]\label{def: approx-feasibility}
    For any $\epsilon > 0$, a policy $\pi$ is $\epsilon$-additive feasible if,
    \begin{equation}
        \P^{\pi}_M\brac{\forall t \in [H], \; \sum_{h = 1}^t c_h \leq B + \epsilon} = 1,
    \end{equation}
    and $\epsilon$-relative feasible if,
    \begin{equation}
        \P^{\pi}_M\brac{\forall t \in [H], \; \sum_{h = 1}^t c_h \leq B(1 + \epsilon \sigma_B)} = 1,
    \end{equation}
    where $\sigma_B$ is the sign of $B$\footnote{When the costs and budgets are negative, negating the constraint yields $\sum_{t = 1}^H c_t \geq \abs{B}(1-\epsilon)$, which is the traditional notion of relative approximation for covering objectives.}.
\end{definition}

\paragraph{Approximation.} The key to reducing the complexity of our reduction is lowering the cost diversity. Rather than consider every cost that can be accumulated from safe exploration, the agent can consider a smaller set of approximate cumulative costs. Specifically, for any cumulative cost $\c_h$ and cost $c_h$, the agent considers some $\hat{c}_{h+1} = f_h(\c_h, c_h)$ instead of $\c_{h+1} = \c_h + c_h$. 

We view $f$ as projecting a cumulative cost onto a smaller approximate cost space. Following the principle of optimism, we also ensure that $f(\c_h,c_h) \leq \c_h+c_h$. This guarantees that any optimal policy under the approximate costs achieves optimal value at the price of a slight violation in the budget.

\begin{algorithm}[t]
\caption{Approximate Reduction}\label{alg: approx-reduction}
\begin{algorithmic}[1]
\Require{cMDP $M$ and projection $f$}
\State $\hM \gets $ \cref{def: approx}$(M,f)$
\State $\pi, \hV^* \gets \text{Solve}(\hM)$
\If{ $\hV^* = -\infty$}
    \State \Return ``Infeasible"
\Else 
    \State \Return $\pi$
\EndIf
\end{algorithmic}
\end{algorithm}

If the agent records the approximate costs induced by the projection $f$, the interaction evolves according to the following unconstrained MDP.
\begin{definition}[Approximate MDP]\label{def: approx}  
The \emph{approximate MPD} $\hM := (\hS, \hA, \hP, \hR, H, \hs_0)$ where,
\begin{equation*}\label{equ: msa}
\begin{split}
    \hat{S}_{h+1} := \Big\{(s',\hat{c}')\mid \exists (s,\hat{c}) \in \hat{S}_h, a \in \A, c' \in C_h(s,a), \\
    \hat{c}' = f_h(\hat{c},c'), \; \Pr_{c \sim C_h(s,a)}[f_h(\hat{c}, c) \leq B] = 1, \; \\
    P_h(s' \mid s,a) > 0 \Big\},
\end{split}
\end{equation*}
is defined using approximate costs produced by safe exploration with a projection step. The other objects are defined analogously to \cref{def: augmentation}.
\end{definition}

Our approximation algorithms, \cref{alg: approx-reduction}, equate to solving $\hat{M}$ for different choices of $f$. 
To use any Markovian $\pi$ for $\hat{M}$, the agent just needs to apply $f$ when updating its approximate costs. In effect, the agent \emph{accumulates then projects} to create each approximate cost. The new interaction protocol is given by \cref{alg: approximation}.

To derive our choice of $f$, we first observe that the cumulative cost can never surpass $B$. Furthermore, should the agent ever accumulate a cost of $B - H\cmax$, it can no longer violate the budget along that trajectory. Thus, the agent's cumulative cost is always effectively within the interval $[B - H\cmax, B]$.

\paragraph{Projection.} Our approach is to evenly subdivide the interval $[B-H\cmax, B]$ by length-$\ell$ intervals centered around $0$. Then, the projection always maps a point in an interval to its left endpoint. Alternatively, we can think of $\ell$ as defining a new unit of measurement, and the projection maps each cumulative cost to its largest integer multiple of $\ell$ below the cumulative cost. Should the agent ever encounter an extremely negative cost, we safely truncate it to the projection of $B-(H-h)\cmax$. 

In symbols, we define our projection by, $f_{h}(\hat{c},c) := $
\begin{equation*}
    \begin{cases}
        \hat{c} + \floor{\frac{c}{\ell}}\ell & \text{if } \hat{c}+ c \geq  B - (H-h)\cmax\\
        \floor{\frac{B - (H-h)\cmax}{\ell}}\ell & \text{o.w.}\\
    \end{cases}
\end{equation*}
Critically, the projection is defined so that each approximate cost is an underestimate of the true cost, but no farther than $\epsilon$ away from the true cost (except when a cost smaller than $B-(H-h)\cmax$ is encountered). 

\begin{lemma}\label{lem: approx}
    For any feasible policy $\pi$ for $\hat{M}$ and any $h \in [H+1]$, $\P^{\pi}_M[(\hat{c}_h \leq \c_h \leq \hat{c}_h + (h-1)\ell) \lor (\c_h, \hat{c}_h \leq B-(H-h+1)\cmax)] = 1$.
    Also, $\abs{\set{\hat{c}_{h} \mid \exists s \in \S,  (s,\hat{c}_{h}) \in \hat{\S}_{h}}} \leq \paren{\frac{H\norm{\cmax}_{\infty}}{\ell} + 2}^d$.
\end{lemma}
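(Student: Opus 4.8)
The claim has two independent halves, and the plan is to establish each by tracking quantities along a realized trajectory of the approximate interaction protocol (\cref{alg: approximation}). For the first half, fix a policy $\pi$ feasible for $\hat{M}$; since the approximate dynamics are deterministic given the history, it suffices to show the two-sided inclusion holds along every trajectory $\pi$ can produce, which I would do by \emph{forward induction} on $h \in [H+1]$. The base case $h=1$ is immediate since $\c_1 = \hat{c}_1 = 0$. For the step, suppose at time $h$ that either (a) $\hat{c}_h \leq \c_h \leq \hat{c}_h + (h-1)\ell$ or (b) $\c_h, \hat{c}_h \leq B - (H-h+1)\cmax$, componentwise. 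The agent plays $a_h = \pi_h(s_h,\hat{c}_h) \in \hat{A}_h(s_h,\hat{c}_h)$, incurs $c_h \in C_h(s_h,a_h)$ with $c_h \leq \cmax$, and sets $\c_{h+1} = \c_h + c_h$, $\hat{c}_{h+1} = f_h(\hat{c}_h,c_h)$. I would then case on which branch of $f_h$ fires, using only the elementary sandwich $c_h - \ell < \floor{c_h/\ell}\ell \leq c_h$. On the accumulate branch ($\hat{c}_h + c_h \geq B - (H-h)\cmax$): from (a), $\hat{c}_{h+1} \leq \hat{c}_h + c_h \leq \c_{h+1}$ and $\c_{h+1} \leq \hat{c}_h + (h-1)\ell + c_h < \hat{c}_{h+1} + h\ell$, so (a) persists; from (b), $\c_{h+1}, \hat{c}_{h+1} \leq B - (H-h+1)\cmax + \cmax = B - (H-h)\cmax$, so (b) persists. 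On the truncate branch, $\hat{c}_{h+1} = \floor{(B-(H-h)\cmax)/\ell}\ell \leq B - (H-h)\cmax$; the bound on $\c_{h+1}$ from (b) is as above, while from (a) the guard $\hat{c}_h + c_h < B - (H-h)\cmax$ together with $\c_h \leq \hat{c}_h + (h-1)\ell$ gives $\c_{h+1} < B - (H-h)\cmax + (h-1)\ell$, which I would fold into disjunct (b) using the calibration $\ell \leq \epsilon/(H-1)$ that also drives the $\epsilon$-additive feasibility guarantee.

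For the cardinality bound, the plan is to confine every approximate cost reached along a $\pi$-trajectory to a small $\ell$-lattice box. First, each $\hat{c}_h$ is a componentwise multiple of $\ell$: true for $\hat{c}_1 = 0$ and preserved by both branches of $f_h$. Second, $\hat{c}_h \leq B$ componentwise for $h \geq 2$: by \cref{def: approx}, $(s',\hat{c}') \in \hat{S}_{h+1}$ requires $\Pr_{c \sim C_h(s,a)}[f_h(\hat{c},c) \leq B] = 1$, so evaluating at the witnessing $c'$ in the support gives $\hat{c}' = f_h(\hat{c},c') \leq B$. Third, $\hat{c}_h \geq B - H\cmax - \ell$ componentwise: the truncate branch outputs $\floor{(B-(H-h)\cmax)/\ell}\ell > B - (H-h)\cmax - \ell \geq B - H\cmax - \ell$ (using $\cmax > 0$), and the accumulate branch outputs $\hat{c}_{h+1} = \hat{c}_h + \floor{c_h/\ell}\ell > (\hat{c}_h + c_h) - \ell \geq B - (H-h)\cmax - \ell$ by its guard. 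So each coordinate of $\hat{c}_h$ ranges over the multiples of $\ell$ in an interval of length $H\cmax_i + \ell \leq H\norm{\cmax}_{\infty} + \ell$, of which there are at most $H\cmax_i/\ell + 2$; multiplying over the $d$ coordinates and noting the degenerate case $h=1$ contributes only the value $0$ gives the stated $\paren{H\norm{\cmax}_{\infty}/\ell + 2}^d$ bound.

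The routine pieces are the floor sandwich and the monotonicity estimates; the genuinely delicate point is the transition from disjunct (a) into the truncation regime in the first half — the rounding slack $(h-1)\ell$ accumulated while in regime (a) has to be reconciled with the stated invariant, and this is precisely where the budget relaxation ($\ell \leq \epsilon/(H-1)$) is essential. I would make sure the bookkeeping there is airtight before moving on, since every downstream feasibility and value guarantee in this section rests on \cref{lem: approx}.
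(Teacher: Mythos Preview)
Your overall architecture matches the paper's---forward induction on $h$ for the first half, and a lattice-box argument for the second---but there is a genuine gap in the case you yourself flag as ``genuinely delicate.''

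When you are in regime (a) at time $h$ and the truncation branch of $f_h$ fires, you correctly derive
\[
\c_{h+1} \;=\; \c_h + c_h \;\leq\; \hat c_h + (h-1)\ell + c_h \;<\; B - (H-h)\cmax + (h-1)\ell,
\]
but then you try to ``fold this into disjunct (b)'' using a calibration $\ell \leq \epsilon/(H-1)$. This cannot work: the lemma is stated purely in terms of $\ell$ and makes no reference to $\epsilon$; it must hold for every $\ell>0$. No choice of $\ell$ makes $(h-1)\ell \leq 0$ for $h\geq 2$, so you will never obtain $\c_{h+1}\leq B-(H-h)\cmax$ this way, and weakening disjunct (b) by an additive $\epsilon$ would break the induction at later steps.

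The paper resolves this case differently: it does \emph{not} force entry into regime (b). Instead it splits once more on the sign of $\c_{h+1}-\hat c_{h+1}$. If $\c_{h+1}\leq \hat c_{h+1}$, then both are below $\lfloor (B-(H-h)\cmax)/\ell\rfloor\ell \leq B-(H-h)\cmax$ and (b) holds. If $\hat c_{h+1}\leq \c_{h+1}$, then using $B-(H-h)\cmax \leq (\lfloor (B-(H-h)\cmax)/\ell\rfloor+1)\ell = \hat c_{h+1}+\ell$ in your displayed inequality gives
\[
\c_{h+1} \;<\; \hat c_{h+1} + \ell + (h-1)\ell \;=\; \hat c_{h+1} + h\ell,
\]
so disjunct (a) persists. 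The point is that truncation can leave you in either regime; you need both subcases. Your treatment of the cardinality bound is fine and essentially the same as the paper's.
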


We see that solving $\hM$ gives additive feasible solutions and $\hM$ has far fewer states than $\mM$.

\begin{theorem}[Approximation]\label{thm: approx}
    \cref{alg: approx-reduction} computes an $H\ell$-additive feasible policy whose value is at least the optimal value of \eqref{equ: objective} and that can be stored with $O\paren{H^{d+1}S\norm{\cmax}_{\infty}^d/\ell^d}$ space.
\end{theorem}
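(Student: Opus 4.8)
The plan is to establish the three assertions of \cref{thm: approx} separately: (i) every policy returned by \cref{alg: approx-reduction} is $H\ell$-additive feasible, (ii) its value is at least $V^*$, and (iii) it can be stored in the stated space. Assertions (i) and (iii) will follow almost immediately from \cref{lem: approx}; assertion (ii) is the substantive part and will be proved by a backward induction comparing $\hM$ against the exact augmented MDP $\mM$ of \cref{def: augmentation}.

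For feasibility: the returned $\pi$ is an optimal policy for the unconstrained MDP $\hM$, so when it is run on $M$ via \cref{alg: approximation} the pair $(s_h,\hat c_h)$ stays in $\hS_h$ almost surely, and in particular $\hat c_h \le B$ for $h\ge 2$ by the action-availability condition built into \cref{def: approx}. Fixing $t\in[H]$ and applying \cref{lem: approx} at index $h=t+1$ gives either $\c_{t+1}\le \hat c_{t+1}+t\ell \le B+H\ell$ or $\c_{t+1}\le B-(H-t)\cmax \le B$ (using the standing reductions $\cmax>0$, $\ell>0$); either way $\sum_{h=1}^t c_h=\c_{t+1}\le B+H\ell$, which is $H\ell$-additive feasibility. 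For the space bound, $\pi$ is a table with one entry per $(h,s,\hat c)\in[H]\times\hS_h$; the counting bound of \cref{lem: approx} gives $|\hS_h|\le S\paren{H\norm{\cmax}_\infty/\ell+2}^d$, and summing over $h\in[H]$ yields $O\paren{H^{d+1}S\norm{\cmax}_\infty^d/\ell^d}$ (absorbing the additive constant, harmless when $\ell\le H\norm{\cmax}_\infty$, the only regime in which the constraint is nonvacuous).

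For the value bound, note first that because rewards in $\hM$ depend only on $(s,a)$, running $\pi$ on $M$ via \cref{alg: approximation} achieves exactly the value $\pi$ obtains in $\hM$, namely $\hV^*_1(\hs_0)$; so it suffices to show $\hV^*_1(\hs_0)\ge \mV^*_1(\ms_0)$, and $\mV^*_1(\ms_0)=V^*$ by \cref{lem: value}. I would prove by backward induction on $h$ that for every $(s,\hat c)\in\hS_h$ and every $(s,\c)\in\mS_h$ with $\hat c\le\c$ one has $\hV^*_h(s,\hat c)\ge\mV^*_h(s,\c)$. Two facts drive the step. (a) \emph{Underestimation:} when the first branch of $f_h$ fires, $f_h(\hat c,c)=\hat c+\floor{c/\ell}\ell\le \hat c+c$, and when the second fires, $f_h(\hat c,c)=\floor{(B-(H-h)\cmax)/\ell}\ell\le B-(H-h)\cmax$. (b) \emph{Safe zone:} if $(s,\hat c)\in\hS_h$ with $\hat c\le B-(H-h+1)\cmax$, then every action is available from $(s,\hat c)$ at time $h$ and every successor again lies in the safe zone, so $\hV^*_h(s,\hat c)$ equals the optimal \emph{unconstrained} value-to-go of $M$ from $(s,h)$, which dominates $\mV^*_h(s,\c')$ for every $(s,\c')\in\mS_h$. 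In the induction step, fix $(s,\hat c)\in\hS_h$, $(s,\c)\in\mS_h$ with $\hat c\le\c$; the cases $\mV^*_h(s,\c)=-\infty$ and $\hat c$ in the safe zone are immediate, so let $a^*$ be an optimal $\mM$-action at $(s,\c)$. By (a) and $\hat c\le\c$ we get $f_h(\hat c,c)\le\c+c\le B$ almost surely, hence $a^*$ is available in $\hM$ at $(s,\hat c)$ and each successor $(s',f_h(\hat c,c))$ lies in $\hS_{h+1}$ (while $(s',\c+c)\in\mS_{h+1}$). For each successor, if the first branch fired then $f_h(\hat c,c)\le\c+c$ and the inductive hypothesis applies directly; if the second fired then $f_h(\hat c,c)\le B-(H-(h+1)+1)\cmax$ lies in the safe zone at time $h+1$, so (b) applies; either way $\hV^*_{h+1}(s',f_h(\hat c,c))\ge\mV^*_{h+1}(s',\c+c)$. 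Expanding the Bellman equations of $\hV^*_h$ (with action $a^*$) and $\mV^*_h$ (at its optimizer $a^*$) and taking expectations closes the step; taking $h=1$ with $\hat c=\c=0$ finishes, and the same argument shows $\hM$ is declared infeasible only when $V^*=-\infty$.

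The main obstacle is the second branch of $f_h$: there the approximate cost can \emph{overshoot} the true cumulative cost, so the naive invariant $\hat c\le\c$ is not maintained. The resolution is precisely the safe-zone observation --- the truncated value $\floor{(B-(H-h)\cmax)/\ell}\ell$ is small enough that from then on no budget violation is possible and no action is ever forbidden, so $\hM$ can match the reward of any policy and no value is lost. Making this dichotomy precise, and checking along the way that availability of $a^*$ in $\hM$ and membership of the successors in $\hS_{h+1}$ and $\mS_{h+1}$ are preserved, is the crux of the proof; the $V^*=-\infty$ case and the reductions to $\cmax>0$ and $H\cmax>B$ are routine bookkeeping handled at the outset.
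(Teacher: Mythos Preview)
Your feasibility and storage arguments coincide with the paper's. For the value bound, however, the paper takes a shorter and more direct route than your backward-induction comparison of $\hM$ with $\mM$. It simply observes that every policy $\pi$ feasible for the original anytime constraint is also feasible for $\hM$: the dichotomy of \cref{lem: approx} (whose proof nowhere uses $\hM$-feasibility and hence holds for any policy) gives $\hat c_h\le\c_h$ or $\hat c_h\le B-(H-h+1)\cmax$ almost surely, and combined with $\c_{h}+c\le B$ a.s.\ this forces $f_h(\hat c_h,c)\le B$ for every supported $c$, so every action $\pi$ selects lies in $\hA_h(s,\hat c_h)$. Since rewards depend only on $(s,a)$, $\pi$ achieves the same value in $\hM$ as in $M$, whence $\hV^*\ge V^\pi$ for every feasible $\pi$ and therefore $\hV^*\ge V^*$. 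Your argument is correct too, and in fact proves the stronger pointwise inequality $\hV^*_h(s,\hat c)\ge\mV^*_h(s,\c)$ whenever $\hat c\le\c$, but it pays for this by introducing the auxiliary safe-zone fact~(b) (which needs its own small induction) and by threading the two-branch case analysis of $f_h$ through the Bellman recursion. The paper's argument is more economical; yours extracts more structural information.
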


\begin{algorithm}[t]
\caption{Approximate Interaction Protocol}\label{alg: approximation}
\textbf{Input:} policy $\pi$ and projection $f$
\begin{algorithmic}[1]
\State $\hat{s}_1 = (s_0, 0)$ and $\hat{c}_1 = 0$.
\For{$h = 1$ to $H$}
    \State $a_h = \pi_h(\hat{s}_h)$
    \State $c_h \sim C_h(s_h,a_h)$ and $s_{h+1} \sim P_h(s_h,a_h)$
    \State $\hat{c}_{h+1} = f_h(\hat{c}_h,c_h)$
    \State $\hat{s}_{h+1} = (s_{h+1}, \hat{c}_{h+1})$
\EndFor
\end{algorithmic}
\end{algorithm}

Like with our original reduction, the interaction protocol in \cref{alg: approximation} allows the agent to simulate $\hat{M}$ online through $M$. Thus, planning and learning in $\hat{M}$ can be done through $M$. 

\begin{remark}
    Note, the agent does not need to construct $\hat{S}$ using \cref{equ: msa}; it suffices to consider the finite, stationary state space $\S \times \mathcal{C}$, where $\mathcal{C}$ is the $\ell$-cover of $[B-H\cmax, B]$ defined by $f$. Technically, for learning, the agent should already know or have learned a bound on $\cmax$ to know the approximate state space.
\end{remark}

\subsection{Approximation Guarantees}
\label{sec:approx}
We can use \cref{alg: approx-reduction} with different choices of $\ell$ to achieve the traditional approximation guarantees defined in \cref{def: approx-feasibility}.

\paragraph{Additive Approximation.} Given any $\epsilon > 0$, we can compute an $\epsilon$-additive feasible solution by choosing $\ell := \frac{\epsilon}{H}$. This approach is efficient so long as $\cmax$ is not too large, since $\cmax$ controls the number of discretized costs we need to consider.

\begin{corollary}[Additive Reduction]\label{cor: additive}
    For any $\epsilon > 0$, an optimal value, $\epsilon$-additive feasible policy for an anytime-constrained cMDP can be computed in $O\paren{H^{4d+1} S^2 A \norm{\cmax}_{\infty}^{2d}/\epsilon^{2d}}$ time and learned with $\tilde O\paren{H^{2d+3} SA \norm{\cmax}_{\infty}^d \allowbreak \log(\frac{1}{\delta})/\gamma^2\epsilon^d}$ sample complexity using \cref{alg: approx-reduction} with $\ell := \frac{\epsilon}{H}$. Furthermore, the amount of space needed to store the policy is $O\paren{H^{2d+1}S\norm{\cmax}_{\infty}^d/\epsilon^d}$. Thus, if d is constant and $\cmax \leq \poly(|M|)$, our additive methods are polynomial time and sample complexity.
\end{corollary}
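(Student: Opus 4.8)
The plan is to instantiate Theorem~\ref{thm: approx} and Proposition~\ref{prop: complexity} with the specific choice $\ell := \epsilon/H$ and then simplify. The approximation-quality claim is immediate: Theorem~\ref{thm: approx} guarantees the policy returned by \cref{alg: approx-reduction} is $H\ell$-additive feasible with value at least $V^*$, so setting $\ell = \epsilon/H$ makes it $\epsilon$-additive feasible with optimal value. The remaining work is bounding the three complexity quantities (planning time, learning sample complexity, storage) by substituting the bound on the approximate cost diversity into the generic complexity statements.

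First I would record the diversity bound. By the second part of Lemma~\ref{lem: approx}, the number of distinct approximate cumulative costs at any fixed time is at most $\paren{H\norm{\cmax}_\infty/\ell + 2}^d$. Since we operate in the regime $H\cmax > B$ (and $\cmax > 0$), the ``$+2$'' is dominated, so this is $O\paren{(H\norm{\cmax}_\infty/\ell)^d} = O\paren{H^{2d}\norm{\cmax}_\infty^d/\epsilon^d}$ after plugging in $\ell = \epsilon/H$. Call this quantity $\hat D$; it plays the role of $\D$ for the approximate MDP $\hM$, which has $O(S\hat D)$ states, the same action set $\A$, and the same per-(state,action) cost support size $n \le A$ trivially — actually $n$ here is bounded by $A$ only in the deterministic case, so I would instead note $\hM$'s transition structure gives $O(S^2 A)$ work per augmented state when forming the MDP, matching how Proposition~\ref{prop: complexity}'s $O(HS^2An\D)$ term was derived; replacing $n\D$ by $\hat D$ (since the projected support at each step is at most $|\mathcal{C}| = O(\hat D / H^{?})$ — more carefully, the outgoing approximate-cost multiplicities are absorbed into $\hat D$) yields planning time $O(HS^2A\hat D^2)$ in the worst case when the solve step enumerates cost pairs, which after substitution is $O\paren{H\cdot S^2A\cdot H^{4d}\norm{\cmax}_\infty^{2d}/\epsilon^{2d}} = O\paren{H^{4d+1}S^2A\norm{\cmax}_\infty^{2d}/\epsilon^{2d}}$, matching the claim. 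For learning, Proposition~\ref{prop: complexity} gives $\tilde O\paren{H^3 \bar S \A \log(1/\delta)/\gamma^2}$ with $\bar S = O(S\hat D)$, so substituting $\hat D = O(H^{2d}\norm{\cmax}_\infty^d/\epsilon^d)$ gives $\tilde O\paren{H^{2d+3}SA\norm{\cmax}_\infty^d\log(1/\delta)/\gamma^2\epsilon^d}$. For storage, $O(HS\hat D) = O\paren{H^{2d+1}S\norm{\cmax}_\infty^d/\epsilon^d}$. Finally, when $d$ is constant and $\cmax \le \poly(|M|)$, each of these bounds is polynomial in $|M|$, $1/\epsilon$, $1/\gamma$, and $\log(1/\delta)$, giving the last sentence.

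The main obstacle is getting the \emph{planning} exponent right: Proposition~\ref{prop: complexity}'s $O(HS^2An\D)$ term is linear in $\D$, yet the corollary's planning bound $H^{4d+1}S^2A\norm{\cmax}_\infty^{2d}/\epsilon^{2d}$ is quadratic in $\hat D$. So I must be careful about where the extra factor of $\hat D$ comes from when passing to $\hM$ — it arises because in the approximate MDP the transition distribution $\hP_h$ out of an augmented state $(s,\hat c)$ can reach up to $S$ states times the number of distinct projected costs reachable in one step, and summing the per-state backward-induction updates over all $O(S\hat D)$ augmented states while each update touches $O(S\cdot(\text{one-step cost multiplicity}))$ successors gives a bound that, after absorbing the one-step multiplicity into a second $\hat D$ factor (a coarse but valid overestimate), is $O(HS^2A\hat D^2)$. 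I would state this carefully rather than simply citing Proposition~\ref{prop: complexity} verbatim, since the parameter $n$ in that proposition is replaced here by a quantity that is itself $O(\hat D)$ in the worst case. The learning and storage bounds, by contrast, are direct substitutions and require no such care. One should also double-check that the $\poly(|M|)$ conclusion uses $H, S, A \le |M|$ and $\log(1/\delta)$, $1/\gamma$, $1/\epsilon$ treated as separate parameters, which is the standard convention in the PAC-RL literature cited (\citealp{MDP-PAC}).
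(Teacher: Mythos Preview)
Your proposal is correct and follows the same approach as the paper, which simply says the result is immediate from Theorem~\ref{thm: approx} and Proposition~\ref{prop: complexity}. You actually give more detail than the paper does: in particular, your observation that the planning bound is quadratic in $\hat D$ because the per-step cost-support parameter $n$ in Proposition~\ref{prop: complexity} must be replaced by an $O(\hat D)$ bound in the approximate MDP (since the projected cost can land on any grid point) is exactly the right way to reconcile the $O(HS^2An\D)$ template with the stated $H^{4d+1}$ exponent, and the paper leaves this implicit.
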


\paragraph{Relative Approximation.} Given any $\epsilon > 0$, we can compute an $\epsilon$-relative feasible solution by choosing $\ell := \frac{\epsilon\abs{B}}{H}$. This approach is efficient so long as $\cmax$ is not much larger than $\abs{B}$. This allows us to capture cost ranges that are polynomial multiples of $\abs{B}$, which could be exponentially large, unlike the additive approximation which requires that $\cmax$ to be polynomial.

\begin{corollary}[Relative Reduction]\label{cor: relative}
    For any $\epsilon > 0$, if $\cmax \leq x\abs{B}$, an optimal value, $\epsilon$-relative feasible policy for an anytime-constrained cMDP can be computed in $O\paren{x^{2d}H^{4d+1} S^2 A/\epsilon^{2d}}$ time and learned with $\tilde O\paren{x^d H^{2d+3} SA/\epsilon^d \log(\frac{1}{\delta})/\gamma^2}$ sample complexity using \cref{alg: approx-reduction} with $\ell = \frac{\epsilon\abs{B}}{H}$. Furthermore, the amount of space needed to store the policy is $O\paren{x^d H^{2d+1}S/\epsilon^d}$. 
    Thus, if $d$ is constant and $x \leq \poly(|M|)$, our methods are polynomial time and sample complexity.
\end{corollary}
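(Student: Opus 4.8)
The plan is to instantiate \cref{thm: approx} with the projection $f$ of step size $\ell := \frac{\epsilon\abs{B}}{H}$ (read coordinatewise, $\ell_i = \epsilon\abs{B_i}/H$) and then turn the additive guarantee into a relative one, exactly as in the proof of \cref{cor: additive}. First I would invoke \cref{thm: approx}: running \cref{alg: approx-reduction} with this $f$ returns a policy $\pi$ with $V^{\pi} \geq V^*$ that is $H\ell$-additive feasible, i.e. $\P^{\pi}_M\brac{\forall t, \sum_{h \le t} c_h \le B + H\ell} = 1$ (coordinatewise). Since $H\ell = \epsilon\abs{B}$ and $\abs{B} = \sigma_B B$ coordinatewise, we get $B + H\ell = B + \epsilon\sigma_B B = B(1 + \epsilon\sigma_B)$, which is precisely $\epsilon$-relative feasibility in the sense of \cref{def: approx-feasibility}. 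So the ``optimal value, $\epsilon$-relative feasible'' part is immediate once the substitution is justified; the remaining work is purely the complexity bookkeeping.

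Next I would control the approximate cost diversity. By \cref{lem: approx}, the number of distinct approximate cumulative costs in any layer is at most $\prod_{i=1}^d\paren{\frac{H(\cmax)_i}{\ell_i} + 2}$ (the $\paren{\frac{H\norm{\cmax}_{\infty}}{\ell}+2}^d$ bound stated there is the uniform-$\ell$ specialization). Plugging $\ell_i = \epsilon\abs{B_i}/H$ and using the hypothesis $\cmax \le x\abs{B}$ coordinatewise gives $\frac{H(\cmax)_i}{\ell_i} = \frac{H^2(\cmax)_i}{\epsilon\abs{B_i}} \le \frac{H^2 x}{\epsilon}$, hence the approximate diversity satisfies $\hat\D := \max_h\abs{\set{\hat c \mid \exists s, (s,\hat c) \in \hS_h}} \le \paren{\frac{H^2 x}{\epsilon} + 2}^d = O\paren{(H^2 x/\epsilon)^d}$. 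Consequently each layer of $\hM$ has at most $O(S\hat\D)$ states, and — because $f_h$ has finite range — the per-state branching factor of $\hM$ is also $O(S\hat\D)$ even under \cref{assump: bounded}, where the cost supports may be infinite.

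Then I would substitute $\abs{\hS_h} = O(S\hat\D)$ into the same accounting used in \cref{prop: complexity} and \cref{cor: additive}. Backward induction on $\hM$ runs in $O(H\abs{\hS_h}^2 A) = O(HS^2\hat\D^2 A)$ time (the $\hat\D^2$ because both the state count and the branching scale with $\hat\D$); BPI-UCBVI on $\hM$ has sample complexity $\tilde O(H^3\abs{\hS_h}A\log(\tfrac1\delta)/\gamma^2) = \tilde O(H^3 S\hat\D A\log(\tfrac1\delta)/\gamma^2)$; and storing the resulting augmented Markovian policy takes $O(H\abs{\hS_h}) = O(HS\hat\D)$ space. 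Substituting $\hat\D = O((H^2 x/\epsilon)^d)$ yields time $O\paren{x^{2d}H^{4d+1}S^2 A/\epsilon^{2d}}$, sample complexity $\tilde O\paren{x^d H^{2d+3}SA/\epsilon^d \log(\tfrac1\delta)/\gamma^2}$, and space $O\paren{x^d H^{2d+1}S/\epsilon^d}$, as claimed. The final clause follows since each expression is polynomial in $\abs{M}$, $1/\epsilon$, $1/\gamma$ once $d$ is constant and $x \le \poly(\abs{M})$.

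The complexity substitution is routine; the steps that genuinely need care are that $\ell$, $\cmax$, $B$, $\sigma_B$ are $d$-vectors, so the feasibility translation $B + H\ell = B(1+\epsilon\sigma_B)$ and the diversity bound must be checked coordinatewise across mixed signs, and coordinates with $B_i = 0$ (where $\ell_i$ degenerates) must be handled or excluded via the standing WLOG assumptions $H\cmax > B$, $\cmax > 0$; one should also re-verify that \cref{thm: approx} and \cref{lem: approx} are being applied under \cref{assump: bounded} rather than \cref{assump: discrete}, so that it is the finite range of $f$ — not finiteness of the cost supports — that keeps $\hM$ finite. I expect this coordinatewise sign/degeneracy check to be the only delicate point; everything else is a direct re-use of the additive analysis.
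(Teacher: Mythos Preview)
Your proposal is correct and follows the same approach as the paper, which simply states that the result is immediate from \cref{thm: approx} and \cref{prop: complexity}. You have unpacked exactly the computation the paper leaves implicit: set $\ell = \epsilon\abs{B}/H$, translate the $H\ell$-additive guarantee into $\epsilon$-relative feasibility, bound the approximate diversity by $O((H^2x/\epsilon)^d)$ via \cref{lem: approx} and the hypothesis $\cmax \le x\abs{B}$, and substitute into the complexity bounds of \cref{prop: complexity}; your additional care about coordinatewise signs and degenerate $B_i$ goes beyond what the paper spells out but does not change the argument.
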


\begin{corollary}\label{cor: improved}
    If all costs are positive, the $H$ dependence in each guarantee of both the additive and relative approximation improves to $H^{2d+1}$, $H^{d+3}$, and $H^{d+1}$, respectively.
\end{corollary}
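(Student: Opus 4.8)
The plan is to trace through the size of the approximate state space and the proof of Lemma~\ref{lem: approx} under the extra hypothesis that every supported cost is positive, and identify where factors of $H$ were being spent purely to accommodate negative costs. Recall the projection $f_h$ subdivides the interval $[B - H\cmax, B]$ into length-$\ell$ cells, and the count $\abs{\set{\hat c_h \mid \exists s,\ (s,\hat c_h)\in\hS_h}} \leq (H\norm{\cmax}_\infty/\ell + 2)^d$ in Lemma~\ref{lem: approx} comes from the width $H\norm{\cmax}_\infty$ of that interval. When all costs are positive, the cumulative cost $\c_h$ is nondecreasing, so starting from $0$ it lies in $[0, B]$ throughout, and in fact at time $h$ the remaining cost is at most $(H-h+1)\cmax$, but the key point is that the reachable cumulative costs at time $h$ satisfy $0 \le \c_h \le B$ with no need to track below $0$. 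More importantly, the approximation error bound in Lemma~\ref{lem: approx} was $\c_h - \hat c_h \le (h-1)\ell$ because each of the up to $h-1$ prior steps can lose up to $\ell$ from the floor operation; with positive costs the truncation branch of $f$ is never triggered, so the clean bound $\hat c_h \le \c_h \le \hat c_h + (h-1)\ell$ holds unconditionally, but the state-count interval can be taken to be $[0,B]$ — however $B \le H\cmax$ still in general, so the real savings must come from elsewhere.

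The actual source of the improvement: when costs are positive, we only ever need cumulative costs in $[0,B]$, and since any approximate cost is an integer multiple of $\ell$ in $[0,B]$, there are at most $B/\ell + 1$ of them per coordinate, i.e.\ $(B/\ell+1)^d \le (\norm{\cmax}_\infty H/\ell + 1)^d$ — no better. So I would instead re-examine \emph{how $\ell$ is chosen}. For the additive guarantee we set $\ell = \epsilon/H$ so that total error $H\ell = \epsilon$; but with positive costs, along any trajectory the number of \emph{cost-incurring} steps before the budget $B$ is exhausted is bounded, and more to the point the per-coordinate error after $h$ steps is $(h-1)\ell \le (H-1)\ell$, which is already what forces $\ell = \Theta(\epsilon/H)$. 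The genuine win is that with positive costs the relevant interval to discretize has width $B$ rather than $H\cmax$ is false in general; rather, I believe the intended argument is that with positive costs one can use a \emph{coarser, geometric} or rescaled analysis where the number of distinct reachable approximate costs at any fixed time $h$ is only $O(h \cdot \text{something}/\ell)$ rather than $O(H\cmax/\ell)$, because after $k$ steps the cumulative cost is at most $k\cmax$; summing/maximizing the per-time counts and feeding the reduced diversity $\D \le (H\cmax/\ell + 2)^d$ back through Proposition~\ref{prop: complexity} and Theorem~\ref{thm: approx} is where the powers of $H$ get shaved. Concretely, I would: (1) prove the refined invariant $\c_h \in [0, \min((h-1)\cmax, B)]$ for positive costs, replacing the two-sided interval; (2) recount the approximate states as $\sum_h$ or $\max_h$ of $(\min((h-1)\norm{\cmax}_\infty, \norm B_\infty)/\ell + 1)^d$; (3) substitute into the time, sample, and space bounds of Corollaries~\ref{cor: additive} and~\ref{cor: relative}, tracking that one factor of $H$ per relevant term is removed, yielding $H^{2d+1}$, $H^{d+3}$, and $H^{d+1}$ respectively.

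The main obstacle I anticipate is pinning down \emph{exactly which} factor of $H$ disappears in each of the three bounds and confirming it is a single factor rather than, say, a factor in the $\cmax$-dependence or in $\epsilon$; the bookkeeping must distinguish the $H$ coming from the horizon length of the dynamic program, the $H$ coming from $H\ell = \epsilon$ (i.e.\ from $\ell = \epsilon/H$), and the $H$ inside $D \approx (H\cmax/\ell)^d$. My expectation is that with positive costs the interval width in the diversity bound drops from $H\cmax$ to $B$, but since the guarantee is stated in terms of $\cmax$ (additive) or $x$ with $\cmax \le x|B|$ (relative), this manifests as removing one power of $H$ from the $D^2$ (planning) or $D$ (learning/space) term after the substitution $\ell = \epsilon/H$ or $\ell = \epsilon|B|/H$; I would verify each of the six numbers in Corollaries~\ref{cor: additive}–\ref{cor: relative} individually. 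A secondary subtlety is ensuring the truncation ("o.w.") branch of $f_h$ is provably never active when costs are positive, so that Lemma~\ref{lem: approx}'s disjunction collapses to its first disjunct everywhere — this is immediate since $\hat c + c \ge 0 \ge B - (H-h)\cmax$ fails only if $B > (H-h)\cmax$, which can happen, so I would instead argue that even when the branch is taken the bound is unaffected because $\c_h$ is likewise large. Handling that edge case cleanly is the one place a careful argument, rather than pure bookkeeping, is needed.
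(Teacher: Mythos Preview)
Your eventual three-step plan is essentially what the paper does, but you repeatedly talk yourself out of the correct simple argument and into unnecessary refinements.

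The confusion is here: you write that the interval to discretize has width $B$ rather than $H\cmax$, ``however $B \le H\cmax$ still in general, so the real savings must come from elsewhere.'' But $B \le H\cmax$ is precisely why replacing width $H\cmax$ by width $B$ \emph{is} a savings: the new width is smaller. You then correctly return to this idea later, only to again declare it ``false in general'' and propose a more elaborate per-time-step count $\min((h-1)\cmax,B)$. None of that elaboration is needed; the max over $h$ of that quantity is just $B$ anyway.

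The one-line observation you are missing, and which the paper uses, is this: with all costs positive, if $B<0$ the instance is trivially infeasible, and otherwise any action whose cost exceeds $B$ can never be taken by a feasible policy (since $\c_h \ge 0$ always), so without loss of generality $\cmax \le B$. Hence every cumulative cost and every approximate cumulative cost lies in $[0,B]$, and the number of integer multiples of $\ell$ in that range is at most $B/\ell + 1$ per coordinate. Substituting $\ell = \epsilon/H$ (additive) or $\ell = \epsilon|B|/H$ (relative) gives $\D \le O((H\cmax/\epsilon)^d)$ or $\D \le O((H/\epsilon)^d)$ respectively, each exactly one factor of $H$ per coordinate smaller than the bounds feeding Corollaries~\ref{cor: additive} and~\ref{cor: relative}. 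Pushing this through Proposition~\ref{prop: complexity}, where time scales with $\D^2$ and sample complexity and space scale with $\D$, immediately yields the exponents $H^{2d+1}$, $H^{d+3}$, and $H^{d+1}$.

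Your worry about the truncation branch of $f_h$ is a red herring for this corollary: the state-count argument only needs the range of $\hat c_h$. Since the floor of a nonnegative number is nonnegative and the underestimate property $\hat c_h \le \c_h$ from Lemma~\ref{lem: approx} is unaffected, $0 \le \hat c_h \le \c_h \le B$ holds regardless of whether that branch ever fires.
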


\paragraph{Limitations.}
Using our additive approximation, we can efficiently handle any cMDP with $\cmax \leq \poly(|M|)$. Using the relative approximation, we can even handle the case that $\cmax$ is exponentially large so long as $\cmax \leq \poly(|M|)\abs{B}$. Thus, we can efficiently compute approximately feasible solutions so long as $\cmax \leq \poly(|M|)\max(1,\abs{B})$. 

We point out that the condition $\cmax \leq \poly(|M|)\abs{B}$ is very natural. If the costs all have the same sign, any feasible policy induces costs with $\cmax \leq |B|$. In our driving example, the condition simply says the vehicle cannot use more fuel than the capacity of the tank. In fact, this bottleneck is not due to our approach; some bound on $\cmax$ is necessary for efficient computation as \cref{prop: relax-hard} shows.

\begin{proposition}\label{prop: relax-hard}
    For any fixed $\epsilon > 0$, computing an optimal-value, $\epsilon$-additive or $\epsilon$-relative feasible solution to the knapsack problem with negative weights is NP-hard. Hence, it is hard for anytime-constrained cMDPs. 
\end{proposition}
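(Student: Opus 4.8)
The plan is a polynomial-time reduction from \textsc{Partition} (equivalently \textsc{Subset Sum}), which is NP-complete with numbers encoded in binary. Given positive integers $a_1,\dots,a_n$ with $\sum_i a_i = 2T$, and the fixed target accuracy $\epsilon>0$, set $N:=\lceil\epsilon\rceil+1$ so that $N>\epsilon$, and build the following knapsack instance, i.e.\ anytime-constrained cMDP with $S=1$ and deterministic costs and rewards. It has horizon $H=n+1$ and budget $B=0$. Step $1$ has a single forced action of cost $-NT$ and reward $0$; this is the only negative weight. For $i\in[n]$, step $i+1$ offers the action ``include $i$'' of cost $Na_i$ and reward $a_i$, and the action ``exclude $i$'' of cost $0$ and reward $0$. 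A deterministic policy thus just picks a subset $S\subseteq[n]$; it has value $\sum_{i\in S}a_i$, and after step $i+1$ its cumulative cost is $-NT+\sum_{j\le i,\,j\in S}Na_j$. Note $|M|$ is polynomial in the bit-length of the \textsc{Partition} instance since $N$ is a constant.

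The heart of the argument is that on this instance \emph{every} $\epsilon$-additive feasible policy and \emph{every} $\epsilon$-relative feasible policy is in fact exactly feasible. For $\epsilon$-additive feasibility, the final cumulative cost must satisfy $N\paren{\sum_{i\in S}a_i-T}\le B+\epsilon=\epsilon<N$; since $\sum_{i\in S}a_i-T$ is an integer this forces $\sum_{i\in S}a_i\le T$, and, as all costs after step $1$ are nonnegative, every earlier prefix is then automatically within budget too. For $\epsilon$-relative feasibility, $B=0$ gives $B(1+\epsilon\sigma_B)=0$, so the requirement is again exactly $\sum_{i\in S}a_i\le T$. Hence, under either notion, the approximately-feasible policies are exactly those with $\sum_{i\in S}a_i\le T$, so $V^*=\max\set{\sum_{i\in S}a_i:\sum_{i\in S}a_i\le T}$. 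Since the $a_i$ and $T$ are integers, $V^*=T$ precisely when some subset of the $a_i$ sums to $T$ (a YES instance of \textsc{Partition}), and $V^*\le T-1$ otherwise.

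Now suppose some algorithm computes, in polynomial time, an $\epsilon$-additive (resp.\ $\epsilon$-relative) feasible policy $\pi$ with $V^{\pi}\ge V^*$ for every knapsack instance with negative weights. Run it on the instance above; it must return such a $\pi$ rather than ``infeasible'', since excluding everything is feasible with value $0$. By \cref{lem: derandomization} we may assume $\pi$ is deterministic, hence corresponds to a subset, and by the previous paragraph $\pi$ is in fact exactly feasible, so $V^{\pi}\le V^*$ and therefore $V^{\pi}=V^*$. Evaluating the deterministic $\pi$ gives $V^*$ in polynomial time, so we can test whether $V^*=T$ and thereby decide \textsc{Partition} in polynomial time --- impossible unless $\mathrm P=\mathrm{NP}$. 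Since a single-state cMDP with deterministic data is exactly a knapsack instance (cf.\ the discussion of \cref{thm: np-hardness}), the same hardness transfers to \eqref{equ: objective}.

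The step I expect to be the real obstacle --- and the reason the hypothesis ``negative weights'' is exactly the right one --- is making the \emph{relative} notion bite. If we instead absorbed the $-NT$ into the budget, we would get a knapsack with only nonnegative weights and budget $NT>0$, whose $\epsilon$-relative slack $\epsilon NT$ is enormous; indeed \cref{cor: relative} then solves that instance efficiently. A single large negative weight lets us keep a genuine knapsack while forcing $B=0$, which is what collapses the relative slack to zero, and the coarse rescaling by $N$ is what kills the additive slack. The remaining pieces --- checking that the forced first step and the intermediate prefix constraints create no extra obstructions, and that everything is polynomial-size --- are routine.
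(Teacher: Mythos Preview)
Your proof is correct and rather clean, but it takes a genuinely different route from the paper. The paper treats the two cases separately: for the additive case it simply observes that with integer weights and $\epsilon<1$ the slack cannot help (and scales by $\lceil 2\epsilon\rceil$ for $\epsilon\ge 1$), so ordinary knapsack hardness already does the job without any negative weights; for the relative case it reduces from \textsc{Partition} with items $v_i=w_i=2x_i$, one extra item $v_0=w_0=-\mathrm{Sum}(X)$, and budget $B=1$, and then argues (for $\epsilon<1$) that any $\epsilon$-relative feasible set with value $\ge 0$ must contain item $0$ and must have $\mathrm{Sum}(Y)=\mathrm{Sum}(X)/2$.

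Your construction instead unifies both cases in a single instance by forcing a negative first step and setting $B=0$: the choice $B=0$ collapses the relative slack to nothing regardless of $\epsilon$, while the rescaling by $N=\lceil\epsilon\rceil+1$ kills the additive slack via integrality. This buys you a single argument that works for \emph{all} fixed $\epsilon$ without a separate scaling step, and makes the ``why negative weights'' point very transparent. On the other hand, the paper's additive argument is strictly simpler (no negative weights needed at all), and its relative construction with $B=1$ sidesteps any quibble about the sign convention $\sigma_B$ at $B=0$; your argument is robust to that convention since $0\cdot(1+\epsilon\sigma_B)=0$ in any case, but it is worth noting. One small presentational point: you phrase the instance as a cMDP with a forced first action, whereas the proposition is stated for knapsack; the translation to a knapsack instance (item $0$ optional with $v_0=0$, $w_0=-NT$) is harmless since excluding item $0$ forces the empty set anyway, and you do address this at the end.
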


\paragraph{Feasibility Scheme.} 
Let $OPT(B)$ denote the optimal value $V^*$ of an anytime-constrained cMDP with budget $B$.
\cref{alg: approx-reduction} provides an efficient approximation and guarantees at least $OPT(B)$ value but with the possibility of slightly going over budget, up to $B+\epsilon$ or $B(1+\epsilon)$.
If it is important that the budget $B$ is never violated, we can use the same approximate algorithm with one change: 
We instead give it $\hat{M}'$, which is $\hat{M}$ constructed from $M$ under a smaller budget: (1) $B-\epsilon$ for the additive approximation and (2) $B/(1+\epsilon)$ for the relative approximation.
Then, any over-budget by \cref{alg: approx-reduction} is compensated by the smaller budget, so that the cumulative cost is still under $B$.
Thus we have both efficiency and (budget $B$) feasibility.
The drawback is that the algorithm now only guarantees a value at least $OPT(B-\epsilon)$ or $OPT(B/(1+\epsilon))$, both can be much smaller than $OPT(B)$.

\begin{proposition}[Feasible Solutions]\label{prop: feasible-scheme}
    If $\pi$ is returned by \cref{alg: approx-reduction} using $\ell = \frac{\epsilon}{H}$ ($\ell = \frac{\epsilon\abs{B}}{H}$) with budget $B' = B-\epsilon$ ($B' = \frac{B}{1+\epsilon}$), then $\pi$ is feasible for \eqref{equ: objective}.
\end{proposition}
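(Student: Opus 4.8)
The plan is to show that running \cref{alg: approx-reduction} with the shrunken budget $B'$ produces a policy whose true cumulative cost is always within the \emph{original} budget $B$, by composing the additive/relative feasibility guarantee of \cref{thm: approx} (applied to the cMDP $M'$ with budget $B'$) with the algebraic relation between $B'$ and $B$. First I would observe that $M'$ differs from $M$ only in the budget, so the reward structure and hence the value semantics are unchanged; the policy $\pi$ returned is, by \cref{thm: approx} applied to $M'$, an $\epsilon$-additive feasible policy for $M'$ in the additive case (taking $\ell = \epsilon/H$, so $H\ell = \epsilon$), meaning $\P^{\pi}_M[\forall t,\ \sum_{h=1}^t c_h \leq B' + \epsilon] = 1$. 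Substituting $B' = B - \epsilon$ gives $B' + \epsilon = B$, so $\pi$ satisfies the exact anytime constraint of \eqref{equ: objective}, i.e.\ $\pi \in \Pi_M$, which is precisely feasibility.

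For the relative case I would argue analogously: with $\ell = \epsilon|B|/H$ the protocol \cref{alg: approximation} is run, but one must be slightly careful because the approximation guarantee is stated in terms of $|B'|$ and the sign $\sigma_{B'}$, not $|B|$. Assuming (WLOG, as in \cref{sec: approximation}) $B > 0$ so that $\sigma_B = \sigma_{B'} = 1$ and $B' = B/(1+\epsilon) > 0$, \cref{thm: approx} (whose $H\ell$-additive statement, or equivalently the relative statement of \cref{def: approx-feasibility}) gives $\P^{\pi}_M[\forall t,\ \sum_{h=1}^t c_h \leq B'(1+\epsilon)] = 1$; and $B'(1+\epsilon) = \frac{B}{1+\epsilon}(1+\epsilon) = B$, again yielding $\pi \in \Pi_M$. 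The negative-budget case is handled symmetrically via the footnote's covering reformulation, where shrinking the budget corresponds to relaxing the covering target, so the same cancellation applies. One small subtlety to note is that the discretization parameter $\ell$ in \cref{prop: feasible-scheme} is specified in terms of the original $|B|$, not $|B'|$; since $|B'| = |B|/(1+\epsilon) \leq |B|$, using this (possibly finer) $\ell$ only makes the per-step projection error no larger than $\ell \leq \epsilon|B'|/H$ would demand, so \cref{thm: approx}'s bound still applies with room to spare — this is the only place a short argument rather than a pure substitution is needed.

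The main obstacle, such as it is, is bookkeeping rather than mathematics: one must verify that the construction of $\hat{M}'$ from $M$ with budget $B'$ is exactly the object to which \cref{thm: approx} applies (same $\cmax$, same $H$, same transition/reward data, only the budget changed), and that the "$H\ell$-additive feasible" conclusion of \cref{thm: approx} instantiates correctly under the two choices of $\ell$. Once that identification is made, the result is immediate from the identities $B - \epsilon + \epsilon = B$ and $\frac{B}{1+\epsilon}(1+\epsilon) = B$. I would therefore structure the proof as: (1) state that $\pi$ is the output of \cref{alg: approx-reduction} on input $(M', f)$ with $M'$ as defined; (2) invoke \cref{thm: approx} on $M'$ to get the $B' + H\ell$ (resp.\ $B'(1+\epsilon)$) anytime bound almost surely; (3) plug in the choice of $B'$ and $\ell$ to collapse the bound to $B$; (4) conclude $\pi \in \Pi_M$, hence $\pi$ is feasible for \eqref{equ: objective}. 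No step should require more than a line.
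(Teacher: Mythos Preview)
Your overall approach is exactly the paper's: the proof there is literally one line, ``immediate from \cref{cor: additive} and \cref{cor: relative},'' and your steps (1)--(4) are just an unpacking of that. The additive case is handled correctly.

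However, your resolution of the relative-case subtlety is backwards. You write that since $|B'| = |B|/(1+\epsilon) \leq |B|$, the prescribed $\ell = \epsilon|B|/H$ is ``possibly finer'' and hence gives error ``no larger than $\ell \leq \epsilon|B'|/H$ would demand.'' But $\epsilon|B|/H \geq \epsilon|B'|/H$, so this $\ell$ is \emph{coarser}, not finer, and \cref{thm: approx} then only yields the $H\ell = \epsilon|B|$-additive bound
\[
\sum_{h=1}^t c_h \;\leq\; B' + \epsilon|B| \;=\; \frac{B}{1+\epsilon} + \epsilon B \;=\; B\cdot\frac{1+\epsilon+\epsilon^2}{1+\epsilon} \;>\; B
\]
for $B>0$ and $\epsilon>0$, which does \emph{not} give feasibility for \eqref{equ: objective}. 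The clean fix is to take $\ell = \epsilon|B'|/H$ (i.e., apply \cref{cor: relative} verbatim to the problem with budget $B'$), after which $B'(1+\epsilon) = B$ and your argument goes through. The paper's terse proof invokes \cref{cor: relative} directly on the budget-$B'$ instance and so implicitly uses $\ell = \epsilon|B'|/H$; the $\epsilon|B|/H$ appearing in the proposition statement is at best a harmless imprecision, but you should not try to justify it as written.
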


\section{EXPERIMENTS}\label{sec: experiments}

We test our methods on the family of NP-hard cMDP instances that we constructed in the proof of \cref{thm: np-hardness}. Namely, cMDPs with one state, $\S = \{0\}$, two actions, $\A = \{0,1\}$, and a single constraint, $d = 1$. The rewards and costs satisfy $r_h(0,0) = c_h(0,0) = 0$ for all $h$. For action $1$, $r_h(0,1) = x_h$ and $c_h(0,1) = y_h$ where for all $h$, $x_h,y_h\sim \textrm{Unif}[0,1]$ are chosen and then fixed for each cMDP instance. Since the complexity blowup for anytime constraints is in the time horizon, we focus on varying the time horizon. 

Already for a cMDP with $H = B = 15$, our exact reduction takes around a minute to run, which is unsurprising as the complexity grows like $30 (2^{15})^2$. Instead of comparing to the reduction, we can instead compare our relative approximation (\cref{cor: relative}) to our feasibility scheme (\cref{prop: feasible-scheme}). By definition, the relative approximation achieves at least the optimal value and the feasibility scheme is feasible. Thus, if both algorithms achieve value close to each other, we know that the approximation is not violating the budget much and the feasibility scheme is achieving near-optimal value. 

We perform $N = 10$ trials for each $H \in \{10, 20, 30, 40, 50\}$. We consider two different budgets, $b \in \{.1, 10\}$, and $\epsilon = 0.1$. We report the value of the worst trial (the one where both methods are farthest apart) for each $H$ in \cref{fig: value}. We see that both values are consistently close even in the worst case, which indicates the feasible solution is nearly optimal and the approximate solution is nearly feasible. 

To test the scaling of our approximation algorithm, we ran  $N = 10$ trials for each $H \in \set{10, 20, \ldots, 100}$. Here, we use a budget of $b = 100$ to maximize the cost diversity. This time, we tried both $\epsilon = 0.1$ and the even larger $\epsilon = 1$. We report the worst running time from each $H$ under $\epsilon = 0.1$ in \cref{fig: scale}. We see a cubic-like growth as expected from \cref{cor: improved}. Also, the approximation easily handled a large horizon of $100$ in a mere $2$ seconds, which drastically beats out the exact reduction. 
For $\epsilon = 1$ the results are even more striking with a maximum run time of $.0008$ seconds and the solutions are guaranteed to violate the budget by no more than a multiple of $2$! 
We give additional details and results in the Appendix.

\begin{figure}
\centering
  \includegraphics[scale=.45]{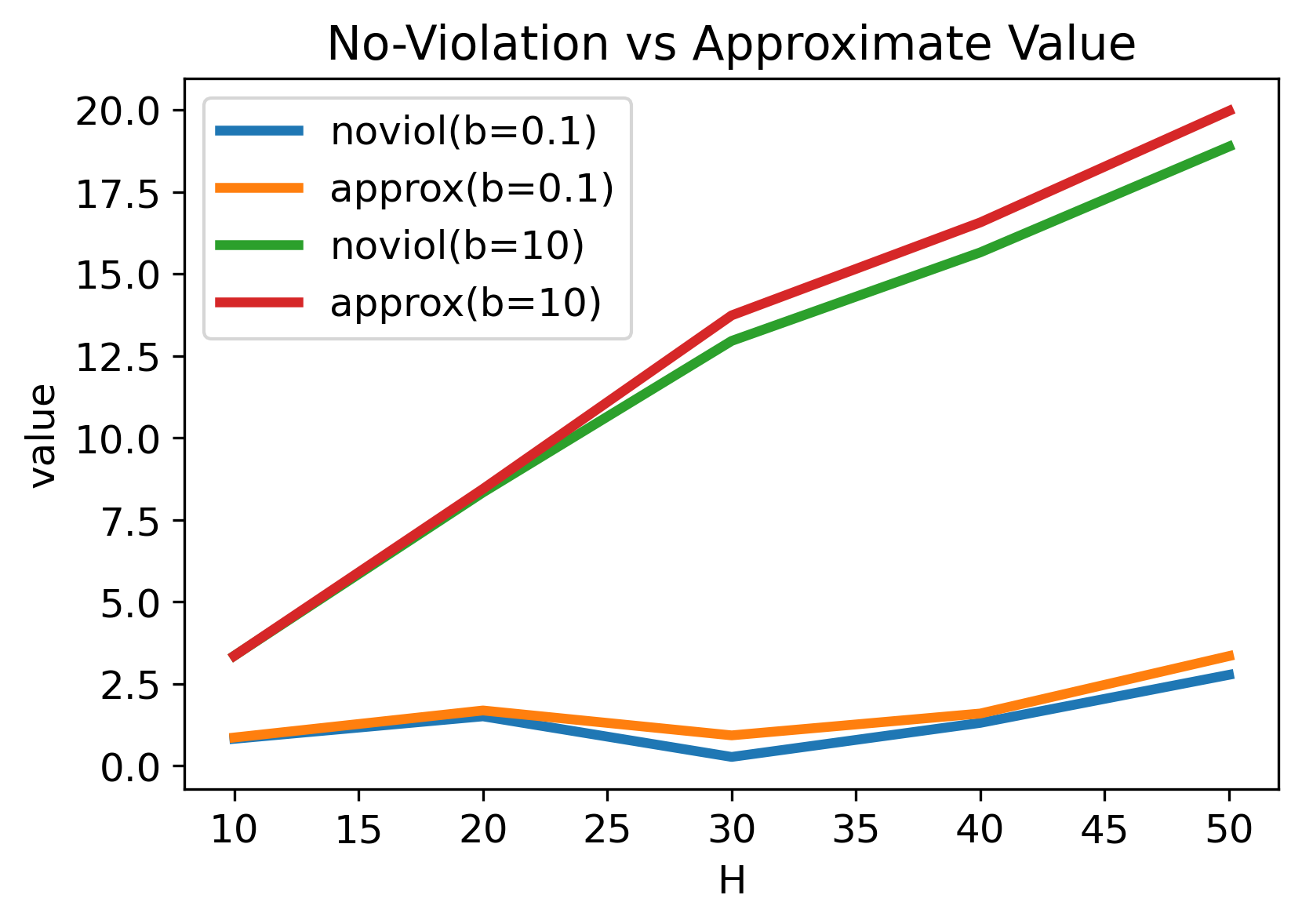}
  \caption{Value comparison of our relative approximation and feasibility scheme.}
  \label{fig: value}
\end{figure}
\begin{figure}
\centering
  \includegraphics[scale=.45]{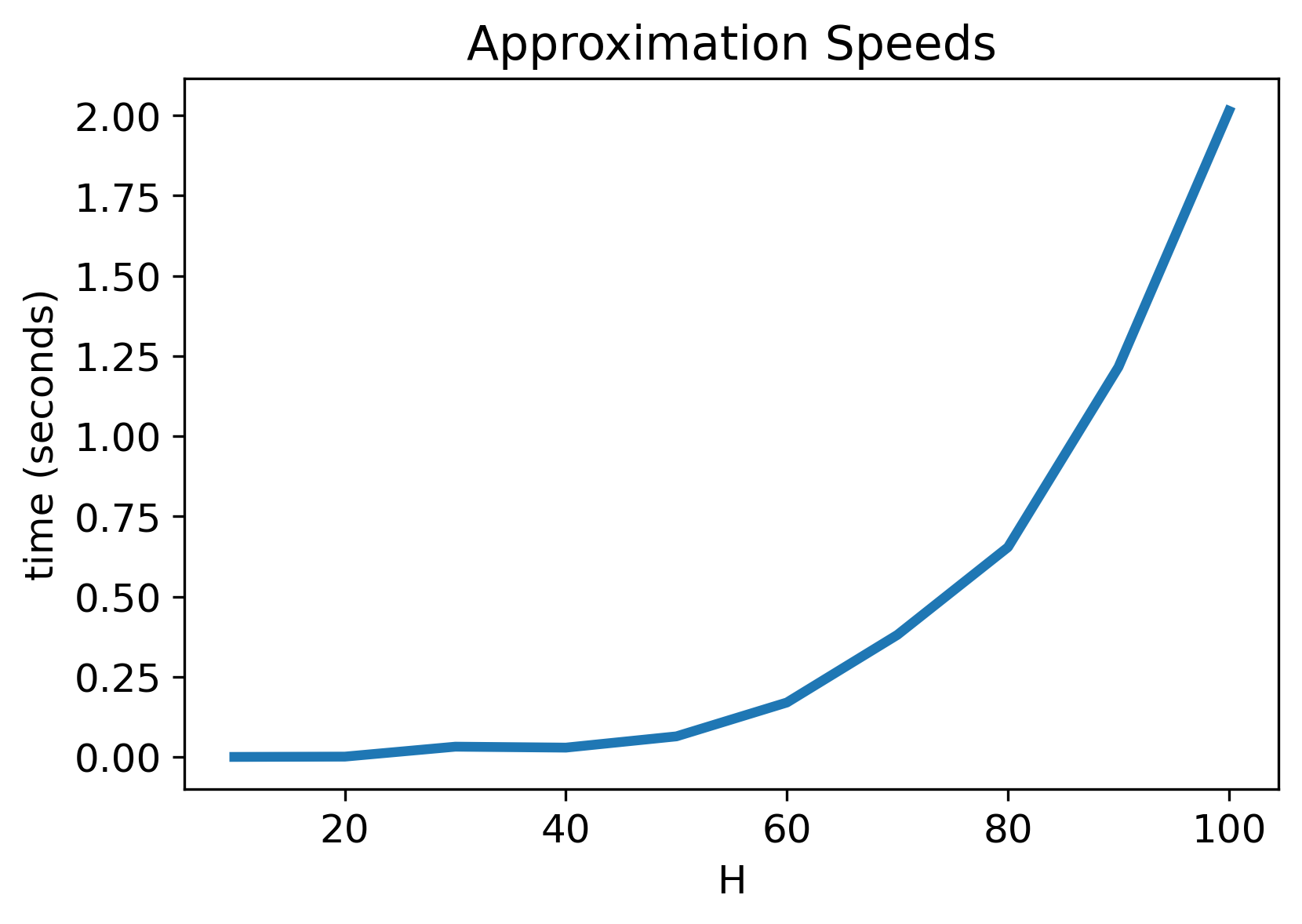}
  \caption{Running time of our relative approximation.}
  \label{fig: scale}
\end{figure}

\section{CONCLUSIONS}\label{sec: conclusion}

In this paper, we formalized and rigorously studied anytime-constrained cMDPs. Although traditional policies cannot solve anytime-constrained cMDPs, we showed that deterministic augmented policies suffice. We also presented a fixed-parameter tractable reduction based on cost augmentation and safe exploration that yields efficient planning and learning algorithms when the cost precision is $O(\log(|M|))$. In addition, we developed efficient planning and learning algorithms to find $\epsilon$-approximately feasible policies with optimal value whenever the maximum supported cost is $O(\poly(|M|)\max(1, |B|))$. Based on our hardness of approximation results, this is the best approximation guarantee we can hope for under worst-case analysis.

Although we have resolved many questions, there are still many more mysteries about anytime constraints. Primarily, we focus on worst-case analysis, which may be too pessimistic. Since anytime constraints are so sensitive to changes in cost, a smoothed or average case analysis could be promising. Going further, there may be useful classes of cMDPs for which \eqref{equ: objective} is efficiently solvable even in the worst case. Proving lower bounds on the exact complexity of computing solutions is also interesting. Lastly, learning a feasible policy without violation during the learning process is an important open question.

\subsubsection*{Acknowledgments}
This project is supported in part by NSF grants 1836978, 2023239, 2202457, 2331669, ARO MURI W911NF2110317, AF CoE FA9550-18-1-0166, and DMS-2023239.

\bibliographystyle{abbrvnat}
\bibliography{references}

\section*{Checklist}

 \begin{enumerate}

 \item For all models and algorithms presented, check if you include:
 \begin{enumerate}
   \item A clear description of the mathematical setting, assumptions, algorithm, and/or model. [Yes]
   \item An analysis of the properties and complexity (time, space, sample size) of any algorithm. [Yes]
   \item (Optional) Anonymized source code, with specification of all dependencies, including external libraries. [Yes]
 \end{enumerate}

 \item For any theoretical claim, check if you include:
 \begin{enumerate}
   \item Statements of the full set of assumptions of all theoretical results. [Yes]
   \item Complete proofs of all theoretical results. [Yes]
   \item Clear explanations of any assumptions. [Yes]     
 \end{enumerate}

 \item For all figures and tables that present empirical results, check if you include:
 \begin{enumerate}
   \item The code, data, and instructions needed to reproduce the main experimental results (either in the supplemental material or as a URL). [Yes]
   \item All the training details (e.g., data splits, hyperparameters, how they were chosen). [Yes]
         \item A clear definition of the specific measure or statistics and error bars (e.g., with respect to the random seed after running experiments multiple times). [Yes]
         \item A description of the computing infrastructure used. (e.g., type of GPUs, internal cluster, or cloud provider). [Yes]
 \end{enumerate}

 \item If you are using existing assets (e.g., code, data, models) or curating/releasing new assets, check if you include:
 \begin{enumerate}
   \item Citations of the creator If your work uses existing assets. [Not Applicable]
   \item The license information of the assets, if applicable. [Not Applicable]
   \item New assets either in the supplemental material or as a URL, if applicable. [Not Applicable]
   \item Information about consent from data providers/curators. [Not Applicable]
   \item Discussion of sensible content if applicable, e.g., personally identifiable information or offensive content. [Not Applicable]
 \end{enumerate}

 \item If you used crowdsourcing or conducted research with human subjects, check if you include:
 \begin{enumerate}
   \item The full text of instructions given to participants and screenshots. [Not Applicable]
   \item Descriptions of potential participant risks, with links to Institutional Review Board (IRB) approvals if applicable. [Not Applicable]
   \item The estimated hourly wage paid to participants and the total amount spent on participant compensation. [Not Applicable]
 \end{enumerate}

 \end{enumerate}

\onecolumn
\appendix

\section{Proofs for \texorpdfstring{\cref{sec: constraints}}{sec: constraints}}

\subsection{Proof of \texorpdfstring{\cref{prop: Markovian-suboptimality}}{subsec: proposition 1}}

\begin{proof}
    Fix any $H \geq 2$. For any $h \in [H-1]$, let $\pi$ be a cost-history-dependent policy that does not record the cost at time $h$. For any such $\pi$, we construct a cMDP instance for which $\pi$ is arbitrarily suboptimal. This shows that any class of policies that does not consider the full cost history is insufficient to solve \eqref{equ: objective}. In particular, the class of Markovian policies does not suffice.
    
    Consider the simple cMDP $M_h$ defined by a single state, $\S = \{0\}$, two actions, $\A = \{0,1\}$, and horizon $H$. The initial state is trivially $0$ and the transitions are trivially self-loops from $0$ to $0$. Importantly, $M_h$ has non-stationary rewards and costs. The rewards are deterministic. For any $t \not = h+1$, $r_t(s,a) = 0$. For some large $x > 0$, $r_{h+1}(s, 1) = x$ and $r_{h+1}(s,0) = 0$. The costs are deterministic except at time $h$. For any $t \not \in \{h, h+1\}$, $c_t(s,a) = 0$. For $t = h+1$, $c_{h+1}(s,1) = B$ and $c_{h+1}(s,0) = 0$. For $t = h$, the costs are random: 
    \begin{align*}
        C_h(s,a) := \begin{cases}
            B & \text{w.p. } \frac{1}{2} \\
            0 & \text{w.p. } \frac{1}{2}
        \end{cases}
    \end{align*}
    The budget is any $B > 0$.
    
    Clearly, an optimal cost-history-dependent policy can choose any action it likes other than at time $h+1$. At time $h+1$, an optimal policy chooses $a = 1$ if the cost incurred at time $h$ was $0$ and otherwise chooses action $a = 0$. The value of the optimal policy is $x/2$ since the agent receives total reward $x$ whenever $c_h = 0$, which is half the time, and otherwise receives total reward $0$. Thus, $V^*_{M_h} = x/2$. 
    
    On the other hand, consider $\pi$'s performance. Since $\pi$ does not record $c_h$, it cannot use $c_h$ to make decisions. Hence, $p := \P^{\pi}_{M_h}[a_{h+1} = 1]$ is independent of $c_h$'s value. If $p > 0$ then with probability $1/2 p > 0$, the agent accrues cost $B$ at both time $h$ and time $h+1$ so violates the constraint. Thus, if $\pi$ is feasible, it must satisfy $p = 0$. Consequently, $\pi$ can never choose $a = 1$ at time $h+1$ and so can never receive rewards other than $0$. Thus, $V^{\pi}_{M_h} = 0 << x/2 = V^*_{M_h}$. By choosing $x$ large enough, we see policies that do not consider the entire cost history can be arbitrarily suboptimal. By applying this argument to $H = 2$ and $h = 1$, we see that Markovian policies can be arbitrarily suboptimal and so do not suffice to solve anytime-constrained cMDPs.
\end{proof}

\subsection{Proof of \texorpdfstring{\cref{cor: cMDP-failure}}{subsec: corollary 1}}

\begin{proof}
    Since optimal policies for expectation-constrained cMDPs are always Markovian, \cref{prop: Markovian-suboptimality} immediately implies such policies are infeasible or arbitrarily suboptimal. In fact, we can see this using the same construction of $M_h$. 

    \begin{enumerate}
        \item Under an expectation constraint, the optimal policy $\pi$ can choose $p = 1/2$ and still maintain that $\E^{\pi}_M[\sum_{t = 1}^H c_t] = B/2 + pB = B \leq B$. Thus, such a policy violates the anytime constraint by accumulating cost $2B$ with probability $1/4$. In fact, if we generalize the construction of $M_h$ to have $c_h = \frac{B}{2\delta}$ with probability $\delta > 0$ (where $\delta = 1/2$ in the original construction), then the optimal expectation-constrained policy is the same $\pi$ but now accumulates cost $\frac{B}{2\delta} + B$ with probability $\delta/2 > 0$. Since $\delta$ can be chosen to be arbitrarily small, the violation of the anytime constraint, which is $B/2\delta$, can be arbitrarily large. Even if we relax the policy to just be $\epsilon$-optimal, for any $\epsilon > 0$ we can choose $x$ large enough to where all $\epsilon$-optimal policies still select action $1$ with non-zero probability. 
        
        \item A similar construction immediately shows the arbitrarily infeasibility of optimal chance-constrained policies. Consider a chance constraint that requires $\P^{\pi}_M[\sum_{t = 1}^H c_t > B] \leq \delta$ for some $\delta > 0$. We can use the same construction as above but with an arbitrarily larger cost of $c_h = y\frac{B}{2\delta}$ for some $y > 0$. Then, an optimal chance constrained policy can always let $p = 1$ since the cost only exceeds budget when $c_h > 0$ which happens with probability $\delta$. Such a policy clearly violates the anytime constraint by $y\frac{B}{2\delta}$, which is arbitrarily large by choosing $y$ to be arbitrarily large. Also, observe this does not require us to consider Markovian policies since whether the budget was already violated at time $h$ or not, the policy is still incentivized to choose action $1$ at time $h+1$ as additional violation does not effect a chance-constraint. Again, considering an $\epsilon$-optimal policy does not change the result.
        \item 
    \end{enumerate}

    Suppose instead we computed an optimal policy using a smaller budget $B'$.
    \begin{enumerate}
        \item For expectation-constraints, to ensure the resultant policy is feasible for anytime constraints, we need that $p = 0$ as before. By inspection, it must be that $B' = B/2$ but then the value of the policy is $0$ which is arbitrarily suboptimal as we saw before. 

        \item For chance-constraints, the situation is even worse. Consider the $M_h$ but with $c_h = B$ w.p. $\delta$. Then, no matter what $B'$ we choose, the resultant policy is not feasible. Specifically, an optimal cost-history-dependent policy under the event that $c_h = B/2$ will then choose $a_{h+1} = 1$ almost surely since the extent of the violation does not matter. But even ignoring this issue, under the event that $c_h = 0$ the policy would then have to choose $a_{h+1} = 0$ which is again arbitrarily suboptimal. 
    \end{enumerate}

    For the knapsack-constrained frameworks, the policy is allowed to violate the budget arbitrarily once per episode. Thus, no matter how we change the budget feasibility is never guaranteed: it will always choose $a_{h+1} = 1$ in any realization. The other frameworks also fail using slight modifications of the constructions above.

\end{proof}

\subsection{Proof of \texorpdfstring{\cref{prop: cost-sensitivity}}{subsec: proposition 2}}

\begin{proof}
    For continuity with respect to rewards, notice that if a certain reward is not involved in the optimal solution, then any perturbation does not change $V^*$. On the other hand, if a reward is in an optimal solution, since $V^*$ is defined by an expectation of the rewards, it is clear that $V^*$ is continuous in that reward: a slight perturbation in the reward leads to the same or an even smaller perturbation in $V^*$ due to the probability weighting.

    On the other hand, $V^*$ can be highly discontinuous in $c$ and $B$. Suppose a cMDP has only one state and two actions with cost $0$ and $B$, and reward $0$ and $x \in \Real_{> 0}$ respectively. Then slightly increasing the cost or slightly decreasing the budget to create a new instance $M_{\epsilon}$ moves a solution value of $x$ all the way down to a solution value of $0$. In particular, we see $V^*_{M} = x >> V^*_{M_{\epsilon}}$ if we only perturb the budget slightly by some $\epsilon > 0$.
\end{proof}

\subsection{Proof of \texorpdfstring{\cref{lem: derandomization}}{subsec: lemma 1}}

We first formally define the anytime cost of a policy $\pi$ as,
\begin{equation*}
    C^{\pi} := \max_{h \in [H]} \max_{\substack{\tau_h \in \H_h, \\ \P^{\pi}[\tau_h] > 0}} \c_h.
\end{equation*}
In words, $C^{\pi}$ is the largest cost the agent ever accumulates at any time under any history.

\begin{proof}

Consider the deterministic policy $\pi'$ defined by,
\[\pi'_h(\tau_h) := \max_{\substack{a \in \A, \\ \pi_h(a \mid \tau_h) > 0}} r_h(s,a) + \E_{c, s'} \brac{V_{h+1}^{\pi}(\tau_h, a, c, s')},\]
for every $h \in [H]$ and every $\tau_h \in \H_h$.

We first show that for any $\tau_h \in \H_h$, if $\P^{\pi'}[\tau_h] > 0$ then $\P^{\pi}[\tau_h] > 0$. This means that the set of partial histories induced by $\pi'$ with non-zero probability are a subset of those induced by $\pi$. Hence, 
\[C^{\pi'} = \max_{h \in [H]}\max_{\substack{\tau_h \in \H_h, \\ \P^{\pi'}[\tau_h] > 0}} \c_h \leq \max_{h \in [H]}\max_{\substack{\tau_h \in \H_h, \\ \P^{\pi}[\tau_h] > 0}} \c_h = C^{\pi}.\]

We show the claim using induction on $h$. For the base case, we consider $h = 1$. By definition, we know that for both policies, $\P^{\pi'}[s_0] = \P^{\pi}[s_0] = 1$. For the inductive step, consider any $h \geq 1$ and suppose that $\P^{\pi'}[\tau_{h+1}] > 0$. Decompose $\tau_{h+1}$ into $\tau_{h+1} = (\tau_h, a, c, s')$ and let $s = s_h$. As we have shown many times before,
\begin{align*}
    0 < \P^{\pi'}[\tau_{h+1}] &=  \pi'_h(a \mid \tau_h)C_h(c \mid s, a)P_h(s' \mid s,a)\P^{\pi'}[\tau_h]
\end{align*}
Thus, it must be the case that $\pi'_h(\tau_h) = a$ (since $\pi'$ is deterministic), $C_h(c \mid s,a) > 0$, $P_h(s' \mid s,a) > 0$, and $\P^{\pi'}[\tau_h] > 0$. By the induction hypothesis, we then know that $\P^{\pi}[\tau_h] > 0$. Since by definition $\pi'_h(\tau_h) = a \in \set{a' \in \A \mid \pi_h(a' \mid \tau_h) > 0}$, we then see that,
\begin{align*}
    \P^{\pi}[\tau_{h+1}] = \pi_h(a \mid \tau_h)C_h(c \mid s, a)P_h(s' \mid s,a)\P^{\pi}[\tau_h] > 0
\end{align*}
This completes the induction.

Next, we show that for any $h \in [H]$ and $\tau_h \in \H_h$, $V^{\pi'}_h(\tau_h) \geq V^{\pi}_h(\tau_h)$. This implies that $V^{\pi'}_M = V^{\pi'}_1(s_0) \geq V^{\pi}_1(s_0) = V^{\pi}_M$ which proves the second claim. We proceed by backward induction on $h$. For the base case, we consider $h = H+1$. By definition, both policies achieve value $V_{H+1}^{\pi'}(\tau) = 0 = V_{H+1}^{\pi}(\tau)$. For the inductive step, consider $h \leq H$. By \eqref{equ: PE},
\begin{align*}
    V_h^{\pi'}(\tau_h) &= r_h(s,\pi'(\tau_h)) + \E_{c,s'}[V^{\pi'}_{h+1}(\tau_{h+1})] \\
    &\geq r_h(s,\pi'(\tau_h)) + \E_{c,s'}[V^{\pi}_{h+1}(\tau_{h+1})] \\
    &\geq \E_{a}[r_h(s,a) + \E_{\Tilde{c},s'}[V^{\pi}_{h+1}(\tau_{h+1})]] \\
    &= V_h^{\pi}(\tau_h).
\end{align*}
The second line used the induction hypotheses. The third lines used the fact that the maximum value is at least any weighted average. This completes the induction.

Thus, we see that $\pi'$ satisfies $C^{\pi'}_M \leq C^{\pi}_M$ and $V^{\pi'}_M \geq V^{\pi}_M$ as was to be shown. Furthermore, we see that $\pi'$ can be computed from $\pi$ in linear time in the size of $\pi$ by just computing $V^{\pi}_h(\tau_h)$ by backward induction and then directly computing a solution for each partial history.

\end{proof}

\subsection{Proof of \texorpdfstring{\cref{thm: np-hardness}}{subsec: theorem 1}}

\begin{proof}
    We present a poly-time reduction from the knapsack problem. Suppose we are given $n$ items each with a non-negative integer value $v_i$ and weight $w_i$. Let $B$ denote the budget. We construct an MDP $M$ with $\S = \{0\}$, $\A = \{0, 1\}$, and $H = n$. Naturally, having a single state implies the initial state is $s_0 = 0$, and the transition is just a self-loop: $P(0 \mid 0, a) = 1$ for any $a \in \A$. The rewards of $M$ correspond to the knapsack values: $r_i(s,1) = v_i$. The costs of $M$ correspond to the knapsack weights: $c_i(s,1) = w_i$. The budget remains $B$. 

    Clearly, any (possibly non-stationary) deterministic policy corresponds to a choice of items for the knapsack. By definition of the rewards and costs, $\pi_h(\cdot) = 1$ if and only if the agent gets reward $v_h$ and accrues cost $c_h$. Thus, there exists a deterministic $\pi \in \Pi$ with $V^{\pi}_M \geq V$ and $C^{\pi}_M \leq B$ if and only if $\exists I \subseteq [n]$ with $\sum_{i \in I} v_i \geq V$ and $\sum_{i \in I} w_i \leq B$. From \cref{lem: derandomization} if there exists a stochastic optimal policy for the cMDP with value at least $V$ and anytime cost at most $B$, then there exists a deterministic policy with value at least $V$ and anytime cost at most $B$. As $M$ can be constructed in linear time from the knapsack instance, the reduction is complete.
\end{proof}

\subsection{Proof of \texorpdfstring{\cref{thm: approx-hard}}{subsec: theorem 2}}

\begin{proof}
    We show that computing a feasible policies for anytime constrained cMDPs with only $d = 2$ constraints is NP-hard via a reduction from Partition. Suppose $X = \{x_1, \ldots, x_n\}$ is a set of non-negative integers. Let $Sum(X) := \sum_{i = 1}^n x_i$. We define a simple cMDP similar to the one in the proof of \cref{thm: np-hardness}. Again, we define $\S = \{0\}$, $\A = \{0,1\}$, and $H = n$. The cost function is deterministic, defined by $c_{i,h}(s,i) = x_h$ and $c_{i,h}(s, 1-i) = 0$. The budgets are $B_0 = B_1 = Sum(X)/2$. 

    Intuitively, at time $h$, choosing action $a_h = 0$ corresponds to placing $x_h$ in the left side of the partition and $a_h = 1$ corresponds to placing $x_h$ in the right side. The total cumulative cost for each constraint corresponds to the sum of elements in each side of the partition. If both sides sum to at most $Sum(X)/2$ then it must be the case that both are exactly $Sum(X)/2$ and so we have found a solution to the Partition problem.

    Formally, we show that $\exists \pi \in \Pi_M$ if and only if $\exists Y \subseteq [n]$ with $Sum(Y) = Sum(Z) = Sum(X)/2$ where $Z = X \setminus Y$.

    \begin{itemize}
        \item ($\implies$) Suppose $\pi$ is a feasible deterministic policy for $M$ (We can assume deterministic again by \cref{lem: derandomization}). Define $Y := \{i \mid \pi_h(s) = 0\}$ and $Z := \{i \mid \pi_h(s) = 1\}$. Since $\pi$ is deterministic we know that each item is assigned to one set or the other and so $Y \cup Z = X$. 
        
        By definition of the constraints, we have that $\P^{\pi}_M[\sum_{h = 1}^H c_{i,h} \leq B_i] = 1$. Since all quantities are deterministic this means that $\sum_{h = 1}^H c_{i,h} \leq B_i$. By definition of $Y$ and $Z$ we further see that $Sum(Y) = \sum_{h = 1}^H c_{0,h} \leq Sum(X)/2$ and $Sum(Z) = \sum_{h = 1}^H c_{1,h} \leq Sum(X)/2$. Since,
        \[Sum(X) = Sum(Y \cup Z) = Sum(Y) + Sum(Z) \leq Sum(X)/2 + Sum(X)/2 = Sum(X),\] 
        the inequality must be an equality. Using $Sum(Y) = Sum(X) - Sum(Z)$, then implies that $Sum(Y) = Sum(Z) = Sum(X)/2$ and so $(Y,Z)$ is a solution to the partition problem.

        \item ($\impliedby$) On the other hand, suppose that $(Y,Z)$ is a solution to the partition problem. We can define $\pi_h(s) = 0$ if $h \in Y$ and $\pi_h(s) = 1$ if $h \in Z$. By definition, we see that $Sum(Y) = \sum_{h = 1}^H c_{0,h} = Sum(X)/2 = B_0$ and $Sum(Z) = \sum_{h = 1}^H c_{1,h} = Sum(X)/2 = B_1$. Thus, $\pi$ is feasible for $M$.
    \end{itemize}

    As the construction of $M$ can clearly be done in linear time by copying the costs and computing $Sum(X)/2$, the reduction is polynomial time. Thus, it is NP-hard to compute a feasible policy for an anytime-constrained cMDP.

    Since the feasibility problem is NP-hard, it is easy to see approximating the problem is NP-hard by simply defining a reward of $1$ at the last time step. Then, if an approximation algorithm yields any finite-value policy, we know it must be feasible since infeasible policies yield $-\infty$ value. Thus, any non-trivial approximately-optimal policy to an anytime-constrained cMDP is NP-hard to compute.
\end{proof}

\section{Proofs for \texorpdfstring{\cref{sec: reduction}}{sec: reduction}}

The complete forward induction algorithm that computes $\mS$ as defined in \cref{def: augmentation} is given in \cref{alg: cost-states}. 

\begin{algorithm}[tb]
\caption{Compute $\{\mS_h\}_h$ }\label{alg: cost-states}
\textbf{Input: } $(M,C,B)$
\begin{algorithmic}
\State $\mS_1 = \set{(s_0,0)}$
\For{$h \gets 1$ to $H-1$}
    \State $\mS_{h+1} = \varnothing$
    \For{$(s,\c) \in \mS_{h}$}
        \For{$s' \in \S$}
            \For{$a \in \A$}
                \If{$P_{h}(s' \mid s,a) > 0$ and $\Pr_{c \sim C_h(s,a)}[\c + c \leq B] = 1$}
                    \For{$c \in C_h(s,a)$}
                        \State $\mS_{h+1} \gets \mS_{h+1} \cup \set{(s', \c + c)}$
                    \EndFor
                \EndIf
            \EndFor
        \EndFor
    \EndFor
\EndFor
\State \Return $\{\mS_h\}_h$.
\end{algorithmic}
\end{algorithm}

Suppose $\tau_{h+1} \in \H_{h+1}$ is any partial history satisfying $\P^{\pi}_{\tau_h}[\tau_{h+1}] > 0$. By the Markov property (Equation (2.1.11) from \cite{MDP-book}), we have that
\begin{equation}\label{equ: mp}\tag{MP}
    \P^{\pi}_{\tau_h}[\tau_{h+1}] = \pi_h(a \mid \tau_h) C_h(c \mid s,a) P_h(s' \mid s,a).
\end{equation}
Thus, it must be the case that $\tau_{h+1} = (\tau_h, a, c, s')$ where $\pi_h(a \mid \tau_h) > 0$, $C_h(c \mid s,a) > 0$, and $P_h(s' \mid s,a) > 0$.

\subsection{Proof of \texorpdfstring{\cref{lem: feasibility}}{subsec: Lemma 2}}

We show an alternative characterization of the safe exploration state set:
\begin{equation}
\begin{split}
    SE_h := \Big\{(s,\c) \mid \exists \pi \; \exists \tau_h \in \H_h, \; &\P^{\pi}_{\tau_h}[s_h = s, \c_h = \c] = 1 \text{ and } \\
    &\forall k \in [h-1] \; \P^{\pi}_{\tau_k}[\c_{k+1} \leq B] = 1 \Big \}.
\end{split}
\end{equation}
Observe that $\P^{\pi}_{\tau_h}[s_h = s,\c_h = \c] = 1$ is equivalent to requiring that for $\tau_h$, $s_h = s$, $\c_h = \c$, and $\P^{\pi}[\tau_h] > 0$. 

\begin{lemma}\label{lem: safe-exploration}
    For all $h \in [H+1]$, $\mS_h = SE_h$.
\end{lemma}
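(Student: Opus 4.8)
The plan is to prove the two set inclusions $\mS_h \subseteq SE_h$ and $SE_h \subseteq \mS_h$ by induction on $h$, using the forward-inductive structure of both definitions. The base case $h=1$ is immediate: $\mS_1 = \{(s_0,0)\}$ and any $\tau_1 = (s_0)$ witnesses $(s_0,0) \in SE_1$ with the vacuous condition over $k \in [0] = \varnothing$, so $SE_1 = \{(s_0,0)\}$ as well.

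For the inductive step proving $\mS_{h+1} \subseteq SE_{h+1}$, I would take $(s',\c') \in \mS_{h+1}$ and unpack \cref{def: states}: there is $(s,\c) \in \mS_h$, an action $a$, and $c' \in C_h(s,a)$ with $\c' = \c + c'$, $\Pr_{c \sim C_h(s,a)}[\c + c \leq B] = 1$, and $P_h(s' \mid s,a) > 0$. By the induction hypothesis $(s,\c) \in SE_h$, so there is a policy $\pi$ and history $\tau_h$ with $\P^\pi_{\tau_h}[s_h = s, \c_h = \c] = 1$ and the safe-exploration condition holding for all $k \in [h-1]$. I would then extend $\pi$ to choose $a$ at $\tau_h$ (redefining $\pi_h(\tau_h) := a$, which does not affect anything before time $h$), and set $\tau_{h+1} := (\tau_h, a, c', s')$. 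Since $C_h(c' \mid s,a) > 0$ and $P_h(s' \mid s,a) > 0$, the Markov property \eqref{equ: mp} gives $\P^\pi_{\tau_h}[\tau_{h+1}] > 0$, hence $\P^\pi_{\tau_{h+1}}[s_{h+1} = s', \c_{h+1} = \c'] = 1$; and the new safety condition at $k = h$, namely $\P^\pi_{\tau_h}[\c_{h+1} \leq B] = 1$, is exactly the assumption $\Pr_{c \sim C_h(s,a)}[\c + c \leq B] = 1$ combined with $\c_h = \c$ almost surely under $\tau_h$. This gives $(s',\c') \in SE_{h+1}$.

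For the reverse inclusion $SE_{h+1} \subseteq \mS_{h+1}$, I would take $(s',\c') \in SE_{h+1}$ with witnessing $\pi$ and $\tau_{h+1}$. Decompose $\tau_{h+1} = (\tau_h, a, c', s')$ and let $s = s_h$, $\c = \c_h$. Since $\P^\pi[\tau_{h+1}] > 0$ implies $\P^\pi[\tau_h] > 0$, the pair $(s,\c)$ together with $\pi$ and $\tau_h$ satisfies the defining conditions of $SE_h$ (the safety conditions for $k \in [h-1]$ are inherited), so $(s,\c) \in SE_h = \mS_h$ by the induction hypothesis. From \eqref{equ: mp} applied to $\P^\pi_{\tau_h}[\tau_{h+1}] > 0$ we get $\pi_h(a \mid \tau_h) > 0$, $C_h(c' \mid s,a) > 0$, and $P_h(s' \mid s,a) > 0$; and the $k = h$ safety condition $\P^\pi_{\tau_h}[\c_{h+1} \leq B] = 1$, since $\c_h = \c$ a.s.\ under $\tau_h$ and $c_h \sim C_h(s,a)$, translates to $\Pr_{c \sim C_h(s,a)}[\c + c \leq B] = 1$. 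With $\c' = \c + c'$, all conditions in \cref{def: states} are met, so $(s',\c') \in \mS_{h+1}$.

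The main obstacle I anticipate is bookkeeping around the conditional probability notation: carefully justifying that $\P^\pi_{\tau_h}[s_h = s, \c_h = \c] = 1$ is equivalent to the deterministic statement "$\tau_h$ has final state $s$, cumulative cost $\c$, and $\P^\pi[\tau_h] > 0$" (noted in the excerpt just before the lemma), and that modifying $\pi$ at the single history $\tau_h$ in the forward direction is legitimate because it changes neither $\P^\pi[\tau_h]$ nor any conditional law given $\tau_h$ for times already fixed. Once that equivalence is cleanly established, both inclusions are essentially a direct unrolling of \eqref{equ: mp} against the two definitions, and the finiteness claim $|\mS_h| < \infty$ follows because at each step the number of new pairs added is bounded by $|\mS_h| \cdot S \cdot A \cdot n$, which I would fold in as a trivial side induction.
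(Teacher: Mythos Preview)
Your proposal is correct and follows essentially the same two-claim inductive structure as the paper's proof. One small point to tighten in the reverse inclusion: when you write ``$c_h \sim C_h(s,a)$'' under $\tau_h$, recall that $\pi$ may be randomized, so conditionally on $\tau_h$ the action is still distributed as $\pi_h(\cdot\mid\tau_h)$; the paper handles this by expanding $\P^{\pi}_{\tau_h}[\c_{h+1}\le B]=\sum_{a'}\pi_h(a'\mid\tau_h)\Pr_{C_h(s,a')}[\c+c\le B]=1$ and noting that every term with positive weight (in particular $a$) must equal $1$. Also, the finiteness remark at the end belongs to \cref{lem: feasibility}, not to this lemma.
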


We break the proof into two claims. 

\begin{claim}\label{claim: mS-in-SE}
    For all $h \in [H+1]$, $\mS_h \subseteq SE_h$.
\end{claim}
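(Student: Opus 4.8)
The plan is to prove $\mS_h \subseteq SE_h$ by forward induction on $h$, mirroring the inductive definition of $\mS_h$ in \cref{def: states}. The statement says exactly that every augmented state $(s,\c) \in \mS_h$ is witnessed by some policy $\pi$ and some partial history $\tau_h$ under which, with probability one, the agent is at state $s$ with cumulative cost $\c$, and moreover every action taken along $\tau_h$ could never have violated the budget (the condition $\P^{\pi}_{\tau_k}[\c_{k+1} \leq B] = 1$ for all $k \in [h-1]$).

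For the base case $h = 1$, the only element of $\mS_1$ is $(s_0, 0)$; take $\tau_1 = (s_0)$ and any policy $\pi$ — then $\P^{\pi}_{\tau_1}[s_1 = s_0, \c_1 = 0] = 1$ trivially, and the quantified safety condition over $[h-1] = \varnothing$ is vacuous. For the inductive step, fix $(s', \c') \in \mS_{h+1}$. By \cref{def: states} there exist $(s,\c) \in \mS_h$, $a \in \A$, and $c' \in C_h(s,a)$ with $\c' = \c + c'$, $\Pr_{c \sim C_h(s,a)}[\c + c \leq B] = 1$, and $P_h(s' \mid s, a) > 0$. By the induction hypothesis, $(s,\c) \in SE_h$, so there are $\pi$ and $\tau_h$ with $\P^{\pi}_{\tau_h}[s_h = s, \c_h = \c] = 1$ and $\P^{\pi}_{\tau_k}[\c_{k+1} \leq B] = 1$ for all $k \in [h-1]$. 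I would then modify $\pi$ at time $h$ so that it deterministically plays $a$ on the history $\tau_h$ (leaving it unchanged elsewhere; since $\P^{\pi}_{\tau_h}[\,\cdot\,]$ only depends on $\pi$ from time $h$ onward this does not disturb the hypotheses), and extend the history to $\tau_{h+1} := (\tau_h, a, c', s')$. Using \eqref{equ: mp} and the fact that $\pi$ now plays $a$ with probability one on $\tau_h$, $c' \in C_h(s,a)$ so $C_h(c' \mid s,a) > 0$, and $P_h(s' \mid s, a) > 0$, we get $\P^{\pi}[\tau_{h+1}] > 0$; together with $s_{h+1} = s'$ and $\c_{h+1} = \c + c' = \c'$ this gives $\P^{\pi}_{\tau_{h+1}}[s_{h+1} = s', \c_{h+1} = \c'] = 1$. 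It remains to check the safety condition at the new index $k = h$: conditioned on $\tau_h$ (so $\c_h = \c$ a.s.) the agent plays $a$, and $\c_{h+1} = \c + c_h$ where $c_h \sim C_h(s,a)$, so $\P^{\pi}_{\tau_h}[\c_{h+1} \leq B] = \Pr_{c \sim C_h(s,a)}[\c + c \leq B] = 1$ by the defining condition of $\mS_{h+1}$. The earlier safety conditions for $k \in [h-1]$ are inherited unchanged, completing the induction.

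The main obstacle, such as it is, is bookkeeping rather than mathematics: one must be careful that conditioning on $\tau_h$ makes $s_h$ and $\c_h$ deterministic (equal to $s$ and $\c$), that the probability law $\P^{\pi}_{\tau_h}$ is insensitive to the choice of $\pi_k$ for $k < h$ (so "modifying $\pi$ at time $h$" is legitimate and does not break the induction hypothesis), and that the support conditions from \eqref{equ: mp} line up exactly with the three conditions in \cref{def: states}. The converse inclusion $SE_h \subseteq \mS_h$, together with $\mS_h \supseteq \mathcal{F}_h$, finite cardinality, and the equality case of \cref{lem: feasibility}, would be handled in the companion claim and subsequent arguments, but those are outside the scope of this particular statement.
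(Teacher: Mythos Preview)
Your proposal is correct and follows essentially the same approach as the paper: forward induction on $h$, overwriting $\pi_h(\tau_h) = a$, extending the history to $\tau_{h+1} = (\tau_h, a, c', s')$, and verifying the new safety condition at $k = h$ via the defining property $\Pr_{c \sim C_h(s,a)}[\c + c \leq B] = 1$ of $\mS_{h+1}$. Your explicit remarks about why modifying $\pi$ at time $h$ does not disturb the induction hypothesis are a welcome addition that the paper leaves implicit.
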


\begin{proof}
    We proceed by induction on $h$. For the base case, we consider $h = 1$. By definition, $\mS_1 = \set{(s_0,0)}$. For $\tau_1 = s_0$, we have $\c_1 = 0$ is an empty sum. Thus, for any $\pi \in \Pi$, $\P^{\pi}[s_1 = s_0, \c_1 = 0 \mid \tau_1] = 1$. Also, $[h-1] = [0] = \varnothing$ and so the second condition vacuously holds. Hence, $(s_0,0) \in SE_1$ implying that $\mS_1 \subseteq SE_1$.

    For the inductive step, we consider any $h \geq 1$. Let $(s',\c') \in \mS_{h+1}$. By definition, we know that there exists some $(s,\c) \in \mS_h$, $a \in \A$, and $c \in \Real$ satisfying,
    \[C_h(c \mid s,a) > 0, \; \Pr_{c \sim C_h(s,a)}[\c + c \leq B] = 1, \; \c' = \c + c, \text{ and } P_h(s' \mid s,a) > 0.\]
    By the induction hypothesis, $(s,\c) \in SE_h$ and so there also exists some $\pi \in \Pi$ and some $\tau_h \in \H_h$ satisfying,
    \[\P^{\pi}_{\tau_h}[s_h = s,\c_h = \c] = 1 \text{ and } \forall k \in [h-1], \; \P^{\pi}_{\tau_k}[\c_{k+1} \leq B] = 1.\]
    Overwrite $\pi_h(\tau_h) = a$ and define $\tau_{h+1} = (\tau_h,a,c,s')$. Then, by definition of the interaction with $M$, $\P^{\pi}[\tau_{h+1}] \geq \P^{\pi}[\tau_h]\pi_h(a\mid\tau_h) C_h(c \mid s,a) P_h(s' \mid s,a) > 0$. Here we used the fact that if $\P^{\pi}_{\tau_h}[s_h = s,\c_h = \c] = 1$ then $\P^{\pi}[\tau_h] > 0$ by the definition of conditional probability. Thus, $\P^{\pi}_{\tau_{h+1}}[s_{h+1} = s', \c_{h+1} = \c'] = 1$. By assumption, $\P^{\pi}_{\tau_k}[\c_{k+1} \leq B]$ holds for all $k \in [h-1]$. For $k = h$, we have
    \begin{align*}
        \P^{\pi}_{\tau_h}[\c_{h+1} \leq B] &= \P^{\pi}_{\tau_h}[\c_{h} + c_h \leq B \mid s_h = s, \c_h = \c] \\
        &= \sum_{a'} \pi_h(a' \mid \tau_h) \Pr_{C_h(s,a')}[\c + c \leq B] \\
        &= \Pr_{C_h(s,a)}[\c + c \leq B] \\
        &= 1.
    \end{align*}
    The first line used the law of total probability, the fact that $\P^{\pi}_{\tau_h}[s_h = s, \c_h = \c] = 1$, and the recursive decomposition of cumulative costs. The second line uses law of total probability on $c_h$. The third line follows since $\pi_h(a \mid \tau_h) = 1$ since $\pi_h(\tau_h) = a$ deterministically.
    The last line used the fact that $\Pr_{c \sim C_h(s,a)}[\c + c \leq B] = 1$. Thus, we see that $(s', \c') \in SE_{h+1}$.
\end{proof}

\begin{claim}\label{claim: SE-in-mS}
    For all $h \in [H+1]$, $\mS_h \supseteq SE_h$.
\end{claim}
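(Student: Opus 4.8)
The plan is to establish $\mS_h \supseteq SE_h$ by induction on $h$, essentially running the argument of \cref{claim: mS-in-SE} in reverse. For the base case $h=1$, the only element of $\H_1$ is $\tau_1 = (s_0)$, along which $\c_1 = 0$ is an empty sum, so $SE_1 \subseteq \{(s_0,0)\} = \mS_1$ immediately.

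For the inductive step I would assume $SE_h \subseteq \mS_h$ and take $(s',\c') \in SE_{h+1}$, witnessed by some policy $\pi$ and partial history $\tau_{h+1} \in \H_{h+1}$ with $\P^{\pi}_{\tau_{h+1}}[s_{h+1}=s',\c_{h+1}=\c'] = 1$ and $\P^{\pi}_{\tau_k}[\c_{k+1}\leq B]=1$ for every $k\in[h]$. First I would decompose $\tau_{h+1}=(\tau_h,a,c',s')$, write $s := s_h$ and $\c := \c_h$ for the (deterministically determined) state and cumulative cost of the prefix $\tau_h$, so that $\c' = \c + c'$. Since conditioning on $\tau_{h+1}$ requires $\P^{\pi}[\tau_{h+1}] > 0$, also $\P^{\pi}[\tau_h] > 0$, and \eqref{equ: mp} gives $\pi_h(a\mid\tau_h)>0$, $C_h(c'\mid s,a)>0$, and $P_h(s'\mid s,a)>0$. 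Then I would argue that the prefix $\tau_h$ together with the same $\pi$ witnesses $(s,\c)\in SE_h$: $\P^{\pi}_{\tau_h}[s_h=s,\c_h=\c]=1$ because $\tau_h$ has positive probability and fixes $s_h$ and $\c_h$, while the safety conditions at times $k\in[h-1]$ are inherited from those at times $k\in[h]$. The induction hypothesis then yields $(s,\c)\in\mS_h$.

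With $(s,\c)\in\mS_h$ in hand, the last task is to verify the defining conditions of $\mS_{h+1}$ in \cref{def: states}. Three of them are already available: $a\in\A$ with $P_h(s'\mid s,a)>0$, $c'\in C_h(s,a)$ (from $C_h(c'\mid s,a)>0$), and $\c' = \c+c'$. The only remaining condition is $\Pr_{c\sim C_h(s,a)}[\c+c\leq B]=1$, which I would deduce from the safety condition at time $k=h$. Conditioning on $\tau_h$ pins $s_h=s$ and $\c_h=\c$, so the law of total probability over $a_h$ gives
\[1 = \P^{\pi}_{\tau_h}[\c_{h+1}\leq B] = \sum_{a'}\pi_h(a'\mid\tau_h)\,\Pr_{c\sim C_h(s,a')}[\c+c\leq B].\]
Being a convex combination of numbers in $[0,1]$ equal to $1$, every term with $\pi_h(a'\mid\tau_h)>0$ must equal $1$; since $\pi_h(a\mid\tau_h)>0$, this gives $\Pr_{c\sim C_h(s,a)}[\c+c\leq B]=1$, hence $(s',\c')\in\mS_{h+1}$ and the induction closes.

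The part I expect to be the main obstacle — really the only non-bookkeeping step — is extracting the action-specific almost-sure budget bound $\Pr_{c\sim C_h(s,a)}[\c+c\leq B]=1$ from the aggregate statement $\P^{\pi}_{\tau_h}[\c_{h+1}\leq B]=1$; it hinges on recognizing that conditioning on a fixed history makes $s_h$ and $\c_h$ deterministic and that a convex combination equal to $1$ forces each active summand to be $1$. Everything else is careful tracking of conditional probabilities and history prefixes, exactly as in \cref{claim: mS-in-SE}.
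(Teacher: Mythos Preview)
Your proposal is correct and follows essentially the same approach as the paper: induction on $h$, decomposing $\tau_{h+1}$ to show the prefix witnesses $(s,\c)\in SE_h$, invoking the induction hypothesis, and then extracting $\Pr_{c\sim C_h(s,a)}[\c+c\leq B]=1$ from the convex-combination identity $1=\sum_{a'}\pi_h(a'\mid\tau_h)\Pr_{c\sim C_h(s,a')}[\c+c\leq B]$. Your phrasing of the last step---that each summand with $\pi_h(a'\mid\tau_h)>0$ must equal $1$, and then specializing to $a$---is in fact slightly cleaner than the paper's.
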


\begin{proof}
    We proceed by induction on $h$. For the base case, we consider $h = 1$. Observe that for any $\pi \in \Pi$, the only $\tau_1$ that has non-zero probability is $\tau_1 = s_0$ since $M$ starts at time $1$ in state $s_0$. Also, $\c_1 = 0$ since no cost has been accrued by time $1$. Thus, $SE_1 \subseteq \set{(s_0,0)} = \mS_1$. 

    For the inductive step, we consider any $h \geq 1$. Let $(s',\c') \in SE_{h+1}$. By definition, there exists some $\pi \in \Pi$ and some $\tau \in \H$ satisfying,
    \[\P^{\pi}_{\tau_{h+1}}[s_{h+1} = s', \c_{h+1} = \c'] = 1 \text{ and } \forall k \in [h], \; \P^{\pi}_{\tau_k}[\c_{k+1} \leq B] = 1.\]
    Decompose $\tau_{h+1} = (\tau_h, a, c, s')$ where $s_h = s$ and $\c_h = \c$. Since $\P^{\pi}_{\tau_{h+1}}[s_{h+1} = s', \c_{h+1} = \c'] = 1$, we observe that 
    \[0 < \P^{\pi}[\tau_{h+1}] = \P^{\pi}[\tau_h] \pi_h(a \mid \tau_h) C_h(c \mid s,a) P_h(s' \mid s,a).\] 
    Thus, $\P^{\pi}_{\tau_h}[s_h = s, \c_h = \c] = 1$. Also, we immediately know that $\P^{\pi}_{\tau_k}[\c_{k+1} \leq B]$ $\forall k \in [h-1]$ since any sub-history of $\tau_h$ is also a sub-history of $\tau_{h+1}$. Hence, $(s,\c) \in SE_h$ and so the induction hypothesis implies that $(s,\c) \in \mS_h$. We have already seen that $\c' = \c + c$, $C_h(c \mid s,a) > 0$ and $P_h(s' \mid s,a)$. To show that $(s',\c') \in \mS_{h+1}$, it then suffices to argue that $\Pr_{c \sim C_h(s,a)}[\c + c \leq B] = 1$. To this end, observe as in \cref{claim: mS-in-SE} that,
    \[1 = \P^{\pi}_{\tau_h}[\c_{h+1} \leq B] = \sum_{a'} \pi_h(a' \mid \tau_h) \Pr_{C_h(s,a')}[\c + c \leq B].\]
    This implies that for all $a'$, $\Pr_{C_h(s,a')}[\c + c \leq B] = 1$, otherwise we would have,
    \[\sum_{a'} \pi_h(a' \mid \tau_h) \Pr_{C_h(s,a')}[\c + c \leq B] < \sum_{a'} \pi_h(a' \mid \tau_h) = 1,\]
    which is a contradiction. Thus, $c + C_h(s,a) \leq B$ and so $(s',\c') \in \mS_{h+1}$.
\end{proof}

\paragraph{Proof of \cref{lem: feasibility}.}

\begin{proof}\
    \paragraph{First Claim.} 
    Fix any $(s,\c)$ and suppose that $\P^{\pi}_M[s_h = s, \c_h = \c] > 0$ where $\pi \in \Pi_M$. Since $\pi \in \Pi_M$, we know that $\P^{\pi}_M\brac{\forall k \in [H] \sum_{t = 1}^k c_t \leq B} = 1$. Since the history distribution has finite support whenever the cost distributions do, we see for any history $\tau_{h+1} \in \H_{h+1}$ with $\P^{\pi}_M[\tau_{h+1}] > 0$ it must be the case that $\c_{h+1} \leq B$. 
    In fact, it must also be the case that $\Pr_{c \sim C_h(s_h,a_h)}[c + \c_h \leq B] = 1$ otherwise there exists a realization of $c$ and $\c_h$ under $\pi$ for which the anytime constraint would be violated. 
    
    Moreover, this must hold for any subhistory of $\tau_{h+1}$ since those are also realized with non-zero probability under $\pi$. In symbols, we see that $\P^{\pi}_{\tau_k}[\c_{k+1} \leq B] = 1$ for all $k \in [h]$. Thus, $(s,\c) \in SE_h$. Since $(s,\c)$ was arbitrary, we conclude by \cref{lem: safe-exploration} that $\mS_h = SE_h \supseteq \mathcal{F}_h$. 

    observe that $|\mS_1| = 1$. By the inductive definition of $\mS_{h+1}$, we see that for any $(s,\c) \in \mS_h$, $(s,\c)$ is responsible for adding at most $S\sum_{a \in \A}|C_h(s,a)| \leq SAn$ pairs $(s',\c')$ into $\mS_{h+1}$. Specifically, each next state $s'$, current action $a$, and current cost $c \in C_h(s,a)$ yields at most one new element of $\mS_{h+1}$. Thus, $|\mS_{h+1}| \leq SAn |\mS_h|$. Since $S,A,n < \infty$, we see inductively that $|\mS_h| < \infty$ for all $h \in [H+1]$.

    \paragraph{Second Claim.} Suppose that for each $(h,s)$ there is some $a$ with $C_h(s,a) = \{0\}$. For any $(s,\c) \in SE_{h+1}$, we know that there exists some $\pi$ and $\tau_{h+1}$ for which $\P^{\pi}_{\tau_k}[\c_{k+1} \leq B] = 1$ for all $k \in [h]$. Now define the deterministic policy $\pi'$ by $\pi'_k(\tau_k) = \pi_k(\tau_k)$ for all subhistories $\tau_k$ of $\tau_{h+1}$, and $\pi'_k(\tau_k) = a$ for any $a$ with $C_k(s_k,a) = \{0\}$ otherwise. Clearly, $\pi'$ never accumulates more cost than a subhistory of $\tau_{h+1}$ since it always takes $0$ cost actions after inducing a different history than one contained in $\tau_{h+1}$. 
    
    Since under any subhistory of $\tau_{h+1}$, $\pi'$ satisfies the constraints by definition of $SE_{h+1}$, we know that $\pi' \in \Pi_M$. We also see that $\P^{\pi'}_M[s_{h+1} = s, \c_{h+1} = \c] > 0$ and so $(s, \c) \in \mathcal{F}_{h+1}$. Since $(s,\c)$ was arbitrary we have that $SE_{h+1} = \mathcal{F}_{h+1}$. As $h$ was arbitrary the claim holds.
\end{proof}

\begin{observation}\label{obs: bounded}
    For all $h > 1$, if $(s,\c) \in \mS_h$, then $\c \leq B$. 
\end{observation}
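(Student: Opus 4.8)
The plan is to read the bound directly off \cref{def: states}, with no induction required. Fix $h > 1$ and write $h = g+1$ with $g \geq 1$. Any $(s,\c) \in \mS_h = \mS_{g+1}$ is, by the forward-induction rule, produced from some $(s',\c_0) \in \mS_g$, an action $a \in \A$, and a cost $c \in C_g(s',a)$ with $\c = \c_0 + c$, $P_g(s \mid s',a) > 0$, and — crucially — $\Pr_{c'' \sim C_g(s',a)}[\c_0 + c'' \leq B] = 1$. So the entire content of the claim is that this almost-sure inequality, together with $c$ being a \emph{realized} (support) cost, forces $\c_0 + c \leq B$.

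To see that, invoke \cref{assump: discrete}: $C_g(s',a)$ is finitely supported, and $c$ lies in its support, so $\Pr_{c'' \sim C_g(s',a)}[c'' = c] > 0$. If we had $\c_0 + c > B$, then the event $\{c'' : \c_0 + c'' \leq B\}$ would exclude the atom at $c$ and hence have probability at most $1 - \Pr[c'' = c] < 1$, contradicting the defining condition of $\mS_{g+1}$. Therefore $\c = \c_0 + c \leq B$. (The case $g = 1$ is subsumed: there $\mS_1 = \{(s_0,0)\}$, so $\c_0 = 0$ and $\c = c \leq B$.)

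The main — and only — obstacle is this elementary measure-theoretic step connecting ``$\c_0 + c'' \leq B$ almost surely'' to ``every support point $c$ satisfies $\c_0 + c \leq B$''; everything else is just unwinding the definition. As an alternative route one could instead cite \cref{lem: safe-exploration}: $\mS_h = SE_h$, and any $(s,\c) \in SE_h$ with $h > 1$ comes with a policy $\pi$ and a history $\tau_h$ (of positive probability, so $\c_h = \c$ is actually realized) satisfying $\P^{\pi}_{\tau_{h-1}}[\c_h \leq B] = 1$, whence $\c \leq B$; but the direct argument above is shorter and self-contained.
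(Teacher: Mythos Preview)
Your proof is correct and follows essentially the same approach as the paper: both unwind \cref{def: states} directly (no induction) and use the finite-support assumption to pass from $\Pr_{c \sim C_g(s',a)}[\c_0 + c \leq B] = 1$ to the pointwise bound $\c_0 + c \leq B$ for the realized support point $c$. Your version spells out the measure-theoretic step more carefully than the paper does, but the argument is the same.
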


\begin{proof}
    For any $h \geq 1$, any $(s,\c) \in \mS_{h+1}$ satisfies $\c' = \c + c$ where $c \in C_h(s,a)$ and $\Pr_{c \sim C_h(s,a)}[\c + c \leq B] = 1$. Since $C_h(s,a)$ has finite support, this means that for any such $c \in C_h(s,a)$, we have that $\c+c \leq B$. In particular, $\c' = \c + c \leq B$.
\end{proof}

\subsection{Proof of \texorpdfstring{\cref{lem: value}}{subsec: Lemma 3}}

The tabular \emph{policy evaluation equations} (Equation 4.2.6 \cite{MDP-book}) naturally translate to the cost setting as follows:
\begin{equation*}
    V_h^{\pi}(\tau_h) = \sum_{a \in \A} \pi_h(a \mid \tau_h) \paren{r_h(s,a) + \sum_{c} \sum_{s'} C_h(c \mid s,a) P_h(s' \mid s,a) V_{h+1}^{\pi}(\tau_{h}, a, c, s')}.
\end{equation*}
We can write this more generally in the compact form:
\begin{equation}\label{equ: PE}\tag{PE}
    V_h^{\pi}(\tau_h) = \E^{\pi}_{\tau_h}\brac{r_h(s,a) + \E^{\pi}_{\tau_{h+1}}\brac{V_{h+1}^{\pi}(\tau_{h+1})}}.
\end{equation}
The classic \emph{Bellman Optimality Equations} (Equation 4.3.2 \cite{MDP-book}) are,
\begin{equation*}
    \mV^*_h(s) = \max_{a \in \A(s)} r_h(s,a) + \E_{s'} \brac{\mV^*_{h+1}(s')}
\end{equation*}
Observe that the optimality equations for $\mM$ are,
\[\mV_h^*(\ms) = \max_{\ma \in \mA_h(\ms)} \mr_h(\ms,\ma) + \E_{\ms'}\brac{ \mV_{h+1}^*(\ms')},\]
which reduce to
\begin{equation}\label{equ: be}\tag{BE}
    \mV_h^*(s,\c) = \max_{a: \Pr_{c \sim C_h(s,a)}[\c + c \leq B] = 1} r_h(s,a) + \E_{c,s'}\brac{\mV_{h+1}^*(s', \c+c)},
\end{equation}
where $\mV^*_{H+1}(s,\c) = 0$. We then define $\mV^*_h(s,\c) = \sup_{\pi} V_h^{\pi}(s,\c)$. Note, if $\pi$ chooses any action $a$ for which $a \not \in \mA(s,\c)$, then $V_h^{\pi}(s,\c) := -\infty$ and we call $\pi$ infeasible for $\mM$.

\begin{observation}\label{lem: traj-update}
    For any $\tau_h \in W_h(s, \c)$, if $a \in \A$ satisfies $\Pr_{c \sim C_h(s,a)}[\c + c \leq B] = 1$, $s' \in \S$ satisfies $P_h(s' \mid s,a) > 0$, and $c \in C_h(s,a)$, then for $\tau_{h+1} := (\tau_h, a, c, s')$, $\tau_{h+1} \in W_{h+1}(s', \c + c)$.
\end{observation}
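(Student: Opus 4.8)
The plan is to take a policy witnessing $\tau_h \in W_h(s,\c)$, surgically overwrite it at the single history $\tau_h$ so that it deterministically plays $a$ there, and then verify that the modified policy witnesses $\tau_{h+1} \in W_{h+1}(s',\c+c)$.

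First I would let $\pi$ be a policy with $\P^{\pi}_{\tau_h}[s_h = s, \c_h = \c] = 1$ and $\P^{\pi}_{\tau_k}[\c_{k+1}\le B] = 1$ for all $k\in[h-1]$, which exists by definition of $W_h(s,\c)$. In particular, this forces $\P^{\pi}[\tau_h] > 0$, and the last state of $\tau_h$ is $s$ with cumulative cost $\c_h = \c$. Define $\pi'$ to agree with $\pi$ everywhere except $\pi'_h(\tau_h) := a$. Since $\P^{\pi}[\tau_h]$ and, for $k\le h-1$, the conditional events $\{\c_{k+1}\le B\}$ given $\tau_k$ depend only on $\pi$ restricted to proper subhistories of $\tau_h$, the surgery does not disturb them: $\P^{\pi'}[\tau_h] = \P^{\pi}[\tau_h] > 0$ and $\P^{\pi'}_{\tau_k}[\c_{k+1}\le B] = 1$ for all $k\in[h-1]$.

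Next I would check $\P^{\pi'}[\tau_{h+1}] > 0$. By the Markov property \eqref{equ: mp}, $\P^{\pi'}[\tau_{h+1}] = \P^{\pi'}[\tau_h]\,\pi'_h(a\mid\tau_h)\,C_h(c\mid s,a)\,P_h(s'\mid s,a)$, and every factor is positive: $\P^{\pi'}[\tau_h]>0$ from the previous step, $\pi'_h(a\mid\tau_h) = 1$ by construction, $C_h(c\mid s,a) > 0$ since $c\in C_h(s,a)$, and $P_h(s'\mid s,a) > 0$ by hypothesis. Hence $\P^{\pi'}_{\tau_{h+1}}[\cdot]$ is well-defined, and since $\tau_{h+1}$ explicitly records $s' = s_{h+1}$ and the costs $c_1,\dots,c_{h-1},c$, conditioning on it fixes $s_{h+1} = s'$ and $\c_{h+1} = \c + c$ almost surely.

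Finally, it remains to handle the new index $k = h$: I would compute $\P^{\pi'}_{\tau_h}[\c_{h+1}\le B]$ exactly as in the proof of \cref{claim: mS-in-SE} — given $\tau_h$ we have $\c_h = \c$ and $a_h = a$ deterministically, so $c_h\sim C_h(s,a)$ and $\P^{\pi'}_{\tau_h}[\c_{h+1}\le B] = \Pr_{c\sim C_h(s,a)}[\c + c\le B] = 1$ by hypothesis. Combining the three steps, $\pi'$ witnesses $\tau_{h+1}\in W_{h+1}(s',\c+c)$. I do not anticipate a genuine obstacle; the only delicate point is making explicit that overwriting $\pi$ at $\tau_h$ cannot retroactively change the probability of reaching $\tau_h$ nor the feasibility events on its proper subhistories, which is precisely why the modification is harmless.
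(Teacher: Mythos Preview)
Your proposal is correct and follows essentially the same approach as the paper: take a witnessing policy for $\tau_h$, overwrite $\pi_h(\tau_h) := a$, use the Markov property to get $\P^{\pi}[\tau_{h+1}]>0$, and invoke the argument of \cref{claim: mS-in-SE} for the $k=h$ feasibility condition. You are simply a bit more explicit than the paper about why the surgery leaves the earlier probabilities and feasibility events intact, which is fine.
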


\begin{proof}
    If $\tau_h \in W_h(s,\c)$, then there exists some $\pi \in \Pi$ with,
    \[\P^{\pi}_{\tau_h}\brac{s_h = s, \c_h = \c} = 1 \text{ and } \forall k \in [h-1], \; \P^{\pi}_{\tau_k}\brac{\c_{k+1} \leq B} = 1.\]
    Define $\pi_h(\tau_h) = a$. Immediately, we see, 
    \[\P^{\pi}[\tau_{h+1}] = \P^{\pi}[\tau_h]\pi_h(a \mid \tau_h) C_h(c \mid s,a) P_h(s' \mid s,a) > 0,\]
    so $\P^{\pi}_{\tau_{h+1}}[s_{h+1} = s', \c_{h+1} = \c + c] = 1$. Also, $\P^{\pi}_{\tau_h}[\c_{h+1} \leq B] = \Pr_{c \sim C_h(s,a)}[\c + c \leq B] = 1$ using the same argument as in \cref{claim: mS-in-SE}. Thus, $\tau_{h+1} \in W_{h+1}(s',\c + c)$.
\end{proof}

Now, we split the proof of \cref{lem: value} into two claims. 

\begin{claim}\label{claim: V*>VB}
    For any $h \in [H+1]$, if $(s,\c) \in \mS_h$ and $\tau_h \in W_h(s,\c)$, then $\mV^*_h(s,\c) \geq V^*(\tau_h).$
\end{claim}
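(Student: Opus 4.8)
The plan is to induct backwards on $h$, from $h = H+1$ down to $h = 1$, peeling off a single step of the Bellman recursion at each stage. I will in fact prove the slightly stronger statement that $\mV^*_h(s,\c) \ge V^{\pi}_h(\tau_h)$ for \emph{every} feasible conditional policy $\pi \in \Pi_M(\tau_h)$; taking the maximum over such $\pi$ then yields $\mV^*_h(s,\c) \ge V^*_h(\tau_h)$. If $\Pi_M(\tau_h) = \varnothing$ there is nothing to prove, since then $V^*_h(\tau_h) = -\infty$. The base case $h = H+1$ is immediate: both sides equal $0$ by definition.

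For the inductive step, fix $(s,\c) \in \mS_h$, $\tau_h \in W_h(s,\c)$ and $\pi \in \Pi_M(\tau_h)$. The crucial first observation is that every action $a$ in the support of $\pi_h(\tau_h)$ lies in $\mA_h(s,\c)$: since $\pi \in \Pi_M(\tau_h)$ we have $1 = \P^{\pi}_{\tau_h}[\c_{h+1} \le B] = \sum_{a} \pi_h(a \mid \tau_h)\Pr_{c \sim C_h(s,a)}[\c + c \le B]$, and a convex combination of numbers in $[0,1]$ equalling $1$ forces each active term to equal $1$ (the same computation as in \cref{claim: mS-in-SE}). Hence $\mA_h(s,\c) \ne \varnothing$, and for any active $a$, any $c \in C_h(s,a)$, and any $s'$ with $P_h(s' \mid s,a) > 0$, the successor history $\tau_{h+1} := (\tau_h, a, c, s')$ satisfies: (i) $(s', \c + c) \in \mS_{h+1}$, directly from \cref{def: states}; (ii) $\tau_{h+1} \in W_{h+1}(s', \c+c)$, by \cref{lem: traj-update}; and (iii) $\pi \in \Pi_M(\tau_{h+1})$, because conditioning the probability-one feasibility event $\{\forall k \in [H],\ \c_{k+1} \le B\}$ on the positive-probability continuation $\tau_{h+1}$ preserves probability one, and this event does not depend on the time index.

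Combining (i)--(iii) with the induction hypothesis gives $V^{\pi}_{h+1}(\tau_{h+1}) \le V^*_{h+1}(\tau_{h+1}) \le \mV^*_{h+1}(s', \c+c)$ for every such $\tau_{h+1}$. Plugging this into the policy-evaluation identity \eqref{equ: PE} and then bounding the $\pi_h(\tau_h)$-average of action values by a maximum over $\mA_h(s,\c)$:
\begin{align*}
    V^{\pi}_h(\tau_h)
    &= \E^{\pi}_{\tau_h}\!\brac{r_h(s,a) + \textstyle\sum_{c,s'} C_h(c \mid s,a)P_h(s' \mid s,a)\, V^{\pi}_{h+1}(\tau_{h+1})}\\
    &\le \E^{\pi}_{\tau_h}\!\brac{r_h(s,a) + \textstyle\sum_{c,s'} C_h(c \mid s,a)P_h(s' \mid s,a)\, \mV^*_{h+1}(s', \c+c)}\\
    &\le \max_{a \in \mA_h(s,\c)}\brac{r_h(s,a) + \textstyle\sum_{c,s'} C_h(c \mid s,a)P_h(s' \mid s,a)\, \mV^*_{h+1}(s', \c+c)}
    = \mV^*_h(s,\c),
\end{align*}
where the second inequality uses that every active action lies in $\mA_h(s,\c)$ and an average is at most a maximum, and the final equality is the Bellman optimality equation \eqref{equ: be} for $\mM$. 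Maximizing the left side over $\pi \in \Pi_M(\tau_h)$ closes the induction.

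The step I expect to need the most care is establishing (i) and (iii) — that every action the feasible policy can take at $\tau_h$ keeps it inside the safe-exploration state set and leaves it feasible after conditioning on the outcome. This is precisely where the almost-sure semantics of the anytime constraint interacts with the finite support of the cost distributions (\cref{assump: discrete}); the remaining algebra is a standard one-step Bellman comparison together with the elementary ``average $\le$ maximum'' bound.
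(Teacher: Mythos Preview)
Your proof is correct and follows essentially the same approach as the paper: backward induction on $h$, showing that every action in the support of $\pi_h(\tau_h)$ is safe (i.e., lies in $\mA_h(s,\c)$), that $\pi$ remains in $\Pi_M(\tau_{h+1})$ after conditioning, and then combining the induction hypothesis with \eqref{equ: PE}, the average-$\le$-max bound, and \eqref{equ: be}. The only cosmetic difference is that you run the chain of inequalities from $V^{\pi}_h(\tau_h)$ up to $\mV^*_h(s,\c)$, while the paper starts from $\mV^*_h(s,\c)$ and works down; you are also slightly more explicit in isolating the three ingredients (i)--(iii), in particular verifying $(s',\c+c)\in\mS_{h+1}$ directly from \cref{def: states}, which the paper leaves implicit in its restriction of the maximum to safe actions.
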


\begin{proof}
    We proceed by induction on $h$. For the base case, we consider $h = H+1$. Let $(s,\c) \in \mS_{H+1}$ and $\tau_{H+1} \in W_{H+1}(s, \c)$. By definition, $\mV_{H+1}^*(s, \c) = 0$. If $\Pi_M(\tau_{H+1}) = \varnothing$, then $V^*(\tau_{H+1}) = -\infty < 0 = \mV_{H+1}^*(s, \c)$. Otherwise, for any $\pi \in \Pi_M(\tau_{H+1})$, $V_{H+1}^{\pi}(\tau_{H+1}) = 0$ by definition. Since $\pi$ was arbitrary we see that $V^*(\tau_{H+1}) = 0 \leq 0 = \mV_{H+1}^*(s, \c)$. 

    For the inductive step, we consider $h \leq H$. Fix any $(s,\c) \in \mS_h$ and let $\tau_h \in W_h(s, \c)$. If $\Pi_M(\tau_h) = \varnothing$, then $V^*(\tau_{h}) = -\infty \leq \mV_{h}^*(s, \c)$.
    Otherwise, fix any $\pi \in \Pi_M(\tau_h)$. 
    Suppose $\tau_{h+1} = (\tau_h, a, c, s')$ where $\pi_h(a \mid \tau_h) > 0$, $C_h(c \mid s,a) > 0$, and $P_h(s' \mid s,a) > 0$.
    For any full history $\tau \in \H$ satisfying $\P^{\pi}_{\tau_{h+1}}[\tau] > 0$, we have $\P^{\pi}_{\tau_h}[\tau] = \P^{\pi}_{\tau_{h+1}}[\tau]\P^{\pi}_{\tau_h}[\tau_{h+1}] > 0.$
    Since $\pi \in \Pi_M(\tau_h)$, we know that for all complete histories $\tau \in \H$ with $\P^{\pi}_{\tau_h}[\tau] > 0$ that  $\c_{k+1} \leq B$ for all $k \in [H]$. 
    Consequently, for any $\tau \in \H$ satisfying $\P^{\pi}_{\tau_{h+1}}[\tau] > 0$, $\c_{k+1} \leq B$ for all $k \in [H]$. This means that $\P^{\pi}_{\tau_{h+1}}[\c_{k+1} \leq B] = 1$ for all $k \in [H]$ and so $\pi \in \Pi_M(\tau_{h+1})$. 

    By \eqref{equ: be}, 
    \begin{align*}
        \mV_h^*(s,\c) &= \max_{a: \Pr_{c \sim C_h(s,a)}[\c + c \leq B] = 1} r_h(s,a) + \E_{c,s'}\brac{\mV_{h+1}^*(s', \c+c)} \\
        &\geq \max_{a: \Pr_{c \sim C_h(s,a)}[\c + c \leq B] = 1} r_h(s,a) + \E_{c, s'}\brac{V_{h+1}^{*}(\tau_{h+1})} \\
        &\geq \sum_{a: \Pr_{c \sim C_h(s,a)}[\c + c \leq B] = 1} \pi_h(a \mid \tau_h) \paren{r_h(s,a) + \E_{\tau_{h+1}}\brac{V_{h+1}^{*}(\tau_{h+1})}} \\
        &\geq \sum_{a: \Pr_{c \sim C_h(s,a)}[\c + c \leq B] = 1} \pi_h(a \mid \tau_h) \paren{r_h(s,a) + \E_{\tau_{h+1}}\brac{V_{h+1}^{\pi}(\tau_{h+1})}} \\
        &= \sum_{a \in \A} \pi_h(a \mid \tau_h) \paren{r_h(s,a) + \E_{\tau_{h+1}}\brac{V_{h+1}^{\pi}(\tau_{h+1})}} \\
        &= V_h^{\pi}(\tau_h).
    \end{align*}
    The second line follows from the induction hypothesis, where $\tau_{h+1} = (\tau_h,a,c,s') \in W_{h+1}(s',\c + c)$ by \cref{lem: traj-update}. The third line follows since the finite maximum is larger than the finite average (see Lemma 4.3.1 of ~\cite{MDP-book}). The fourth line follows since $\pi \in \Pi_M(\tau_{h+1})$ implies $V^*(\tau_{h+1}) \geq V_{h+1}^{\pi}(\tau_h)$.
    $\Pr_{c \sim C_h(s,a)}[\c + c \leq B] = 1$. The fifth line follows since $\pi$ cannot place non-zero weight on any action $a$ satisfying $\c + C_h(s,a) > B$. Otherwise, we would have, 
    \[\P^{\pi}_{\tau_h}[\c_{h+1} > B] \geq \pi_h(a \mid \tau_h)\Pr_{c \sim C_h(s,a)}[\c + c > B] > 0,\] 
    contradicting that $\pi \in \Pi_M(\tau_h)$.
    The final line uses \eqref{equ: PE}.  

    Since $\pi$ was arbitrary, we see that $\mV_h^*(s,\c) \geq V^*(\tau_h)$. 
    
\end{proof}

\begin{claim}\label{claim: V*<VB}
    For any $h \in [H+1]$, if $(s,\c) \in \mS_h$ and $\tau_h \in W_h(s,\c)$, then $\mV^*_h(s,\c) \leq V^*(\tau_h).$
\end{claim}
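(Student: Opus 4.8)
\textbf{Proof proposal for \cref{claim: V*<VB}.}
The plan is to establish the reverse inequality by taking an optimal (deterministic, Markovian) policy $\bar\pi$ for the finite MDP $\mM$ and ``pushing it down'' to a cost-history-dependent policy $\pi$ for $M$: on histories $\tau_k$ extending $\tau_h$, with current state $s_k$ and cumulative cost $\c_k$, set $\pi_k(\tau_k) := \bar\pi_k(s_k,\c_k)$, and let $\pi$ follow $\tau_h$ on the prefixes of $\tau_h$ (elsewhere $\pi$ is irrelevant). If $\mV^*_h(s,\c) = -\infty$ there is nothing to prove, so assume it is finite; then $\bar\pi$ is feasible for $\mM$, i.e. at every augmented state reachable from $(s,\c)$ under $\bar\pi$ it selects an action in $\mA_k(\cdot,\cdot)$, since otherwise the policy-evaluation recursion for $\mM$ would propagate a $-\infty$ back to $(s,\c)$. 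I will show (i) $\pi \in \Pi_M(\tau_h)$ and (ii) $V^{\pi}_h(\tau_h) = \mV^*_h(s,\c)$. Granting these, $V^*_h(\tau_h) = \max_{\pi' \in \Pi_M(\tau_h)} V^{\pi'}_h(\tau_h) \ge V^{\pi}_h(\tau_h) = \mV^*_h(s,\c)$, which is the claim; combined with \cref{claim: V*>VB} this completes \cref{lem: value}.

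For (i) I run a forward induction on $k$ from $h$ to $H$ over histories $\tau_k$ with $\P^{\pi}_{\tau_h}[\tau_k] > 0$, maintaining the invariant that $(s_k,\c_k) \in \mS_k$, that $\tau_k \in W_k(s_k,\c_k)$, and that $(s_k,\c_k)$ is reachable in $\mM$ from $(s,\c)$ at time $h$ under $\bar\pi$. The base case is $\tau_h$ itself, using the hypotheses $(s,\c)\in\mS_h$ and $\tau_h\in W_h(s,\c)$. For the step, the action taken is $a_k = \bar\pi_k(s_k,\c_k)$, which by feasibility of $\bar\pi$ lies in $\mA_k(s_k,\c_k)$, i.e. $\Pr_{c\sim C_k(s_k,a_k)}[\c_k + c \le B] = 1$; hence every realized $\c_{k+1} = \c_k + c_k \le B$, which is exactly the step-$k$ anytime constraint; $(s_{k+1},\c_{k+1})\in\mS_{k+1}$ by \cref{def: states}; $\tau_{k+1}=(\tau_k,a_k,c_k,s_{k+1})\in W_{k+1}(s_{k+1},\c_{k+1})$ by \cref{lem: traj-update}; and $(s_{k+1},\c_{k+1})$ is $\bar\pi$-reachable in $\mM$ via one more transition since $\pi$ mimics $\bar\pi$ on augmented states. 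Because $\tau_h\in W_h(s,\c)$ also forces $\c_{j+1}\le B$ for all $j\in[h-1]$, we conclude $\P^{\pi}_{\tau_h}[\forall k\in[H],\ \c_{k+1}\le B] = 1$, i.e. $\pi\in\Pi_M(\tau_h)$.

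For (ii) I run a backward induction on $k$ from $H+1$ down to $h$, proving $V^{\pi}_k(\tau_k) = \mV^{\bar\pi}_k(s_k,\c_k)$ for every $\tau_k$ with $\P^{\pi}_{\tau_h}[\tau_k] > 0$. The base case $k=H+1$ is $0 = 0$. For the step, \eqref{equ: PE} gives $V^{\pi}_k(\tau_k) = r_k(s_k,a_k) + \E_{c,s'}[V^{\pi}_{k+1}(\tau_{k+1})]$ with $a_k = \bar\pi_k(s_k,\c_k)$; each successor $\tau_{k+1}$ is again $\pi$-reachable from $\tau_h$, so the induction hypothesis replaces $V^{\pi}_{k+1}(\tau_{k+1})$ by $\mV^{\bar\pi}_{k+1}(s_{k+1},\c_k+c)$, and the analogue of \eqref{equ: PE} for $\mM$ collapses the right-hand side to $\mV^{\bar\pi}_k(s_k,\c_k)$. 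Taking $k=h$ and using optimality of $\bar\pi$ yields $V^{\pi}_h(\tau_h) = \mV^{\bar\pi}_h(s,\c) = \mV^*_h(s,\c)$.

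The main obstacle is step (i): this is precisely where the forward-induction construction of $\mS$ and the safe-exploration action sets $\mA_k(s,\c)$ earn their keep. The augmented MDP is built so that \emph{legality} of an action in $\mM$ is exactly the one-step almost-sure budget condition; one must verify that propagating this purely local condition forward along every realizable trajectory of the induced history policy yields \emph{global}, all-time, almost-sure feasibility. Care is also needed in the $-\infty$ bookkeeping: handling the case where $\mV^*_h(s,\c)$ is not finite, and arguing that an optimal $\bar\pi$ never selects an action outside $\mA_k$ at any $\bar\pi$-reachable augmented state.
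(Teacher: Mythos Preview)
Your proposal is correct and follows essentially the same approach as the paper: translate an optimal deterministic policy $\bar\pi$ for $\mM$ into a cost-history policy $\pi$ for $M$, use forward induction along $\pi$-reachable extensions of $\tau_h$ to show $(s_k,\c_k)\in\mS_k$ and hence $\c_{k+1}\le B$ (yielding $\pi\in\Pi_M(\tau_h)$), and then a backward induction through \eqref{equ: PE} to identify $V^{\pi}_h(\tau_h)$ with $\mV^{\bar\pi}_h(s,\c)=\mV^*_h(s,\c)$. The only cosmetic difference is that the paper wraps the argument in an outer backward induction on $h$ (with a separate base case at $H{+}1$), whereas you fix $h$ and run the two inner inductions directly; the paper's outer induction hypothesis is never actually invoked, so the two presentations are equivalent.
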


\begin{proof}
    We proceed by induction on $h$. For the base case, we consider $h = H+1$. If $(s,\c) \in \mS_{H+1}$ and $\tau_{H+1} \in W_{H+1}(s,\c)$, then by definition there exists some $\pi \in \Pi$ for which $\P^{\pi}[\tau_{H+1}] > 0$ and $\P^{\pi}_{\tau_k}[\c_{k+1} \leq B] = 1$ for all $k \in [H]$. We saw in the proof of \cref{claim: SE-in-mS} that for any $k \leq H$, $(s_{k+1}, \c_{k+1}) \in \mS_{k+1}$. Thus, by \cref{obs: bounded}, we have $\c_{k+1} \leq B$ for all $k \in [H]$ which implies that $\P^{\pi}_{\tau_{H+1}}[\c_{k+1} \leq B] = 1$ for all $k \in [H]$. Hence, $\pi \in \Pi_M(\tau_{H+1})$ is a feasible solution to the optimization defining $V^*(\tau_{H+1})$ implying that $V^*(\tau_{H+1}) = 0$. By definition, we also have $\mV^*_{H+1}(s,\c) = 0 \leq V^*(\tau_{H+1})$.

    For the inductive step, we consider $h \leq H$. Let $(s,\c) \in \mS_h$ and $\tau_{h} \in W_h(s,\c)$. 
    Consider the deterministic optimal partial policy $\bar \pi$ for $\mM$ defined by solutions to \eqref{equ: be}. Formally, for all $t \geq h$,
    \[\bar \pi_t(s, \c) \in \argmax_{a: \Pr_{C_t(s,a)} [\c + c \leq B] = 1} r_t(s,a) + \E_{c,s'}\brac{\mV_{t+1}^*(s', \c+c)}.\]
    If there is no feasible action for any of these equations of form $(t,s',\c')$ where $t \geq h$ and $(s',\c') \in \mS_t$ are reachable from $(h,s,\c)$ with non-zero probability, then $\mV^*_h(s,\c) = -\infty$. In this case, clearly $\mV^*_h(s, \c) \leq V^*(\tau_h)$. Otherwise, suppose solutions to \eqref{equ: be} exist so that $\bar \pi$ is well-defined from $(s,\c)$ at time $h$ onward.
    It is well known (see Theorem 4.3.3 from ~\cite{MDP-book}) that $\mV^{\bar \pi}_t(s',\c') = \mV^*_t(s',\c')$ for all $t \geq h$ and all $(s',\c') \in \mS_t$. We unpack $\bar \pi$ into a partial policy $\pi$ for $M$ defined by,
    \begin{align*}
        \pi_t(\tau_t) = \begin{cases}
            \bar \pi_t(s_t, \c_t) & \text{ if } (s_t, \c_t) \in \mS_t \\
            a_1 & \text{ o.w. }
        \end{cases}
    \end{align*}
    Here, $a_1$ is an arbitrary element of $\A$. To make $\pi$ a full policy, we can define $\pi_t$ arbitrarily for any $t < h$.

    We first show that for all $t \geq h$, $\P^{\pi}_{\tau_h}[(s_t,\c_t) \in \mS_t] = 1$. We proceed by induction on $t$. For the base case, we consider $t = h$. By assumption, $\tau_h \in W_h(s,\c)$ so $(s_h,\c_h) = (s,\c) \in \mS_h$.  Thus, $\P^{\pi}_{\tau_h}[(s_h,\c_h) \in \mS_h] = \P^{\pi}_{\tau_h}[(s,\c) \in \mS_h] = 1$.

    For the inductive step, we consider $t \geq h$. By the induction hypothesis, we know that $\P^{\pi}_{\tau_h}[(s_t,\c_t) \in \mS_t] = 1$. By the law of total probability, it is then clear that,
    \begin{align*}
        &\P^{\pi}_{\tau_h}[(s_{t+1},\c_{t+1}) \in \mS_{t+1}] = \P^{\pi}_{\tau_h}[(s_{t+1},\c_{t+1}) \in \mS_{t+1} \mid (s_t,\c_{t}) \in \mS_t] \\
        &= \sum_{(s',\c') \in \mS_t} \P^{\pi}_{\tau_h}[(s_{t+1},\c_{t+1}) \in \mS_{t+1} \mid s_t = s', \c_t = \c']\P^{\pi}_{\tau_h}[s_t = s', \c_t = \c'].
    \end{align*}
    Above we have used the fact that for any $(s',\c') \in \mS_t$, the event that $\{s_t = s', \c_t = \c', (s_t,\c_t) \in \mS_t\} = \{s_t = s', \c_t = \c'\}$.
    
    For any $\tau_t$ with $(s_t,\c_t) = (s',\c') \in \mS_t$, by definition, $\pi_t(\tau_t) = \bar \pi_t(s',\c') = a' \in \{a \in \A \mid \c' + C_t(s',a) \leq B\}$. By the inductive definition of $\mS_{t+1}$, we then see that $(s'',\c' + c') \in \mS_{t+1}$ for any $s'' \sim P_t(s',a')$ and $c' \sim C_t(s',a')$. Hence, $\P^{\pi}_{\tau_h}[(s_{t+1},\c_{t+1}) \in \mS_{t+1} \mid s_t = s', \c_t = \c'] = 1$. We then see that,
    \begin{align*}
        \P^{\pi}_{\tau_h}[(s_{t+1},\c_{t+1}) \in \mS_{t+1}] &= \sum_{(s',\c') \in \mS_t} \P^{\pi}_{\tau_h}[s_t = s', \c_t = \c'] \\
        &= \P^{\pi}_{\tau_h}[(s_t,\c_t) \in \mS_t] \\
        & = 1
    \end{align*}
    This completes the induction. 
    
    Since under $\tau_h$, $\pi$ induces only histories whose state and cumulative cost are in $\mS$, we see that $\pi$'s behavior is identical to $\bar \pi$'s almost surely. In particular, it is easy to verify by induction using \eqref{equ: PE} and \cref{lem: traj-update} that,
    \begin{align*}
        V_h^{\pi}(\tau_h) &= \E_{\tau_h}^{\pi}\brac{r_h(s,a) + E_{\tau_{h+1}}\brac{V_{h+1}^{\pi}(\tau_{h+1})}} \\
        &= \E_{(s,\c)}^{\bar \pi} \brac{r_h((s,\c),a) + E_{(s',\c')}\brac{\mV_{h+1}^{\bar \pi}(s',\c')}} \\
        &= V_h^{\bar \pi}(s,\c) \\
        &= \mV_h^*(s,\c).
    \end{align*}
    By \cref{obs: bounded}, we see if $(s_{k+1},\c_{k+1}) \in \mS_{k+1}$ then $\c_{k+1} \leq B$. It is then clear by monotonicity of probability that $\P^{\pi}_{\tau_h}[c_{k+1} \leq B] \geq \P^{\pi}_{\tau_h}[(s_{k+1},c_{k+1}) \leq \mS_{k+1}] = 1$ for all $k \in [H]$. Hence, $\pi \in \Pi_M(\tau_h)$ and so $\mV_h^*(s,\c) = V_h^{\pi}(\tau_h) \leq V^*(\tau_h)$.
    
\end{proof}

\begin{observation}[Cost-Augmented Probability Measures]\label{obs: augmented-measure}
    We note we can treat $\bar \pi$ defined in the proof of \cref{claim: V*<VB} as a history dependent policy in the same way we defined $\pi$. Doing this induces a probability measure over histories. We observe that measure is identical as the one induced by the true history-dependent policy $\pi$. Thus, we can directly use augmented policies with $M$ and reason about their values and costs with respect to $M$.
\end{observation}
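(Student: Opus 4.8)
The plan is to show that rewriting the augmented policy $\bar\pi$ from the proof of \cref{claim: V*<VB} (the Markovian-on-$\mM$ policy defined by solutions of \eqref{equ: be}) as a history-dependent policy on $M$ leaves the induced law over trajectories unchanged, so that every property already established there for the history-dependent $\pi$ transfers to $\bar\pi$. Concretely, I would define the lift $\tilde\pi$ by $\tilde\pi_h(\tau_h) := \bar\pi_h(s_h,\c_h)$, where for $\tau_h = (s_1,a_1,c_1,\ldots,s_h)$ one reads off the last state $s_h$ and the cumulative cost $\c_h = \sum_{t<h}c_t$. This is well-defined precisely because $\bar\pi_h$ depends on its input only through the augmented state $(s,\c)$, which is a deterministic function of $\tau_h$; this is the exact sense in which $\bar\pi$ can be ``treated as a history-dependent policy in the same way we defined $\pi$.''

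Next I would prove, by induction on the time index, that for any safe history $\tau_h \in W_h(s,\c)$ the conditional laws $\P^{\tilde\pi}_{\tau_h}$ and $\P^{\pi}_{\tau_h}$ coincide on $\H$ (taking $h=1$ and $\tau_1 = s_0$ then gives $\P^{\tilde\pi}_M = \P^{\pi}_M$). The inductive step is immediate from the Markov property \eqref{equ: mp}: along a common prefix $\tau_t$ with $(s_t,\c_t)\in\mS_t$ both policies pick the same action $\bar\pi_t(s_t,\c_t)$, the one-step extension kernel $C_t(\cdot\mid s_t,a_t)\,P_t(\cdot\mid s_t,a_t)$ is therefore identical, and by \cref{def: states} the successor $(s_{t+1},\c_{t+1})$ again lies in $\mS_{t+1}$ — which is exactly the invariant $\P^{\pi}_{\tau_h}[(s_t,\c_t)\in\mS_t]=1$ already checked in the proof of \cref{claim: V*<VB}, now applied symmetrically to $\tilde\pi$. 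With the two measures identified, the rest is bookkeeping: the augmented interaction protocol \cref{alg: protocol} driven by $\bar\pi$ generates $(s_1,a_1,c_1,\ldots)$ through the very same recursion (choose $a_h = \bar\pi_h(s_h,\c_h)$, sample $(c_h,s_{h+1})$ from $C_h\times P_h$, set $\c_{h+1} = \c_h + c_h$), so its output law is $\P^{\tilde\pi}_M = \P^{\pi}_M$, which under the natural identification of $\mM$-histories with safe cost-augmented $M$-histories is the law of $\bar\pi$ in $\mM$. Hence $V^{\tilde\pi} = V^{\pi}$ and $C^{\tilde\pi} = C^{\pi}$, so the value and anytime cost obtained by executing $\bar\pi$ on $M$ equal those in $\mM$; and since by \cref{obs: bounded} every reachable augmented state has $\c\le B$, feasibility of $\bar\pi$ in $\mM$ is the same almost-sure event as anytime-feasibility in $M$.

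The step I expect to be the main obstacle is the null-set bookkeeping. The policy $\pi$ in \cref{claim: V*<VB} is defined arbitrarily for times before $h$ and on histories that escape $\mS$, whereas $\tilde\pi$ is specified everywhere, so the two policies are not literally equal; one instead shows the laws agree after conditioning on the safe history $\tau_h$ by verifying that every point of disagreement — an augmented state leaving $\mS_t$, or $\pi$ deviating from $\bar\pi$ — lies in a common null set under both policies. Once that is isolated, transferring the value (an expectation) and the anytime constraint (an almost-sure event) between $\bar\pi$ and $\tilde\pi$ requires no further estimates, which is why the statement is packaged as an observation rather than a theorem.
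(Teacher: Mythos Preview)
Your proposal is correct and matches the paper's intent: the observation in the paper is stated without a separate proof, simply noting that since $\pi$ was defined in the proof of \cref{claim: V*<VB} to agree with $\bar\pi$ on every history with $(s_t,\c_t)\in\mS_t$, and since that set was shown to carry full measure, the two induced laws coincide. Your inductive argument via \eqref{equ: mp} and the invariant $\P^{\pi}_{\tau_h}[(s_t,\c_t)\in\mS_t]=1$ is exactly the mechanism the paper relies on, just made explicit; the only minor wrinkle is that $\bar\pi_h$ is a priori defined only on $\mS_h$, so your lift $\tilde\pi_h(\tau_h)=\bar\pi_h(s_h,\c_h)$ needs an arbitrary extension off $\mS_h$ (as $\pi$ itself has), after which your null-set discussion handles the discrepancy.
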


\subsection{Proof of \texorpdfstring{\cref{thm: optimality}}{subsec: theorem 3}}

\begin{proof}
    From \cref{lem: value}, we see that $V^* = V^*(s_0) = \mV_1^*(s_0,0) = \mV^*$. Furthermore, in \cref{claim: V*<VB}, we saw the policy defined by the optimality equations \eqref{equ: be} achieves the optimal value, $\mV^{\bar \pi} = \mV^* = V^*$. Furthermore, $\bar \pi$ behaves identically to a feasible history-dependent policy $\pi$ almost surely. In particular, as argued in \cref{claim: V*<VB} both policies only induce cumulative costs appearing in $\mS_h$ at any time $h$ and so by \cref{obs: bounded} we know that both policies' cumulative costs are at most $B$ anytime.
\end{proof}

\subsection{Proof of \texorpdfstring{\cref{cor: reduction}}{subsec: corollary 2}}

The theorem follows immediately from \cref{thm: optimality} and the argument from the main text.

\subsection{Proof of \texorpdfstring{\cref{prop: complexity}}{subsec: proposition 3}}

\begin{proof}
By definition of $\mS$, it is clear that $|\mS_h| \leq |\S|\D$, and by inspection, we see that $|\mA| \leq |\A|$. The agent can construct $\mS$ using our forward induction procedure, \cref{alg: cost-states}, in $O(\sum_{h = 1}^{H-1} |\mS_{h}|SAn) = O(HS^2 An \D)$ time. Also, the agent can compute $\mP$ by forward induction in the same amount of time so long as the agent only records the non-zero transitions. Thus, $\mM$ can be computed in $O(HS^2 An \D)$ time. 

\begin{enumerate}
    \item By directly using backward induction on $\mM$ ~\cite{MDP-book}, we see that an optimal policy can be computed in $O(H|\mS|^2|\mA|) = O(HS^2A\D^2)$ time. However, this analysis can be refined: for any sub-problem of the backward induction $(h,(s,\c))$ and any action $a$, there are at most $nS$ state-cost pairs that can be reached in the next period (namely, those of the form $(s',\c + c)$) rather than $S\D$. Thus, backward induction runs in $O(HS^2 An \D)$ time, and so planning in total can be performed in $O(HS^2 An \D)$ time. 
    \item Similarly, PAC (probably-approximately correct) learning can be done with sample complexity $\tilde O(H^3|\mS||\mA|\log(\frac{1}{\delta})/\gamma^2) = \tilde O(H^3SA\D \log(\frac{1}{\delta})/\gamma^2)$~\cite{MDP-PAC}, where $\delta$ is the confidence and $\gamma$ is the accuracy. Note, we are translating the guarantee to the non-stationary state set setting which is why the $|\mS|$ term becomes $S\D$ instead of $HS\D$.
\end{enumerate}

\end{proof}

\subsection{Proof of \texorpdfstring{\cref{lem: precision}}{subsec: lemma 4}}

\begin{proof}
    Suppose each cost is represented with $k$ bits of precision. For simplicity, we assume that $k$ includes a possible sign bit. By ignoring insignificant digits, we can write each number in the form $2^{-i} b_{-i} + \ldots 2^{-1} b_{-1} + 2^0 b_{0} + \ldots + 2^{k - i-1}b_{k - i}$ for some $i$. By dividing by $2^{-i}$, each number is of the form $2^0 b_{0} + \ldots + 2^{k-1} b_{k-1}$. Notice, the largest possible number that can be represented in this form is $\sum_{i = 0}^{k-1} 2^i = 2^{k} - 1$. Since at each time $h$, we potentially add the maximum cost, the largest cumulative cost ever achieved is at most $2^k H - 1$. Since that is the largest cost achievable, no more than $2^k H$ can ever be achieved through all $H$ times. Similarly, no cost can be achieved smaller than $-2^k H$. 
    
    Thus each cumulative cost is in the range $[-2^kH+1,2^kH-1]$ and so at most $2^{k+1}H$ cumulative costs can ever be created. By multiplying back the $2^{-i}$ term, we see at most $2^{k+1} H$ costs are ever generated by numbers with $k$ bits of precision. Since this argument holds for each constraint independently, the total number of cumulative cost vectors that could ever be achieved is $(H2^{k+1})^d$.
    Hence, $\mathcal{D} \leq H^d2^{(k+1)d}$.
\end{proof}

\subsection{Proof of \texorpdfstring{\cref{thm: fpt}}{subsec: theorem 4}}
\cref{thm: fpt} follows immediately from \cref{prop: complexity}, \cref{lem: precision}, and the definition of fixed-parameter tractability~\cite{FPT}.

\section{Proofs for \texorpdfstring{\cref{sec: approximation}}{sec: approximation}}

For any $h$ we let $\hat{c}_{h+1} := f(\tau_{h+1})$ be a random variable of the history defined inductively by $\hat{c}_1 = 0$ and $\hat{c}_{k+1} = f_k(\hat{c}_k,c_k)$ for all $k \leq h$. Notice that since $f$ is a deterministic function, $\hat{c}_{k}$ can be computed from $\tau_{h+1}$ for all $k \in [h+1]$. Then, a probability distribution over $\hat{c}$ is induced by the one over histories. As such, approximate-cost augmented policies can also be viewed as history-dependent policies for $M$ as in \cref{obs: augmented-measure}.

\subsection{Proof of \texorpdfstring{\cref{lem: approx}}{subsec: lem-approx}}

\begin{proof}
    We proceed by induction on $h$. Fix any feasible policy $\pi$ for $\hat{M}$. For the base case, we consider $h = 1$.
    By definition, $\c_1 = 0 = \hat{c}_1$ and so the claim trivially holds. For the inductive step, we consider any $h \geq 1$. By the induciton hypothesis, we know that $\hat{c}_{h} \leq \c_h \leq \hat{c}_{h} + (h-1)\ell$ or $\hat{c}_{h}, \c_{h} \leq B - (H-h+1)\cmax$ almost surely. 
    We split the proof into cases.

    \begin{enumerate}
        \item First, suppose that $\hat{c}_{h} \leq \c_h \leq \hat{c}_{h} + (h-1)\ell$.
        \begin{enumerate}
            \item Furthermore, suppose that $\hat{c}_{h} + c_{h} \geq B - (H-h)\cmax$ so that $\hat{c}_{h+1} = f_h(\hat{c}_{h}, c_h) = \hat{c}_{h} + \floor{\frac{c_h}{\ell}}\ell$. By definition of the floor function, $\floor{\frac{c_h}{\ell}} \leq \frac{c_h}{\ell}$. Thus,
            \[\hat{c}_{h+1} \leq \hat{c}_{h} + \frac{c_h}{\ell}\ell = \hat{c}_{h} + c_h \leq \c_{h} + c_h = \c_{h+1},\]
            holds almost surely, where we used the inductive hypothesis with our case assumption to infer that $\hat{c}_{h} \leq \c_{h}$ almost surely in the second inequality.
            Also, by definition of the floor function, $\frac{c_h}{\ell} \leq \floor{\frac{c_h}{\ell}} + 1$. We then see that,
            \begin{align*}
                \c_{h+1} = \c_h + \frac{c_h}{\ell}\ell 
                \leq \c_h + (\floor{\frac{c_h}{\ell}}+1)\ell 
                \leq \hat{c}_h + (h-1)\ell + \floor{\frac{c_h}{\ell}}\ell + \ell 
                = \hat{c}_{h+1} + h\ell.
            \end{align*}
            The first inequality used the induction hypothesis with our case assumption and the second used the property of floors. 
            \item Now, suppose that $\hat{c}_{h} + c_{h} < B - (H-h)\cmax$ so that $\hat{c}_{h+1} = f_h(\hat{c}_{h}, c_h) = \floor{\frac{B - (H-h)\cmax}{\ell}}\ell$. 
            \begin{enumerate}
                \item If $\c_{h+1} \leq \hat{c}_{h+1}$, then by definition we have,
                \[\c_{h+1}, \hat{c}_{h+1} \leq \floor{\frac{B - (H-h)\cmax}{\ell}}\ell \leq B - (H-h)\cmax,\]
                and we are done.
                \item Otherwise, if $\hat{c}_{h+1} \leq \c_{h+1}$, then we see that,
                \begin{align*}
                    \c_{h+1} &= \c_h + c_h \leq \hat{c}_{h} + (h-1)\ell + c_h < B-(H-h)\cmax + (h-1)\ell \\
                    &\leq (\floor{\frac{B-(H-h)\cmax}{\ell}}+1)\ell + (h-1)\ell = \hat{c}_{h+1} + h\ell,
                \end{align*}
                where the first inequality used the induction hypothesis with our case assumption.
            \end{enumerate}
             
        \end{enumerate}
        \item Lastly, suppose that $\c_{h},\hat{c}_{h} \leq B - (H-h+1)\cmax$. Then, it is clear that,
        \[\c_{h+1} = \c_h + c_h \leq \c_h + \cmax  \leq B - (H-h+1)\cmax + \cmax = B - (H-h)\cmax.\]
        Similarly, we see that either,
        \[\hat{c}_{h+1} = \hat{c}_h + \floor{\frac{c_h}{\ell}}\ell \leq \hat{c}_h + c_h \leq \hat{c}_h + \cmax  \leq B - (H-h+1)\cmax + \cmax = B - (H-h)\cmax,\]
        or,
        \[\hat{c}_{h+1} = \floor{\frac{B - (H-h)\cmax}{\ell}}\ell \leq B - (H-h)\cmax.\]
    \end{enumerate}
    This completes the induction.

    We next show the second claim. By definition, any approximate cost is an integer multiple of $\ell$ where the integer is in the range $\{\floor{\frac{B - H\cmax}{\ell}}, \ldots, \floor{\frac{B}{\ell}}\}$. The number of elements in this set is exactly,
    \[\floor{\frac{B}{\ell}} - \floor{\frac{B - H\cmax}{\ell}} + 1 \leq \frac{B}{\ell} - (\frac{B - H\cmax}{\ell} - 1) + 1 = \frac{H\cmax}{\ell} + 2.\]
    When there are $d$ constraints, this analysis applies to each separately since we do vector operations component-wise. Thus, the total number of approximate costs is $(\frac{H\norm{\cmax}_{\infty}}{\ell} + 2)^d$.

\end{proof}

\subsection{Proof of \texorpdfstring{\cref{thm: approx}}{subsec: thm-approx}}

\begin{proof}
    We first note that the same argument used to prove \cref{thm: optimality} immediately extends to the approximate MDP and implies that any feasible $\pi$ for $\hM$ satisfies $\P_{\hM}^{\pi}[\forall t \in [H], \; \hat{c}_{t+1} \leq B] = 1$. Also, we note since $\hat{c}$ is a deterministic function of the history, we can view any policy $\pi$ for $\hat{M}$ as a cost-history-dependent policy for $M$ similar to in the proof of \cref{obs: augmented-measure}. Thus, \cref{lem: approx} implies that for any feasible $\pi$ for $\hM$ and any $h \in [H+1]$, $\P^{\pi}_M[\hat{c}_h \leq \c_h \leq \hat{c}_{h} + (h-1)\ell \lor \c_h, \hat{c}_h \leq B-(H-h+1)\cmax] = 1$. Since $\hat{c}_{h+1} \leq B$ a.s., we immediately see that $\P^{\pi}_M[\c_{h+1} \leq B + h\ell] = 1$ for all $h \in [H]$. 
    
    Furthermore, we observe that any feasible policy $\pi$ for the anytime constraint is also feasible for $\hM$ since $\P^{\pi}_M[\hat{c}_h \leq \c_h \lor \c_h, \hat{c}_h \leq B-(H-h+1)\cmax] = 1$ implies that $\P^{\pi}_M[\hat{c}_{h+1} \leq B] = 1$ since $\c_{h+1} \leq B$ almost surely. Since the rewards of $\hM$ only depends on the state and action, we see $\pi$ achieves the same value in both MDPs. Thus, $\hV^* \geq V^*$.
    
    Lastly, \cref{lem: approx} implies that $\D_{\hat{M}} \leq (\frac{H\norm{\cmax}_{\infty}}{\ell} + 2)^d$ which with \cref{prop: complexity} gives the storage complexity.
\end{proof}

\subsection{Proof of \texorpdfstring{\cref{cor: additive}}{subsec: cor-additive}}

\begin{proof}
    The proof is immediate from \cref{thm: approx} and \cref{prop: complexity}. 
\end{proof}

\subsection{Proof of \texorpdfstring{\cref{cor: relative}}{subsec: cor-relative}}

\begin{proof}
    The proof is immediate from \cref{thm: approx} and \cref{prop: complexity}. 
\end{proof}

\subsection{Proof of \texorpdfstring{\cref{cor: improved}}{subsec: cor-improved}}
First observe that if $B < 0$ then the instance is trivially infeasible which can be determined in linear time. Otherwise, the immediate cost (in addition to the cumulative cost) induced by any feasible $\pi$ is always in the range $[0,B]$. Specifically, the larger costs $xB$ can never be accrued since there are no negative costs now to offset them, so we can effectively assume that $\cmax \leq B$. Since the floor of any non-negative number is non-negative, the integer multiples of $\ell$ needed are now in the range $[0,\floor{\cmax/\ell}] \subseteq [0, \floor{B/\ell}]$. Thus, we have $O(\frac{H\cmax}{\epsilon})$ approximate costs for the additive approximation since $\ell = \epsilon/ H$, and $O(\frac{H}{\epsilon})$ approximate costs for the relative approximation since $\ell = \epsilon B / H$. The complexities are reduced accordingly.

\subsection{Proof of \texorpdfstring{\cref{prop: relax-hard}}{subsec: prop-relax-hard}}

\begin{proof}
    Note that computing an optimal-value, $\epsilon$-additive solution for the knapsack problem is equivalent to just solving the knapsack problem when $\epsilon < 1$. In particular, since each weight is integer-valued, if the sum of the weights is at most $B + \epsilon < B + 1$ then it is also at most $B$. By scaling the weights and budget by $\ceil{2\epsilon}$, the same argument implies hardness for $\epsilon \geq 1$.  
    
    For relative approximations, we present a reduction from Partition to the problem of finding an optimal-value, $\epsilon$-relative feasible solution to the knapsack problem with negative weights. Again, we focus on the $\epsilon < 1$ regime but note the proof extends using scaling. Let $X = \set{x_1, \ldots, x_n}$ be the set of positive integers input to the partition problem and $Sum(X) := \sum_{i = 1}^n x_i$. Observe that $Sum(X)/2$ must be an integer else the instance is trivially a ``No'' instance.  Define $v_i = 2x_i$ and $w_i = 2x_i$ for each $i \in [n]$. Also, we define a special item $0$ with $v_0 = -Sum(X)$ and $w_0 = -Sum(X)$. We define the budget to be $B = 1$. We claim that there exists some $Y \subseteq [n]$ with $Sum(Y) = Sum(\overline{Y}) = Sum(X)/2$ if and only if there exists an $I \subseteq [n] \cup \set{0}$ with $\sum_{i \in I} v_i \geq 0$ and $\sum_{i \in I} w_i \leq B(1+\epsilon)$.

    \begin{itemize}
        \item $[\implies]$ if $Y$ is a solution to Partition, then we define $I = Y \cup {0}$. We observe that, 
        \[\sum_{i \in I} v_i = -Sum(X) + 2 \sum_{i \in S} x_i = -Sum(X) + 2Sum(X)/2 = 0.\] 
        Similarly, $\sum_{i \in I} w_i = 0 < 1 \leq B(1+\epsilon)$. Thus, $I$ satisfies the conditions.
        
        \item $[\impliedby]$ if $I$ is an $\epsilon$-relative feasible solution to Knapsack, observe that $I$ must contain $0$. In particular, each $w_i = 2x_i \geq 2 > (1+\epsilon) = B(1+\epsilon)$ and so for approximate feasibility to hold it must be the case that a negative weight was included. Let $Y = I \setminus {0}$. 
        Then, we see that,
        \[0 \leq \sum_{i \in I} v_i = -Sum(X) + 2\sum_{i \in Y} x_i = -Sum(X) + 2 Sum(Y).\]
        Thus, $Sum(Y) \geq Sum(X)/2$. Similarly, 
        \[1+\epsilon \geq \sum_{i \in I} w_i = -Sum(X) + 2Sum(Y).\] 
        Thus, $Sum(Y) \leq Sum(X)/2 + (1+\epsilon)/2 < Sum(X)/2 + 1$ since $\epsilon < 1$. Because $Sum(Y)$ is a sum of positive integers, and $Sum(X)/2$ is a positive integer, it must be the case that $Sum(Y) \leq Sum(X)/2$. Pairing this with $Sum(Y) \geq Sum(X)/2$ implies equality holds. Thus, $Y$ is a solution to Partition.
    \end{itemize}

    Since the transformation can be made in linear time, we see that the reduction is polynomial time. Since Partition is NP-hard, we then see finding an optimal-value, $\epsilon$-relative feasible solution to the knapsack problem with negative weights is NP-hard.

\end{proof}

\subsection{Proof of \texorpdfstring{\cref{prop: feasible-scheme}}{subsec: prop-feasible}}

\begin{proof}
    The proof is immediate from \cref{cor: additive} and \cref{cor: relative}.
\end{proof}

\section{Extensions}

\subsection{Generalized Anytime Constraints}

Consider the constraints of the form,

\begin{equation}
    \P^{\pi}_M\brac{\forall k \in [H], \; \sum_{t = 1}^k c_t \in [L_k, U_k]} = 1.
\end{equation}
All of our exact methods carry over to this more general setting by simply tweaking the safe exploration set. In particular, we define,
\begin{multline}
    \mS_{h+1} := \Big\{(s',\c') \in \S \times \Real^d \mid \exists (s,\c) \in \mS_h, \exists a \in \A, 
    \exists c \in C_h(s,a), \\
    \c' = \c + c, \; \Pr_{c \sim C_h(s,a)}[c + \c \in [L_{h}, U_{h}]] = 1, \; P_h(s' \mid s,a) > 0 \Big\}.
\end{multline}
Similarly, each quantity in the analysis changes to consider the different intervals per time step. The proof is otherwise identical. 

For the approximate methods, the additive results imply the costs are at most $U_k + \epsilon$ anytime, and since the costs are independent of the new restrictions, the complexity guarantees are the same. We could similarly give an approximation concerning the lower bound by using pessimistic costs. For the relative approximation, we now define $\ell$ with respect to $|U^{min}| = \min_k |U_k|$ and all costs should lie below $|U^{min}|$. The guarantees then translate over with $|U^{min}|$ taking the role of $|B|$.

\subsection{General Almost-Sure Constraints}

General almost-sure constraints require that,
\begin{equation}
    \P^{\pi}_M\brac{\sum_{t = 1}^H c_t \leq B} = 1.
\end{equation}

This can be easily captured by the generalized anytime constraints by making $L_k$ smaller than $k \cmin$ and $U_k$ larger than $k\cmax$ for any $k < H$ so that the process is effectively unconstrained until the last time step where $U_H = B$. 

Observe then when applying our relative approximation, $U^{min} = U_H = B$ and so the guarantees translate similarly as to the original anytime constraints. In particular, although $\cmax \leq |B|$, the cumulative cost could be up to $H|B|$. This means the multiples of $\ell$ that need to be considered are in the set $\set{\floor{-xH^2/\epsilon}, \ldots, \floor{xH^2/\epsilon}}^d$. This changes the exact constants considered, but the asymptotic guarantees are the same. We do note however that the improvements in \cref{cor: improved} do not extend to the general almost-sure case. 

On the other hand, the additive approximation results now have $\norm{2H\cmax - B}_{\infty}$ terms instead of $\norm{\cmax}_{\infty}$ terms. The asymptotic bounds then have $\norm{\cmax - B/H}_{\infty}$ terms.

\subsection{Infinite Discounting}

If the rewards and costs are discounted, it is easy to see that \cref{thm: optimality} still holds but the resultant MDP has infinite states and discontinuous reward function. However, our approximate methods work well. By simply using the horizon $H$ to be the earliest time in which $\sum_{t = H+1}^{\infty} \gamma^t c_t \leq \epsilon$ almost surely, we can use our reduction to get an $\epsilon$-additive feasible policy. Pairing this with our approximation algorithms gives a computationally efficient solution. To get a desired accuracy the effective horizon $H$ may need to be increased before using the approximation algorithms.

\section{Additional Experiments}

For all of our experiments, we generate cMDPs built from the same structure as those in the proof of \cref{thm: np-hardness}, but with deterministic costs and rewards chosen uniformly at random over $[0,1]$. Formally, we consider cMDPs with a single state ($\S = \{0\})$, two actions ($\A = \{0,1\}$), and non-stationary cost and reward functions. For all $h$, $r_h(s,0) = c_h(s,0) = 0$, and $r_h(s,1), c_h(s,1) \in U[0,1]$. Despite these being worst-case hard instances, capturing the knapsack problem, we conjecture that under well-concentrated costs, such as uniform costs, that our relative approximate method \cref{cor: relative} and no-violation methods \cref{prop: feasible-scheme} would perform well. We test this and the performance of all of our methods on these hard cMDPs. Since the complexity blowup for cMDPs is in the time horizon, we focus on testing our methods on hard instances with varying time horizons.

\subsection{Approximation Scheme Performance}

We first test how well our approximation scheme \cref{alg: approximation} does compared to our exact, FPT-reduction, method \cref{cor: reduction}. We see even for the fairly large choice of precision parameter, $\epsilon = .1$, that our approximation performs very well. We tested both methods on $N = 5$ hard cMDPs with fixed time horizons in the range of $1, \ldots, H_{\max} = 16$. Note, already for a single instance with $H = 15$, the exact method takes over a minute to run, which is unsurprising as the complexity grows like $2^{15}$. However, we will show our approximate methods easily handle much larger instances in \cref{subsec: scale}.

Here, we consider two extreme choices of budget, $Bs = [.1,10]$. The first is chosen so that few costs are induced before hitting the budget, so we expect even the exact method to be fast. The second is chosen to be close to the maximum time horizon, which is an upper bound on the value ever achievable, so that many cumulative costs could be induced. We expect the exact method would run quite slowly on a large budget.

In all of our experiments, we present the worst-case for our methods. In particular, out of the $N$ trials for the cMPDs we run on a fixed $H$, we report the run-time that was worst for our approximation method, and the approximate value that was closest to the exact method (the approximate value, in general, could be much larger than the exact method due to the optimistic costs \cref{cor: relative}). 

\begin{figure}
    \centering
    \includegraphics[scale=.55]{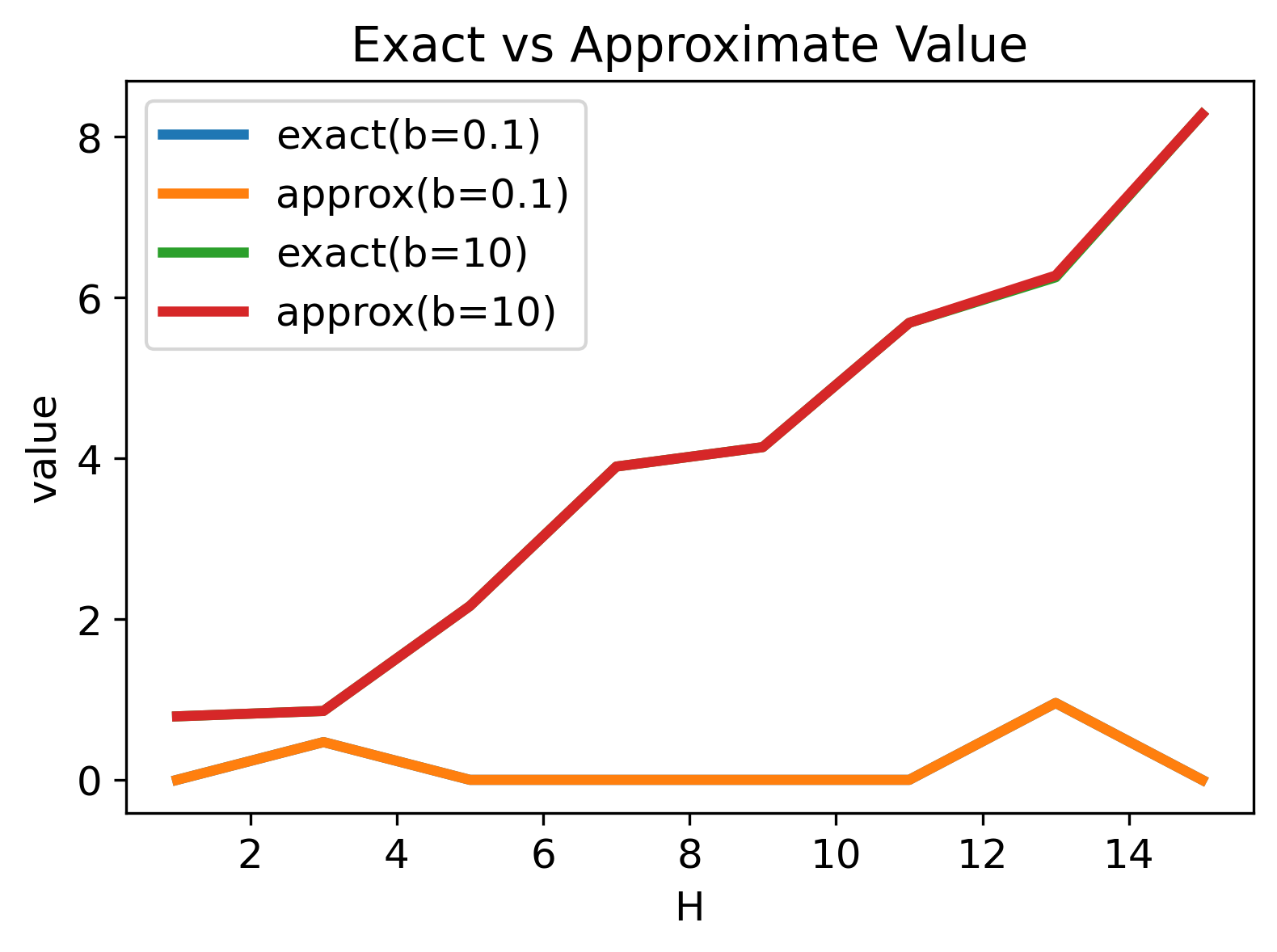}
    \includegraphics[scale=.55]{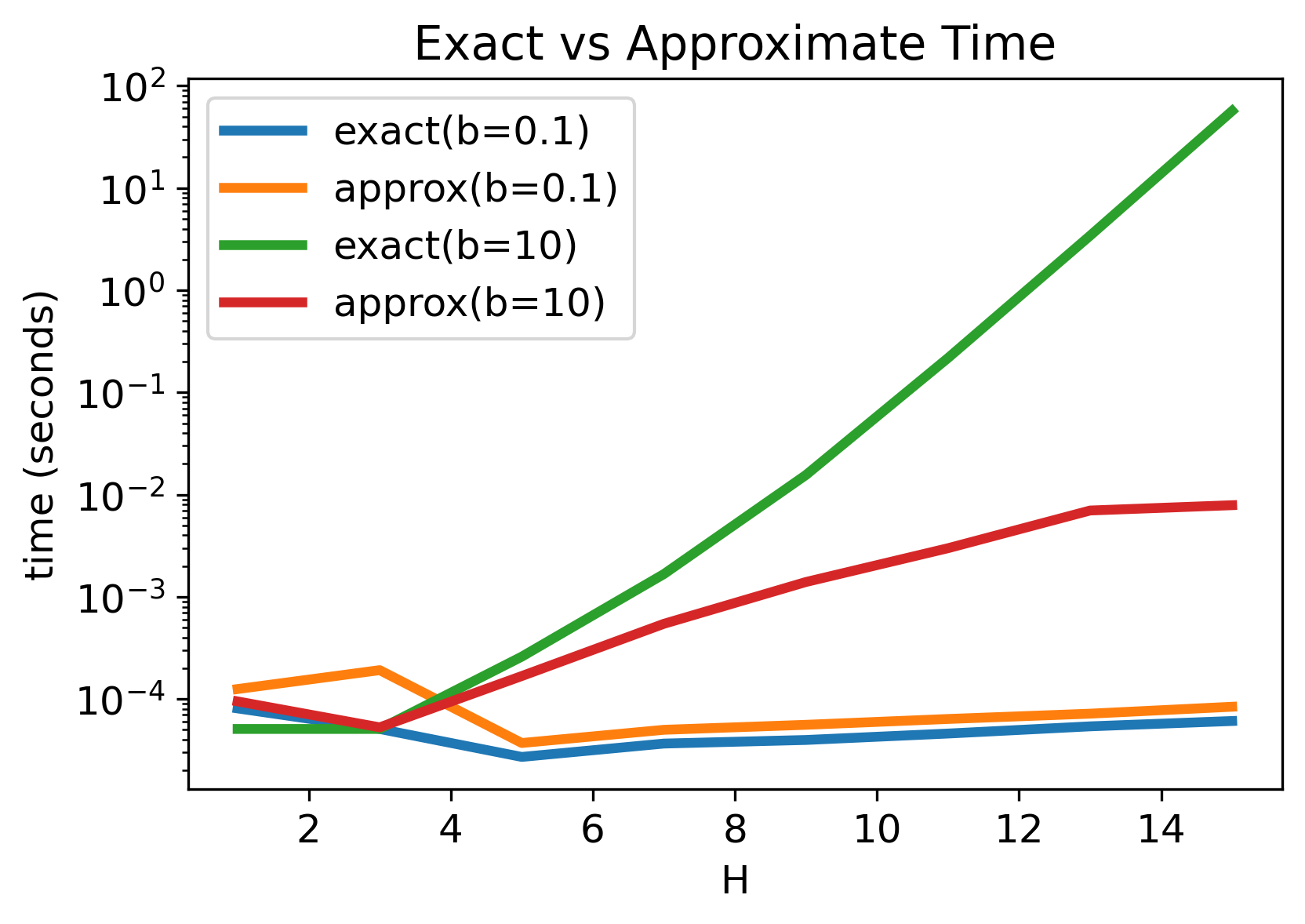}
    \caption{Our Approximation Scheme vs Our Exact Method on hard cMDP instances.}
    \label{fig: approx}
\end{figure}

The results are given in \cref{fig: approx}. We see that for small budgets, the approximate method does not have much benefit and can even be slightly faster. However, for larger budgets, the approximation scheme is leagues faster, completing in one-hundredth of a second instead of over a minute and a half like the exact method. We also see in these instances the approximation method exactly matched the exact method value, which is an indicator that the approximate policies might not go over budget. We see further evidence of this in the no-violation results we see next.

\subsection{No-Violation Scheme Performance}

Next, we use the same setup (except $H_{\max} = 14$ now), but compare the no-violation scheme to the exact method. Here, we report the values for trials when the no-violation scheme was the furthest below the exact method (recall that, unlike the approximation scheme, the no-violation scheme is generally suboptimal). The results are summarized in \cref{fig: no_violation}. We see in fact that the no-violation scheme gets optimal value in nearly every trial we ran! Also, the scheme is much faster than the exact method as expected.

\begin{figure}
    \centering
    \includegraphics[scale = .55]{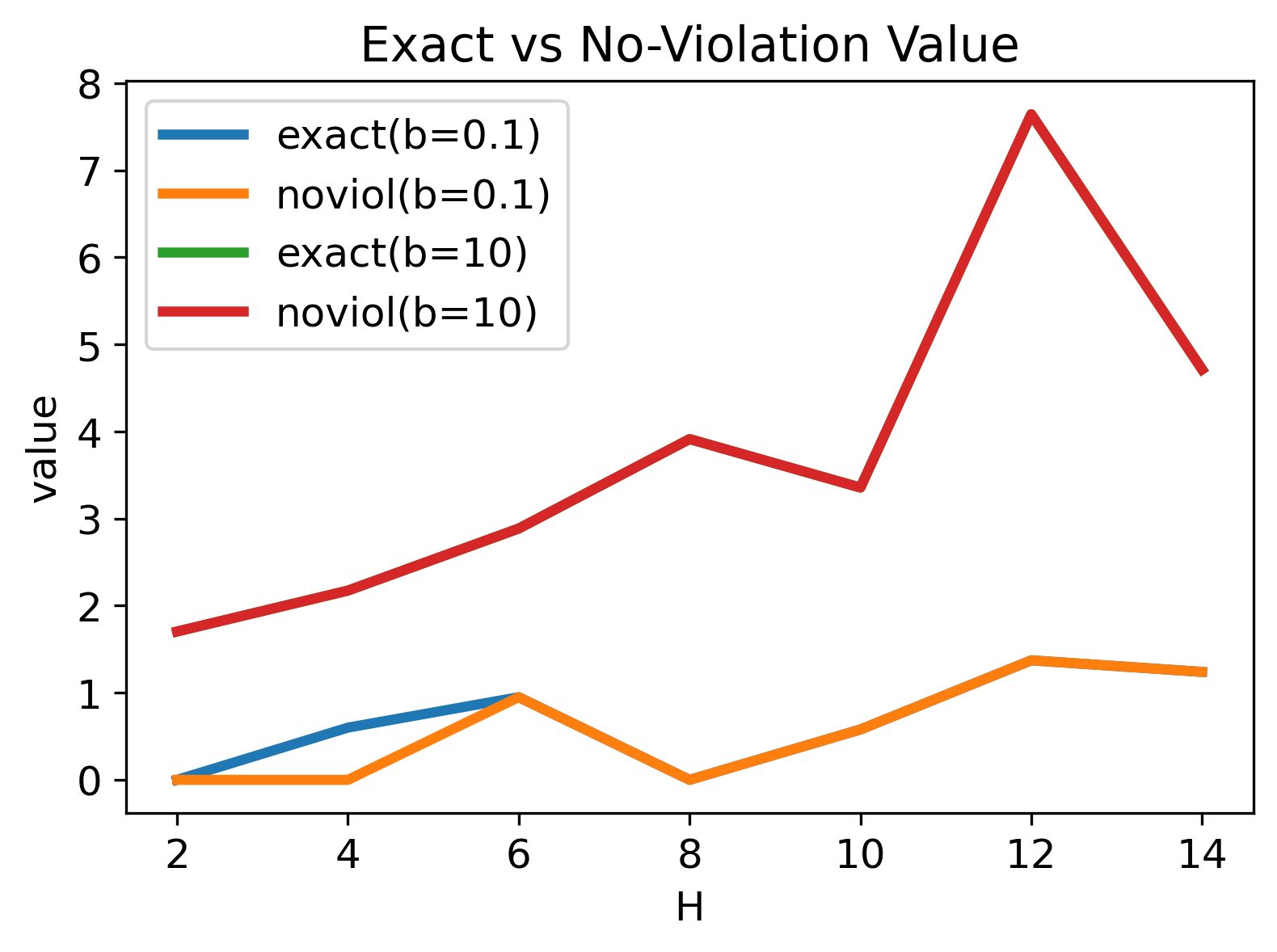}
    \includegraphics[scale=.55]{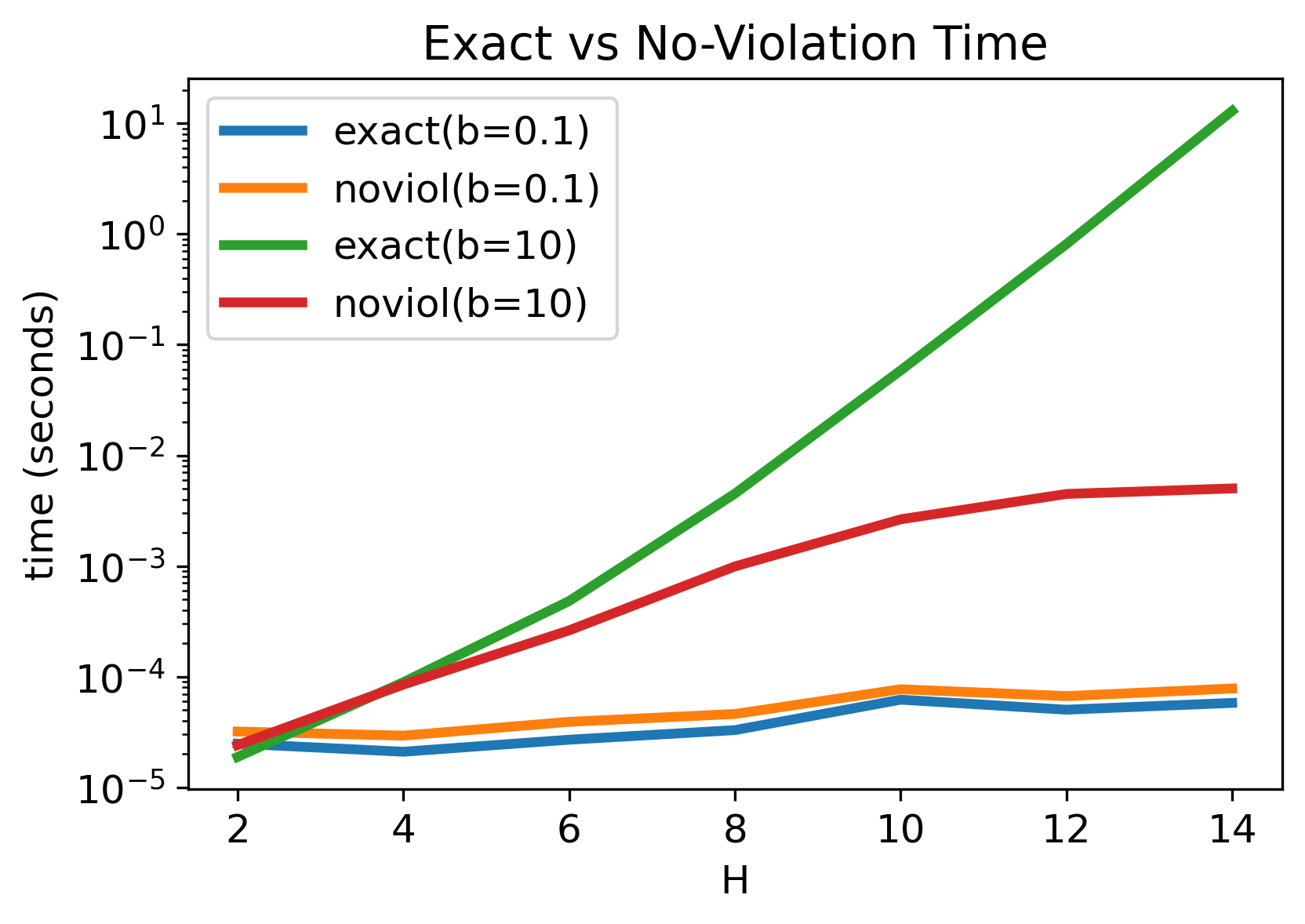}
    \caption{Our No-Violation Scheme vs Our Exact Method on hard cMDP instances.}
    \label{fig: no_violation}
\end{figure}

\subsection{Approximation vs No-Violation Scheme}

Since the exact method is too slow to handle instances with $H = 20$, we need another way to test the efficacy of the approximation and no-violation schemes. In fact, we can just compare them to each other. Since the approximation scheme is always at least the optimal value, and the no-violation scheme is at most the optimal value when they coincide both schemes are optimal. Furthermore, since the no-violation scheme does not violate the budget, if they coincide it gives evidence that the approximate method is producing policies that are within budget anytime.  

Now, we let $H_{\max} = 50$ and perform $N = 10$ trials for each $H \in \{10, 20, 30, 40, 50\}$. The results are given in \cref{fig: noviol_approx}. We see the no-violation scheme consistently was close to the approximate value, which indicates it is achieving nearly optimal value. Furthermore, we see both methods scale as expected: roughly quadratically in $H$.

\begin{figure}
    \centering
    \includegraphics[scale=.55]{images/noviol_approx_value_small.png}
    \includegraphics[scale=.55]{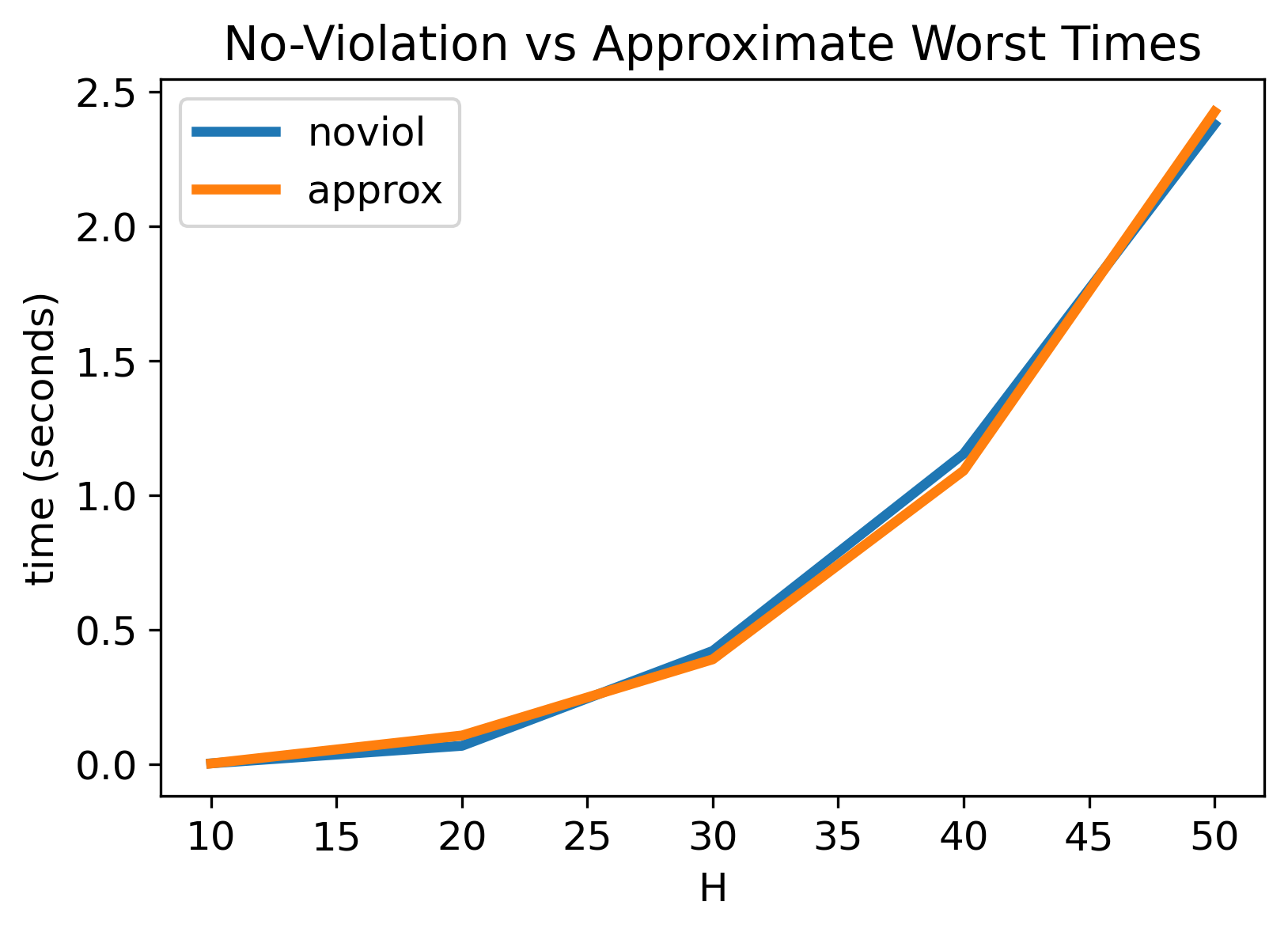}
    \caption{Our No-Violation Scheme vs Our Approximation Scheme on hard cMDP instances.}
    \label{fig: noviol_approx}
\end{figure}

\subsection{Approximation Scale Test}\label{subsec: scale}

Lastly, we wanted to see how large an instance the approximation scheme (and equivalently the no-violation scheme), could handle. We tested the scheme with $H_{\max} = 100$. Specifically, we did $N = 10$ trials for each $H \in \set{10, 20, \ldots, 100}$. This time, we tried both $\epsilon = .1$ and the even larger $\epsilon = 1$. The results are summarized in \cref{fig: approx_scale}. We see for $\epsilon = .01$ the quadratic-like growth is still present, and yet the scheme handled a huge horizon of $100$ in a mere $2$ seconds. For $\epsilon = 1$ the results are even more striking with a maximum run time of $.0008$ seconds and the solutions are guaranteed to violate the budget by no more than a multiple of $2$! The method handles an instance $10$ times larger than the exact method could and yet runs in a fraction of the time.

\begin{figure}
    \centering
    \includegraphics[scale=.55]{images/approx_scale_small.png}
    \includegraphics[scale=.55]{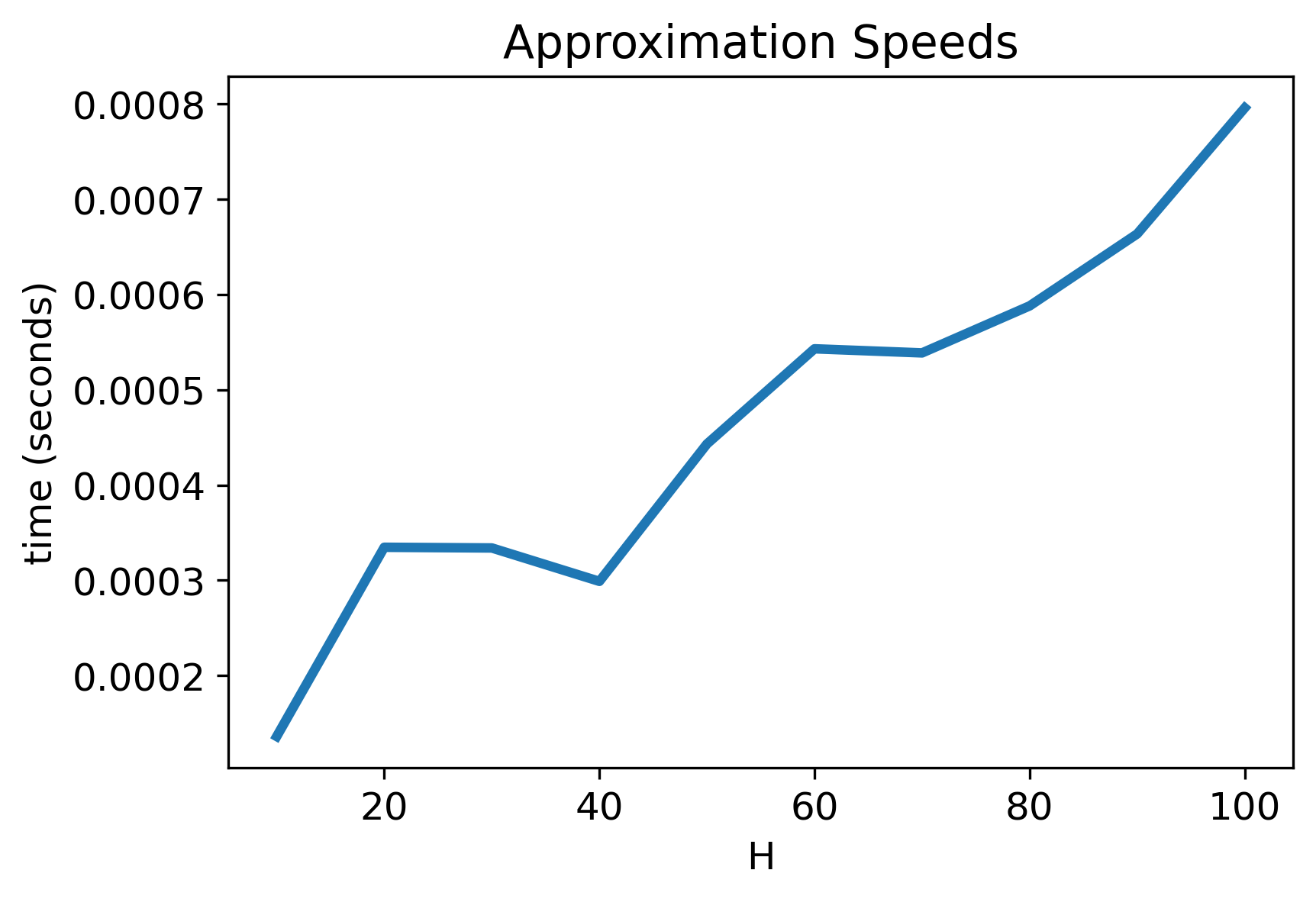}
    \caption{Our Approximation Scheme on large, hard cMDP instances.}
    \label{fig: approx_scale}
\end{figure}

\subsection{Code Details}

We conducted our experiments using standard python3 libraries. We provide our code in a jupyter notebook with an associated database file so that our experiments can be easily reproduced. The notebook already reads in the database by default so no file management is needed. Simply ensure the notebook is in the same directory as the database folder. \footnote{The code can be found at \url{https://github.com/jermcmahan/anytime-constraints.git}}

\end{document}